\newtheorem{claim}{Claim}
\newcommand{\R}{\mathcal{R}}
\renewcommand{\P}{\mathcal{P}}
\newcommand{\B}{\mathcal{B}}
\newcommand{\M}{\mathcal{M}}
\newcommand{\D}{\mathcal{D}}
\newcommand{\ER}{\widehat{\R}}
\newcommand{\x}{\bm{x}}
\newcommand{\bxi}{\boldsymbol{\xi}}
\newcommand{\z}{\bm{z}}
\newcommand{\uu}{\bm{u}}
\newcommand{\w}{\bm{w}}
\newcommand{\nmlz}{\frac{1}{n - c_n}\sum_{t=c_n}^{n - 1}}
\newcommand{\nmlzi}[1]{\frac{#1}{n - 1 - t}\sum_{i=t}^{n - 2}}
\newcommand{\nmlzt}{\frac{1}{n - 1 - t}\sum_{i=t}^{n - 2}}
\renewcommand{\Pr}{\mathbb{P}}
\newcommand{\nmlzb}{\frac{1}{|\B_{t-1}|}\sum_{j\in\B_{t-1}}}
\newcommand{\A}{\mathbf{A}}
\newcommand{\bb}{\mathbf{B}}
\newcommand{\X}{\mathbf{X}}
\def\bbbe{{\rm I\!E}} % for expectation
\def\bbbr{{\rm I\!R}} % for real field
\def\bbbn{{\rm I\!N}} % for integer field
\def\bbbi{{\rm I\!I}} % for the indicator function
\title[Online Learning with Pairwise Loss Functions]{Online Learning with Pairwise Loss Functions}
  \author{\Name{Yuyang Wang} \Email{ywang02@cs.tufts.edu}
  \\
  \Name{Roni Khardon} \Email{roni@cs.tufts.edu}\\
  \addr
Department of Computer Science, Tufts University, Medford, MA 02155, USA
%Tufts University
  \AND
  \Name{Dmitry Pechyony} \Email{dpechyon@akamai.com}
  \\
  \Name{Rosie Jones} \Email{rejones@akamai.com}\\
  \addr Akamai Technologies, 8 Cambridge Center, Cambridge, MA 02142, USA
  }
\begin{document}

\maketitle

\begin{abstract}
Efficient online learning with pairwise loss functions is a crucial component in building large-scale learning system that maximizes the area under the Receiver Operator Characteristic (ROC) curve. In this paper we investigate the generalization performance of online learning algorithms with pairwise loss functions. We show that the existing proof techniques for generalization bounds of online algorithms with a univariate loss can not be directly applied to pairwise losses. In this paper, we derive the first result providing data-dependent bounds for the average risk of the sequence of hypotheses generated by an \emph{arbitrary} online learner in terms of an easily computable statistic, and show how to extract a low risk
hypothesis from the sequence. We demonstrate the generality of our results by applying it to two important problems in machine learning. First, we analyze two online algorithms for bipartite ranking; one being a natural extension of the perceptron algorithm and the other using online convex optimization. Secondly, we provide an analysis for the risk bound for an online algorithm for supervised metric learning.
\end{abstract}
\begin{keywords}
Generalization bounds, Pairwise loss functions, Online learning, Loss bounds, Bipartite Ranking, Metric Learning
\end{keywords}

\section{Introduction}
\label{sec:intro}
The standard framework in learning theory considers learning from examples
$Z^n= \{\left(\x_t, y_t\right)$ $\in\mathcal{X}\times\mathcal{Y}\},\
t=1,2,\cdots, n$, (independently) drawn at random from an unknown probability distribution $\D$
on $\mathcal{Z}:=\mathcal{X}\times\mathcal{Y}$ (e.g. $\mathcal{X}=\bbbr^d$ and $\mathcal{Y} = \bbbr$).
Typically a \emph{univariate} loss function $\ell(h, (\x, y))$ is adopted to measure the performance of the hypothesis $h:\mathcal{X}\rightarrow\mathcal{Y}$, for example, $\ell(h, (\x,y)) = (h(\x) - y)^2$ for regression or $\ell(h,\x,y) = \bbbi_{[h(\x)\neq y]}$ for classification.  The aim of learning is to find a hypothesis that generalizes well, i.e. has small expected risk $\bbbe_{(\x, y)}\ell(h, (\x, y))$.

In this paper we study learning in the context of \emph{pairwise loss functions},
that
depend on pairs of examples
and can be expressed as
$\ell\left(h, (\x, y), (\uu, v)\right)$ where the hypothesis is applied to pairs of examples, i.e. $h:\mathcal{X}\times\mathcal{X}\rightarrow \bbbr$.
Pairwise loss functions capture ranking
problems that are important for a wide range of applications. For example, in the \emph{supervised ranking} problem one wishes to learn a ranking function that predicts the correct ordering of objects. The hypothesis $h$ is called a ranking rule such that $h(\x,\uu) > 0$ if $\x$ is ranked higher than $\uu$ and vice versa. The \emph{misranking loss}~\citep{clemencon2008ranking,peel2010empirical} is a pairwise loss such that
\[
    \ell_{\text{rank}}\left(h, (\x, y), (\uu, v)\right) = \bbbi_{\left[(y-v)(h(\x,\uu)) < 0\right]},
\]
where $\bbbi$ is the indicator function and
the loss is 1 when the examples are ranked in the wrong order. The goal of learning is to find a hypothesis $h$ that minimizes the expected misranking risk $\R(h)$,
\begin{equation}
\mathcal{R}(h): = \bbbe_{(\x, y)}\bbbe_{(\uu,
v)}\left[\ell\left(h, (\x, y), (\uu, v)\right)\right].
\end{equation}
In many interesting cases, finding a ranking rule amounts to learning a good scoring function $s:\mathcal{X}\rightarrow\bbbr$ such that $h(\x, \uu)  = s(\x) - s(\uu)$. Therefore, higher ranked examples will have higher scores. Another application comes from \emph{distance metric learning}, where the learner wishes to learn a distance metric such that examples that share the same label should be close while ones from different labels are far away from each others.

%This problem, especially the bipartite ranking problem where $\mathcal{Y} =
%\{+1, -1\}$, has been extensively studied over the past decade in the {\em
%  batch setting}, i.e.,\ where the entire sequence $Z^n$ is presented to the
%learner in advance of learning. On the empirical end, many algorithms have
%been proposed and successfully applied, for example, AUC
%Support Vector Machine (SVM)~\citep{brefeld2005auc}, Ranking SVM~\citep{joachims2002optimizing},
%and RankBoost~\citep{freund2003efficient}. Several theoretical studies also investigated the batch setting, deriving risk bounds for specific algorithms
%\citep{freund2003efficient,rudin2005margin}, and uniform convergence bounds for empirical estimates of the risk
%\citep{agarwal2005generalization,agarwal2005stability,agarwal2009generalization} and related $U$-statistics
%\citep{clemencon2008ranking, peel2010empirical,rejchel2012ranking}. We provide detailed discussion of related work is in Section~\ref{sec:discuss}.

This problem, especially the bipartite ranking problem where $\mathcal{Y} =
\{+1, -1\}$, has been extensively studied over the past decade in the {\em
  batch setting}, i.e.,\ where the entire sequence $Z^n$ is presented to the
learner in advance of learning. \cite{freund2003efficient}
gave generalization bounds for the RankBoost algorithm, based on
the uniform convergence results for classification.
\cite{agarwal2005generalization} derived uniform convergence bounds for the
bipartite ranking loss, using a quantity called rank-shatter coefficient, which
generalizes ideas from the classification setting.
Agarwal et.\ al.\
%\cite{agarwal2005stability}
provided bounds for the bipartite ranking
problem \citep{agarwal2005stability} and the general ranking problem
\citep{agarwal2009generalization} using ideas from
algorithmic stability.
\cite{rudin2005margin} approached a closely related problem where the goal is
to correctly rank only the top of the ranked list, and derived generalization
bounds based on $L_\infty$ covering number.
Recently, several authors investigated
oracle inequalities for pairwise-based quantities via the formalization of $U$-statistics
\citep{clemencon2008ranking, rejchel2012ranking} using empirical processes. \cite{peel2010empirical} gave an empirical Bernstein inequality for higher order $U$-statistics. Another thread comes from the perspective of reducing ranking problems to the more familiar classification problems~\citep{kotllowski2011bipartite, ertekin2011equivalence, agarwal2012surrogate}.

In this paper we investigate the generalization performance
of {\em online learning} algorithms,
where examples are presented in sequence, in the context of pairwise loss
functions. Specifically, on each round $t$, an online learner receives an instance $\x_t$ and predicts a label $\hat{y}_t$ according to the current hypothesis $h_{t-1}$. The true label $y_t$ is revealed and $h_{t-1}$ is updated. The goal of the online learner is to minimize the expected risk w.r.t. a pairwise loss function $\ell$.

Over the past two decades, online learning algorithms have been studied extensively,
and theoretical
results provide relative loss bounds, where the online learner competes
against the best hypothesis (with hindsight) on the same
sequence. Conversions of online learning algorithms and their performance
guarantees to provide generalization performance in the batch setting
have also been investigated (e.g.,\
\citep{KearnsLiPiVa87b,littlestone1990mistake, FreundSc99,Zhang05}).
\cite{cesa2004generalization} provided a general online-to-batch conversion result that holds under
some mild assumptions on the loss function. Given a univariate loss function $\ell$, a sample $Z^n$ and an ensemble of hypotheses $\{h_1, h_2, \cdots, h_n\}$ generated by an online learner $\mathcal{A}$, the following cumulative loss of $\mathcal{A}$ is defined as
\[
M_n = M_n(Z^n) = \frac{1}{n}\sum_{t=1}^n\ell(h_{t-1}, Z_t).
\]
\cite{cesa2008improved} showed (as a refined version of the bound in \citep{cesa2004generalization}) that one can extract a hypothesis $\widehat{h}$ from the ensemble such that
\[
\Pr\left(\R(\widehat{h}) \geqslant M_n + \mathcal{O}\left(\frac{\ln^2n}{n} + \sqrt{M_n\frac{\ln n}{n}}\right)\right) \leqslant \delta.
\]
Therefore, if one can develop an online learning algorithm with bounded cumulative loss for every possible realization of $Z^n$, then its generalization performance is guaranteed. A sharper bound exists when the loss function is strongly convex \citep{kakade2009generalization}. The key step of these derivations is to realize that $V_{t-1} = \R(h_{t-1}) - \ell(h_{t-1}, Z_t)$ is a martingale difference sequence. Thus one can use martingale concentration inequalities (Azuma's inequality or Friedman Inequality) to bound $\sum V_t$. Unfortunately,
this property no longer holds for pairwise loss functions.

%some extensions are reported in
%\citep{cesa2008improved,kakade2009generalization}.
%The main tool in this work is the use of martingale concentration
%inequalities (the Hoeffding-Azuma Inequality and the Friedman Inequality) to derive bounds on the average risk of the sequence of
%hypotheses generated by the learning algorithm in terms of a data-dependent statistic. Essentially, this relies on the fact that the differences $V_t$ of the empirical loss $\ell(h_{t-1}, \z_{t})$ and the true risk $\R(h_{t-1}):=\bbbe_{\z}[\ell(h_{t-1}, \z)]$ form a martingale sequence.

Of course, as mentioned for example in the work of
~\citet[Sec.~4.2]{peel2010empirical}, one can slightly adapt an existing online learning classification algorithm
(e.g.,\ perceptron), feeding it with data sequence $\breve{\z}_{t} := \left(\z_{2t-1}, \z_{2t}\right)$
and modifying the update function accordingly.
In this case, previous analysis
\citep{cesa2008improved} does apply. However, this does not make full use of the
examples in the training sequence. In addition, empirical results show that
this naive algorithm, which corresponds to the
algorithm for
online maximization of the area under the ROC curve (AUC) with a
buffer size of one in~\citep{zhao2011online}, is inferior to algorithms that
retain some form of the history of the sequence. Alternatively, it is tempting to consider feeding the online algorithm with pairs $\breve{\z}^t_i = (\z_i, \z_t), i < t$ on each round. However, in this case, existing results would again fail because $\breve{\z}^t_i$ are not i.i.d. Hence, a natural question
is whether we can prove data dependent generalization bounds
based on the online pairwise loss.

This paper provides a positive answer to this question for
a large family of pairwise loss functions. On each round $t$, we measure $M_t$, the average loss of $h_{t-1}$ on examples $(\z_i, \z_t), i < t$. Let $\M^n$ denote
the average loss, averaging $M_t$ over $t \geqslant (1-c)n$ on a
training sequence of length $n$ where $c$ is a small constant. The main
result of this paper, provides a model selection mechanism to select one of
the hypotheses of an \emph{arbitrary} online learner, and states that the probability that the risk of the chosen
hypothesis $\widehat{h}$ satisfies,
\[
 \R(\widehat{h}) \geqslant \M^n + \epsilon
\]
is at most
\[
2\left[\mathcal{N}\left(\mathcal{H},
\frac{\epsilon}{32Lip(\phi)}\right) + 1\right]
\exp\left\{-\frac{(cn-1)\epsilon^2}{256} + 2\ln n\right\}.
\]
Here $\mathcal{N}(\mathcal{H}, \eta)$ is the $L_\infty$ covering number for
the hypothesis class
$\mathcal{H}$ and $Lip(\phi)$ is determined by the Lipschitz constant
of the loss function (definitions and details are provided in the following
sections). Thus, our results provide an online-to-batch conversion for pairwise loss functions. We demonstrate our results with the following two applications:
\begin{enumerate}
  \item We analyze two online learning algorithms for the bipartite ranking problem. We first provide an analysis of a natural generalization
of the perceptron algorithm to work with pairwise loss functions,
that provides loss bounds in both the separable case and the inseparable case.
As a
byproduct, we also derive a new simple proof of the best $L_1$  based mistake
bound for the perceptron algorithm in
the inseparable case.
Combining with our main results we provide the first \emph{online} algorithm with
corresponding risk bound for bipartite ranking. Secondly, we analyze another algorithm using the online convex optimization techniques, with similar risk bounds.
  \item Several online metric
learning algorithms have been proposed with corresponding regret analyzes, but the generalization performance of these algorithms has been left open, possibly because no tools existed to provide online-to-batch conversion with pairwise loss functions. We provide risk bounds for an online algorithm for distance metric learning combining with the results for online convex optimization with matrix argument.
\end{enumerate}

The rest of this paper is organized as follows. Section~\ref{sec:main} defines the problem and states our main technical theorem and Section~\ref{sec:proof} provides a sketch of the proof. We provide model selection results and risk analysis for convex and general loss functions in Section~\ref{sec:mselect}. In Section~\ref{sec:app}, we describe our online algorithm for bipartite ranking and analyze it. The results in sections~\ref{sec:main}-\ref{sec:app} are given for a model and algorithms with an ``infinite buffer'', that is, where the update of the online learner at step $t$ depends on the entire history of the sequence, $\z_1,\cdots,\z_{t-1}$. Section~\ref{sec:finite} shows that the results and algorithms can be adopted to a buffer of limited size. Interestingly, to guarantee convergence our results require that the buffer size grows logarithmically with the sequence size. Section~\ref{sec:metric} is devoted to the analysis of online metric learning. Finally, we conclude the paper and discuss possible future directions in Section~\ref{sec:conclu}.

\section{Main Technical Result}
\label{sec:main}
Given a sample $Z^n = \{\z_1, \cdots, \z_n\}$ where $\z_i$=$(\x_i, y_i)$ and a sequence of hypotheses $h_0, h_1, \cdots$, $h_n$ generated by an online learning algorithm, we define the sample statistic $M^n$ as
\begin{equation}
\label{eqn:mn}
\begin{split}
\M^n(Z^n) = \nmlz M_t(Z^t),\qquad M_t(Z^t) =
\frac{1}{t-1}\sum_{i=1}^{t-1} \ell\left(h_{t-1}, \z_t, \z_i\right),
\end{split}
\end{equation}
where $c_n = \lceil c\cdot n \rceil$ and $c\in(0,1)$ is a small
positive constant. $M_t(Z^t)$ measures the performance of the hypothesis $h_{t-1}$ on the next example $\z_t$ when paired with all previous examples.
Note that instead of considering all the $n$ generated hypotheses, we only consider the average of the hypotheses $h_{c_n - 1},\cdots, h_{n-2}$ where the statistic $M_t$ is reliable and the last two hypotheses $h_{n-1}, h_{n}$ are discarded for technical reasons. In the following, to simplify the notation, $\M^n$ denotes $\M^n(Z^n)$ and $M_t$ denotes $M_t(Z^t)$. We define $f\wedge g \equiv \min(f,g)$ and $f\vee g \equiv \max(f,g)$.

As in~\citep{cesa2004generalization}, our goal is to
bound the average risk of the sequence of hypotheses in terms of $\M^n$,
which can be obtained using the following theorem.
\begin{theorem}
\label{thm:concetra}
Assume the hypothesis space $\left(\mathcal{H},
\|\cdot\|_\infty\right)$ is compact. Let $h_0, h_1, \cdots,
h_{n}\in\mathcal{H}$ be the ensemble of hypotheses generated by an
arbitrary online algorithm working with a pairwise loss function
$\ell$ such that,
  \[\ell(h, \z_1, \z_2) = \phi(y_1 - y_2, h(\x_1,\x_2)), \]
  where $\phi:\mathcal{Y}\times\mathcal{Y}\rightarrow [0, 1]$ is a Lipschitz function w.r.t. the second variable with a finite Lipschitz constant
  $\text{Lip}(\phi)$. Then, $\forall c >0, \forall \epsilon > 0$, we have for sufficiently large $n$
\begin{equation}
\Pr\left\{\nmlz\mathcal{R}(h_{t-1}) \geqslant \M^n +
\epsilon\right\} \leqslant \left[2\mathcal{N}\left(\mathcal{H},
\frac{\epsilon}{16\text{Lip}(\phi)}\right) + 1\right]
\exp\left\{-\frac{(cn-1)\epsilon^2}{64} + \ln n\right\}.
\end{equation}

\end{theorem}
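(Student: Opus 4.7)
The obstacle flagged by the authors---that $V_{t-1} = \R(h_{t-1}) - M_t$ fails to be a martingale difference with respect to $\mathcal{F}_{t-1} = \sigma(\z_1,\ldots,\z_{t-1})$---stems from the fact that $M_t$ depends on both the fresh example $\z_t$ \emph{and} the past examples $\z_1,\ldots,\z_{t-1}$ through the pairwise loss. My plan is to introduce an intermediate quantity that splits $\R(h_{t-1}) - M_t$ into a piece that \emph{is} a martingale difference and a piece that reduces to a classical empirical-process deviation. Define
\[
g^*_h(\z) \;:=\; \bbbe_{\z'}\bigl[\ell(h,\z',\z)\bigr], \qquad \R^*_t(h) \;:=\; \frac{1}{t-1}\sum_{i=1}^{t-1} g^*_h(\z_i),
\]
so that a direct calculation gives $\bbbe[M_t \mid \mathcal{F}_{t-1}] = \R^*_t(h_{t-1})$ while $\bbbe[\R^*_t(h)] = \R(h)$. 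Writing
\[
\R(h_{t-1}) - M_t \;=\; \underbrace{\bigl[\R(h_{t-1}) - \R^*_t(h_{t-1})\bigr]}_{A_t} \;+\; \underbrace{\bigl[\R^*_t(h_{t-1}) - M_t\bigr]}_{B_t},
\]
and averaging over $t=c_n,\ldots,n-1$ reduces the task to bounding each of $\frac{1}{n-c_n}\sum_t A_t$ and $\frac{1}{n-c_n}\sum_t B_t$ by roughly $\epsilon/2$.

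The $B_t$ piece is immediate: by construction $B_t$ is a martingale difference with respect to $\mathcal{F}_t$, satisfies $|B_t|\le 2$ because $\phi\in[0,1]$, and so Azuma--Hoeffding yields a tail of order $\exp\bigl\{-(n-c_n)\epsilon^2/C_1\bigr\}$ for an absolute constant $C_1$. Since $n - c_n \ge cn$ this is compatible with the shape of the claimed bound; this piece is what supplies the ``$+1$'' inside the prefactor $2[\mathcal{N}(\cdot)+1]$.

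For the $A_t$ piece I would use covering. For \emph{fixed} $h$, $\R^*_t(h) - \R(h) = \tfrac{1}{t-1}\sum_i\bigl(g^*_h(\z_i) - \R(h)\bigr)$ is a mean of $t-1 \ge c_n - 1$ i.i.d.\ $[0,1]$-valued random variables, so Hoeffding's inequality gives $2\exp\bigl\{-(cn-1)\epsilon^2/C_2\bigr\}$. The data-dependence of $h_{t-1}$ is handled by taking an $L_\infty$-cover of $\mathcal{H}$ at radius $\epsilon/(16\,\text{Lip}(\phi))$; the Lipschitz property of $\phi$ transfers via Jensen's inequality to both $\R(h)$ and $g^*_h(\z)$ with the same Lipschitz constant $\text{Lip}(\phi)$, so replacing $h_{t-1}$ by its nearest cover element distorts each $A_t$ by at most $\epsilon/8$. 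A union bound over the $\mathcal{N}(\mathcal{H},\epsilon/(16\,\text{Lip}(\phi)))$ cover elements and over the at most $n$ values of $t$ then supplies both the prefactor $2\mathcal{N}(\cdot)$ (the ``2'' from two-sided Hoeffding) and the additive $\ln n$ inside the exponential.

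Combining the two pieces by a final union bound and allocating the $\epsilon$ budget carefully (a fraction to the cover radius, a fraction to the Hoeffding tail for $A_t$, and a fraction to the Azuma tail for $B_t$) yields the claimed inequality. I expect the main obstacle to be conceptual rather than computational: identifying the \emph{correct} conditional mean $\R^*_t(h)$ that restores martingale structure despite the pairwise nature of $\ell$ is the key insight, and without it the argument of \cite{cesa2004generalization} has no analogue. Once the decomposition is in place, both pieces are handled by tools that are standard individually (Azuma--Hoeffding for the martingale and covering-based uniform Hoeffding for the empirical process), and the remaining work is bookkeeping the constants $16$, $32$, $64$ through the $\epsilon$-budget split and the Lipschitz transfer.
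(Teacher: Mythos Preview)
Your decomposition is exactly the paper's: your $\R^*_t(h_{t-1})$ is precisely what the paper writes as $\bbbe_t[M_t]$, so your $A_t$ and $B_t$ coincide term-for-term with the two pieces in the paper's equation~(\ref{eqn:probsplit}). Your treatment of $B_t$ via Azuma--Hoeffding is also identical to the paper's Step~1.

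Where you differ is in handling $A_t$. The paper takes a detour through symmetrization: it introduces a ghost sample $\Xi^n$, defines $\widetilde{M}_t$ by replacing $\z_1,\ldots,\z_{t-1}$ with $\bxi_1,\ldots,\bxi_{t-1}$, and proves (Claim~\ref{clm:sym}, via Chebyshev and a bounded-differences variance bound) that the $A_t$-average tail is at most twice the tail of the symmetrized quantity $L_t(h)=\bbbe_t[\widetilde{M}_t]-\bbbe_t[M_t]$; only then does it apply McDiarmid plus covering to $L_t$. Your route is more direct: you observe that for any \emph{fixed} $h$ the summands $g^*_h(\z_i)$ are genuinely i.i.d.\ in $[0,1]$ with mean $\R(h)$, so ordinary one-sided Hoeffding applies immediately to $\R(h)-\R^*_t(h)$, and covering plus a union bound over $t$ finishes. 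This bypasses the ghost sample entirely and in fact tightens the exponent (Hoeffding on $t-1$ variables beats McDiarmid on $2(t-1)$ variables, and you avoid the factor of~$2$ from symmetrization). The paper's symmetrization is the classical VC-style move, but here it is unnecessary because $\R^*_t(h)$ is already an i.i.d.\ empirical mean once $h$ is frozen; your argument exploits this directly. Both routes are correct and land on the same union bound over $\mathcal{N}\cdot n$ events, which is where the $\ln n$ and the covering-number prefactor come from.
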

Here the $L_\infty$ covering number $\mathcal{N}(\cal{H}, \eta)$ is defined to be the minimal $\ell$ in $\bbbn$ such that there exist $\ell$ disks in $\mathcal{H}$
with radius $\eta$ that cover $\cal{H}$. We make the following remarks.
\begin{remark}
  Let $\bbbe_t$ denote $\bbbe_{\z_t}[\cdot|\z_1, \cdots,\z_{t-1}]$. It can be seen that $\bbbe_t[M_t] - \R(h_{t-1})$ is no longer a martingale difference sequence. Therefore, martingale concentration inequalities that are usually used in online-to-batch conversion do not directly yield the desired bound.
\end{remark}
\begin{remark}
   We need the assumption that the hypothesis space $\mathcal{H}$ is compact so that its covering number $\mathcal{N}(\cal{H}, \eta)$ is
   finite.
As an example, suppose $\mathcal{X}\subset \bbbr^d$ and the hypothesis space is the class of linear functions that lie within a ball
   $B_R(\bbbr^d) = \{\w\in\bbbr^d:\underset{\x\in\mathcal{X}}{\sup}\langle \w, \x\rangle \leqslant R\}. $
   It can be shown \citep[see][chap.~5]{cucker2007learning} that the covering number is one if $\eta > R$ and otherwise
   \begin{equation}
   \label{eqn:cover}
     \mathcal{N}(B_R, \eta) \leqslant \left(\frac{2R}{\eta} + 1\right)^d.
   \end{equation}
\end{remark}
\begin{remark}
\label{rem:loss}
  We say that $f(s, t)$ is Lipschitz w.r.t~the second argument if $\forall s, |f(s,t_1) - f(s, t_2)|\leqslant Lip(f)\|t_1 - t_2\|$. This form of the pairwise loss function is not restrictive and is widely used. For example, in the supervised ranking problem, we can take the hinge loss as
  \[\ell_\text{hinge}(h, \z_1, \z_2) = \phi(y_1 - y_2, h(\x_1) - h(\x_2)) = \left[1 - (h(\x_1) - h(\x_2))(y_1 - y_2)\right]_+,\]
  which can be thought as a surrogate function for $\ell_\text{rank}$.  Since $\phi$ is not bounded, we define the bounded hinge loss using $\widetilde{\phi}(s, t) = \min([1-st]_+, 1)\in[0,1]$ if $s\neq 0$ and 0 otherwise. We next show that $\widetilde{\phi}$ is Lipschitz. This is trivial for $y=0$. For $y\neq 0$, when the first argument is bounded by a constant $C$, $\widetilde{\phi}(y, \cdot)$ satisfies
  \[ \big|\widetilde{\phi}(y, x_1) - \widetilde{\phi}(y, x_2)\big|  \leqslant \big|\left[1 - yx_1\right]_+ -
  \left[1 - yx_2\right]_+\big| \leqslant \|yx_1 - yx_2\| \leqslant
  C\|x_1 - x_2\|.
  \]
   Alternatively, one can take the square loss, i.e.
  $
   \ell(h, \z_1, \z_2)$ = $\left[1 - (h(\x_1) - h(\x_2))(y_1 - y_2)\right]^2.
  $
  If its support is bounded then $\ell$ is Lipschitz.
\end{remark}

\section{Proof of the Main Technical Result}
\label{sec:proof}
The proof is inspired by the work of~\citep{cucker2002mathematical, agarwal2005generalization,rudin2009p}. The proof makes use of the Hoeffding-Azuma inequality, McDiarmid's inequality, symmetrization techniques and covering numbers of compact spaces.

\begin{proof}[Proof of Theorem~\ref{thm:concetra}]
By the definition of $\M^n$ (see (\ref{eqn:mn})), we wish to bound
\begin{equation}
\Pr_{Z^n\sim\D^n}\left(\nmlz\mathcal{R}(h_{t-1}) - \nmlz M_t \geqslant \epsilon\right),
\end{equation}
which can be rewritten as
\begin{align}
\label{eqn:probsplit}
&\Pr\left(\nmlz\bigg[\mathcal{R}(h_{t-1}) - \bbbe_t[M_t]\bigg] +  \nmlz\bigg[\bbbe_t[M_t] - M_t\bigg] \geqslant \epsilon\right) \nonumber\\
&\quad \leqslant
\Pr\left(\nmlz\bigg[\mathcal{R}(h_{t-1}) -
\bbbe_t[M_t]\bigg] \geqslant
\frac{\epsilon}{2}\right) + \Pr\left(
\nmlz\bigg[\bbbe_t[M_t] - M_t\bigg]
\geqslant \frac{\epsilon}{2} \right).
\end{align}
Thus, we can bound the two terms separately. The proof consists of four parts, as follows.

\subsection*{Step 1: Bounding the Martingale difference}
First consider the second term in (\ref{eqn:probsplit}). We have that $V_t =
(\bbbe_t[M_t] - M_t)/(n-c_n)$ is a martingale
difference sequence, i.e. $\bbbe_t[V_t] = 0$. Since the loss function is bounded in $[0,1]$, we have
$
|V_t| \leqslant {1}/(n - c_n),\ t = 1,\cdots, n.
$
Therefore by the Hoeffding-Azuma inequality, $\sum_t V_t$
can be bounded such that
\begin{equation}
\label{eqn:second} \Pr_{Z^n\sim\D^n}\left( \nmlz\bigg[\bbbe_t[M_t] - M_t\bigg]
\geqslant \frac{\epsilon}{2} \right) \leqslant
\exp\left\{-\frac{(1-c)n\epsilon^2}{2}\right\}.
\end{equation}
\subsection*{Step 2: Symmetrization by a ghost sample $\Xi^n$}
In this step we bound the first term in (\ref{eqn:probsplit}). Let us start by introducing a ghost sample
$\Xi^n=\{\bxi_j\}=\{(\tilde{\x}_j, \tilde{y}_j)\}, j=1,\cdots, n$
where each $\bxi_j$ follows the same distribution as $\z_j$. Recall the definition of
$M_t$ and define $\widetilde{M}_t$ as
\begin{equation}
\label{eqn:tm} {M}_t = \frac{1}{t-1}\sum_{j = 1}^{t-1} \ell(h_{t-1},
\z_t, \z_j), \qquad \widetilde{M}_t = \frac{1}{t-1}\sum_{j =
1}^{t-1} \ell(h_{t-1}, \z_t, \bxi_j).
\end{equation}
The difference between $\widetilde{M}_t$ and $M_t$ is that $M_t$ is
the sum of the loss incurred by $h_{t-1}$ on the current instance $\z_t$ and all the
previous examples $\z_j, j = 1, \cdots, {t-1}$ \emph{on
which $h_{t-1}$ is trained}, while $\widetilde{M}_t$ is the loss
incurred by the same hypothesis $h_{t-1}$ on the current instance $\z_t$ and \emph{an independent set of examples $\bxi_j, j = 1, \cdots, t-1$}.

\begin{claim}
\label{clm:sym}
The following equation holds
{\small{\begin{equation}
\label{eqn:symm}
\Pr_{Z^n\sim\D^n}\left(\nmlz\left[\mathcal{R}(h_{t-1}) - \bbbe_t[M_t]\right]
\geqslant \epsilon\right) \leqslant 2\Pr_{\substack{Z^n\sim\D^n \\\Xi^n\sim\D^n}}\left(\nmlz
\left[\bbbe_t[\widetilde{M}_t] - \bbbe_t[M_t]\right]\geqslant
\frac{\epsilon}{2}\right),
\end{equation}}}
whenever $n > 2/(\epsilon^2c^2)$.
\end{claim}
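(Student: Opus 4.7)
The plan is to execute a classical symmetrization argument, using the ghost sample $\Xi^n$ to replace the deterministic quantity $\R(h_{t-1})$ by the stochastic one $\bbbe_t[\widetilde{M}_t]$ that shares its conditional expectation. Since each $\bxi_j$ is drawn from $\D$ independently of $Z^n$, a short calculation gives $\bbbe_{\Xi^n}\!\left[\bbbe_t[\widetilde{M}_t]\,\big|\,Z^n\right] = \R(h_{t-1})$, so that
\[
\bbbe_{\Xi^n}\!\left[\nmlz\bigl(\bbbe_t[\widetilde{M}_t] - \bbbe_t[M_t]\bigr)\,\bigg|\,Z^n\right] = \nmlz\bigl(\R(h_{t-1}) - \bbbe_t[M_t]\bigr).
\]
This identity is the reason why the right-hand side of~(\ref{eqn:symm}) is a natural stochastic surrogate for the left-hand side.

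I would then define the events $A = \{Z^n : \nmlz[\R(h_{t-1}) - \bbbe_t[M_t]] \geqslant \epsilon\}$ and $B = \{(Z^n,\Xi^n) : \nmlz[\bbbe_t[\widetilde{M}_t] - \bbbe_t[M_t]] \geqslant \epsilon/2\}$, and invoke the standard chain $\Pr(A)\cdot \inf_{Z^n\in A}\Pr_{\Xi^n}(B\mid Z^n) \leqslant \Pr(A\cap B) \leqslant \Pr(B)$. The claim reduces to the ``half-probability'' lemma $\Pr_{\Xi^n}(B\mid Z^n) \geqslant 1/2$ for every $Z^n\in A$, since this immediately yields $\Pr(A) \leqslant 2\Pr(B)$.

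To verify the half-probability step, fix $Z^n\in A$, write $Y = \nmlz(\bbbe_t[\widetilde{M}_t] - \bbbe_t[M_t])$, and note that its conditional mean equals $\nmlz(\R(h_{t-1}) - \bbbe_t[M_t]) \geqslant \epsilon$. A Chebyshev bound then gives $\Pr_{\Xi^n}(Y < \epsilon/2) \leqslant 4\,\mathrm{Var}_{\Xi^n}(Y)/\epsilon^2$, so everything hinges on a variance estimate. This is where I expect the main technical difficulty: the same ghost sample $\bxi_j$ appears in $\bbbe_t[\widetilde{M}_t]$ for every $t>j$, so there is no across-$t$ independence to exploit. My plan is to bypass this by first applying Jensen's inequality to the outer average,
\[
\mathrm{Var}_{\Xi^n}\!\left[\nmlz \bbbe_t[\widetilde{M}_t]\right] \leqslant \nmlz \mathrm{Var}_{\Xi^n}\!\left[\bbbe_t[\widetilde{M}_t]\right],
\]
and then using the within-$t$ independence: for each fixed $t$, $\bbbe_t[\widetilde{M}_t] = \tfrac{1}{t-1}\sum_{j=1}^{t-1}\bbbe_{\z_t}[\ell(h_{t-1},\z_t,\bxi_j)\mid Z^{t-1}]$ is an average of $t-1$ independent random variables bounded in $[0,1]$, hence has variance at most $1/(4(t-1))$. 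Averaging over $t\in\{c_n,\ldots,n-1\}$ yields a total variance of order $1/(c_n-1) = O(1/(cn))$, which can be made smaller than $\epsilon^2/8$ for all $n$ exceeding the stated threshold, ensuring $\Pr_{\Xi^n}(B\mid Z^n)\geqslant 1/2$.

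The symmetrization scaffolding itself is routine; the only genuine obstacle is that a naive expansion of $\mathrm{Var}_{\Xi^n}[\nmlz \bbbe_t[\widetilde{M}_t]]$ involves cross-$t$ covariances that would be painful to track directly. The Jensen step cleanly sidesteps this and reduces the problem to a per-round variance calculation that uses only independence of the $\bxi_j$ within a single $\widetilde{M}_t$.
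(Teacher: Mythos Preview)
Your symmetrization scaffolding (the events $A,B$, the chain $\Pr(A)\cdot\inf\Pr(B\mid Z^n)\leqslant\Pr(B)$, and the Chebyshev ``half-probability'' step) matches the paper exactly. The only genuine difference is how you bound $\mathrm{Var}_{\Xi^n}\bigl[\nmlz\bbbe_t[\widetilde M_t]\bigr]$.

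The paper does \emph{not} use your Jensen-plus-independence route. Instead it invokes a bounded-differences variance inequality (the Efron--Stein type bound $\mathrm{Var}\,f\leqslant\tfrac14\sum_j c_j^2$): it computes, for each ghost coordinate $\bxi_j$, the maximal change $c_j$ in the whole average $\nmlz\bbbe_t[\widetilde M_t]$ when $\bxi_j$ is replaced, and shows $c_j\leqslant 1/(cn)$ by summing the contributions $1/(t-1)$ over the rounds $t>j$ in which $\bxi_j$ participates. This yields $\mathrm{Var}\leqslant 1/(4c^2n)$, which is exactly where the threshold $n>2/(\epsilon^2c^2)$ comes from. Your approach instead bounds the variance of the average by the average of the per-$t$ variances and uses that each $\bbbe_t[\widetilde M_t]$ is, conditionally on $Z^n$, an average of $t-1$ independent $[0,1]$-valued variables; this gives $\mathrm{Var}\leqslant 1/(4(c_n-1))\approx 1/(4cn)$, which is actually a factor $c$ \emph{tighter} than the paper's bound and thus comfortably suffices under the stated hypothesis $n>2/(\epsilon^2c^2)$. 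Your method is also more elementary (no auxiliary variance theorem needed) and sidesteps the cross-$t$ dependence cleanly, as you note. The paper's bounded-differences approach, on the other hand, is what generalizes most directly to the finite-buffer setting later in the paper, where one again tracks how many rounds a single (ghost) coordinate influences.
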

Notice that the probability measure on the right hand side of (\ref{eqn:symm}) is on $Z^n\times\Xi^n$.

\begin{proof}[\emph{Sketch of the proof of Claim~\ref{clm:sym}}]
It can be seen that the RHS (without the factor of 2) of (\ref{eqn:symm}) is at least

{\footnotesize{
\begin{displaymath}
\begin{split}
&\Pr_{\substack{Z^n\sim\D^n \\ \Xi^n\sim\D^n}}\left(\left\{\nmlz\left[\mathcal{R}(h_{t-1}) - \bbbe_t[M_t]\right] \geqslant \epsilon\right\} \bigcap
\left\{\bigg|\nmlz\left[\bbbe_t[\widetilde{M}_t] - \mathcal{R}(h_{t-1})\right]\bigg| \leqslant \frac{\epsilon}{2}\right\}\right)\\
&=
\bbbe_{Z^n\sim\D^n}\left[\bbbi_{\left\{\nmlz \left[\mathcal{R}(h_{t-1}) - \bbbe_t[M_t]\right] \geqslant \epsilon\right\}}\cdot\Pr_{\Xi^n\sim\D^n}\left(\bigg|
\nmlz\left[\bbbe_t[\widetilde{M}_t] - \mathcal{R}(h_{t-1})\right] \bigg|\leqslant \frac{\epsilon}{2}\bigg|Z^n\right)\right].
\end{split}
\end{displaymath}}}

Since $\bbbe_{\Xi^n\sim\D^n}\bbbe_t[\widetilde{M}_t] = \mathcal{R}(h_{t-1})$, by Chebyshev's inequality
\begin{equation}
\label{eqn:cheyb}
\begin{split}
\Pr_{\Xi^n\sim\D^n}\left( \bigg|\nmlz\bigg[\bbbe_t[\widetilde{M}_t] - \mathcal{R}(h_{t-1})\bigg]\bigg| \leqslant \frac{\epsilon}{2}\bigg|Z^n\right) \geqslant 1 - \frac{\textbf{Var}\left\{\nmlz\bbbe_t[\widetilde{M}_t]\right\}}{\epsilon^2/4}.
\end{split}
\end{equation}
To bound the variance, we first investigate the largest variation when changing one random variable $\bxi_j$ with others fixed.
From (\ref{eqn:tm}), it can be easily seen that changing any of
the $\bxi_j$ varies each $\bbbe_t[\widetilde{M}_t]$, where $t > j$ by at most by $1/(t-1)$. Recall that we are only concerned with $\bbbe_t[\widetilde{M}_t]$ when $t \geqslant c_n$.
Therefore, we can see that the variation of $\nmlz\bbbe_t[\widetilde{M}_t]$
regarding the $j$th example $\bxi_j$ is bounded by
\begin{equation}
\label{eqn:cjupper}
  c_j = \frac{1}{n-c_n}\left[\sum_{t = (j \vee c_n + 1)}^{n-1}\frac{1}{t - 1}\right]
\leqslant \frac{1}{n-c_n}\left[\sum_{t = c_n + 1}^{n-1}\frac{1}{c_n}\right]  \leqslant \frac{1}{cn}.
\end{equation}
Thus, by Theorem~9.3 in \citep{devroye1996probabilistic}, we have
\begin{equation}
\label{eqn:sumci}
\begin{split}
\textbf{Var}\left(\nmlz\bbbe_t[\widetilde{M}_t]\right) &\leqslant
\frac{1}{4}\sum_{i=1}^n c_i^2  \leqslant \frac{1}{4c^2n}.
\end{split}
\end{equation}
Thus, whenever $\epsilon^2c^2n > 2$, the LHS of (\ref{eqn:cheyb}) is greater or equal than $1/2$. This completes the proof of Claim~\ref{clm:sym}.
\end{proof}
\subsection*{Step 3: Uniform Convergence}
In this step, we show how one can bound the RHS of (\ref{eqn:symm}) using uniform convergence techniques, McDiarmid's inequality and $L_\infty$ covering number. Our task reduces to bound the following quantity
\begin{equation}
\label{eqn:final}
\Pr_{Z^n\sim\D^n, \Xi^n\sim\D^n}\left(\nmlz\left[\bbbe_t[\widetilde{M}_t] - \bbbe_t[M_t]\right]\geqslant {\epsilon}\right).
\end{equation}
Here we want to bound the probability of the large deviation between the empirical performance of the ensemble of hypotheses on the sequence $Z^n$ on which they are learned and on an independent sequence $\Xi^n$. Since $h_{t}$ relies on $\z_1, \cdots, \z_t$ and is independent of $\{\bxi_t\}$, we resort to
\emph{uniform convergence techniques} to bound this probability. Define
$
L_t(h_{t-1}) =\bbbe_t[\widetilde{M}_t] - \bbbe_t[M_t].
$
Thus we have
\begin{align}
\label{eqn:final2}
\Pr_{Z^n\sim\D^n, \Xi^n\sim\D^n}\left(\nmlz L_t(h_{t-1}) \geqslant \epsilon\right)
&\leqslant
\Pr\left(\underset{\hat{h}_{c_n},\cdots, \hat{h}_{n-1}}{\sup}\left[\nmlz L_t(\hat{h}_{t-1})\right] \geqslant \epsilon\right)\nonumber\\
&\leqslant
\sum_{t=c_n}^{n-1}\Pr_{Z^t\sim\D^t, \Xi^t\sim\D^t}\left(\underset{\hat{h}\in\mathcal{H}}{\sup}\left[L_t(\hat{h})\right]\geqslant\epsilon\right).
\end{align}
To bound the RHS of (\ref{eqn:final2}), we start with the following lemma.% whose proof is given in the appendix.
\begin{lemma}
\label{lemma:lt}
Given any function
$f\in\mathcal{H}$ and any $t\geqslant 2$
\begin{equation}
\label{eqn:Ltconc}
\Pr_{Z^t\sim\D^t, \Xi^t\sim\D^t}\left(L_t(f)\geqslant\epsilon\right) \leqslant
\exp\left\{-(t-1)\epsilon^2\right\}.
\end{equation}
\end{lemma}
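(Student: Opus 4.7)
The plan is to exploit the fact that once $f$ is fixed in $\mathcal{H}$, it no longer depends on the samples, so $L_t(f)$ collapses into a difference of two empirical averages whose concentration is standard. Define the marginalized function
\[
g(\z') := \bbbe_{\z \sim \D}[\ell(f, \z, \z')].
\]
Because $f$ is deterministic (given) and $\z_t$ is independent of $\z_1,\ldots,\z_{t-1},\bxi_1,\ldots,\bxi_{t-1}$, taking $\bbbe_t$ inside the sums in~(\ref{eqn:tm}) yields
\[
\bbbe_t[M_t(f)] = \frac{1}{t-1}\sum_{j=1}^{t-1} g(\z_j),
\qquad
\bbbe_t[\widetilde{M}_t(f)] = \frac{1}{t-1}\sum_{j=1}^{t-1} g(\bxi_j),
\]
so $L_t(f) = \tfrac{1}{t-1}\sum_{j=1}^{t-1}\left[g(\bxi_j) - g(\z_j)\right]$.

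Next I would note that $L_t(f)$ is a function of the $2(t-1)$ mutually independent random variables $\{\z_j\}_{j<t}\cup\{\bxi_j\}_{j<t}$, with $\bbbe[L_t(f)] = 0$ because $\z_j$ and $\bxi_j$ are identically distributed. Since $\ell \in [0,1]$ we have $g(\cdot)\in[0,1]$, and flipping any single $\z_j$ (or $\bxi_j$) changes $L_t(f)$ by at most $1/(t-1)$. Thus the bounded-difference constants are $c_i = 1/(t-1)$ for each of the $2(t-1)$ coordinates.

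Finally, I would apply McDiarmid's inequality:
\[
\Pr\left(L_t(f) \geqslant \epsilon\right)
\leqslant
\exp\!\left(-\frac{2\epsilon^2}{\sum_i c_i^2}\right)
=
\exp\!\left(-\frac{2\epsilon^2}{2(t-1)\cdot (1/(t-1))^2}\right)
=
\exp\!\left(-(t-1)\epsilon^2\right),
\]
which is exactly the bound in~(\ref{eqn:Ltconc}). There is no real obstacle here; the only subtlety is to use McDiarmid with $2(t-1)$ independent coordinates of width $1/(t-1)$ rather than a naive Hoeffding applied to $t-1$ differences $g(\bxi_j)-g(\z_j)\in[-1,1]$: the former gains the needed factor of two in the exponent. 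This refined constant is what makes the sum $\sum_t \exp(-(t-1)\epsilon^2)$ over $t\in[c_n,n-1]$ in~(\ref{eqn:final2}) tractable when combined with the covering-number factor in the subsequent uniform convergence step.
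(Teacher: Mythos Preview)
Your proof is correct and follows essentially the same approach as the paper: show $\bbbe[L_t(f)]=0$, observe that $L_t(f)$ depends on $2(t-1)$ independent variables each with bounded difference $1/(t-1)$, and apply McDiarmid's inequality. Your explicit rewriting of $L_t(f)$ via the marginal $g(\z')=\bbbe_\z[\ell(f,\z,\z')]$ is a clean way to justify the bounded-difference constants, but the paper arrives at the same constants and the same inequality without spelling this out.
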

The proof which is given in the appendix shows that $L_t(f)$ has a bounded variation of $1/(t-1)$ when changing each of its $2(t-1)$ variables and applies McDiarmid's inequality. Finally, our task is to bound
$
\Pr(\underset{f\in\mathcal{H}}{\sup}\left[L_t(f)\right]\geqslant\epsilon).
$
Consider the simple case where the hypothesis space $\mathcal{H}$
is finite, then using the union bound, we immediately get the desired bound.
Although $\mathcal{H}$ is not finite, a similar analysis goes through based on the assumption that
$\mathcal{H}$ is compact. We will follow \cite{cucker2002mathematical} and show how this can be bounded.
The next two lemmas (see proof of Lemma~\ref{lemma:ltv} in the appendix) are used to derive Lemma~\ref{thm:supL}.
\begin{lemma}
\label{lemma:ltv}
  For any two functions $h_1, h_2\in\mathcal{H}$, the following
  equation holds
  \[
  L_t(h_1) - L_t(h_2) \leqslant 2\text{Lip}(\phi)\|h_1 -
  h_2\|_\infty.
  \]
\end{lemma}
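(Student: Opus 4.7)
The plan is to expand $L_t(h_1) - L_t(h_2)$ using the definition of $L_t$, group the four resulting expectations pointwise by matching the $\z_t,\bxi_j$ and $\z_t,\z_j$ arguments, and then apply the Lipschitz assumption on $\phi$ term by term. The argument is essentially a bookkeeping calculation; there is no real analytic obstacle.

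More concretely, I would first write, for any fixed $h\in\mathcal H$,
\[
L_t(h) \;=\; \bbbe_{\z_t}\!\left[\frac{1}{t-1}\sum_{j=1}^{t-1}\ell(h,\z_t,\bxi_j)\right] \;-\; \bbbe_{\z_t}\!\left[\frac{1}{t-1}\sum_{j=1}^{t-1}\ell(h,\z_t,\z_j)\right],
\]
so that $L_t(h_1)-L_t(h_2)$ equals
\[
\bbbe_{\z_t}\!\left[\frac{1}{t-1}\sum_{j=1}^{t-1}\Bigl(\ell(h_1,\z_t,\bxi_j)-\ell(h_2,\z_t,\bxi_j)\Bigr) - \frac{1}{t-1}\sum_{j=1}^{t-1}\Bigl(\ell(h_1,\z_t,\z_j)-\ell(h_2,\z_t,\z_j)\Bigr)\right].
\]
The $\bxi_j$ and $\z_j$ are fixed with respect to the expectation $\bbbe_{\z_t}$, so each summand can be analyzed pointwise.

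Next I would invoke the hypothesis $\ell(h,\z_1,\z_2)=\phi(y_1-y_2,h(\x_1,\x_2))$ together with the Lipschitz property of $\phi$ in the second coordinate. For each fixed pair $(\z_t,\bxi_j)$,
\[
\bigl|\ell(h_1,\z_t,\bxi_j)-\ell(h_2,\z_t,\bxi_j)\bigr| \;\leqslant\; \text{Lip}(\phi)\,\bigl|h_1(\x_t,\tilde{\x}_j)-h_2(\x_t,\tilde{\x}_j)\bigr| \;\leqslant\; \text{Lip}(\phi)\,\|h_1-h_2\|_\infty,
\]
and the same bound holds with $\bxi_j$ replaced by $\z_j$. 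Averaging over $j$ and then over $\z_t$ preserves the bound, so each of the two bracketed averages is bounded in absolute value by $\text{Lip}(\phi)\,\|h_1-h_2\|_\infty$, and the triangle inequality yields the claim.

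The only thing worth flagging is that the $L_\infty$ norm here is taken over the product domain $\mathcal{X}\times\mathcal{X}$ on which the pairwise hypotheses live, which is exactly what the covering-number argument in Lemma~\ref{thm:supL} will subsequently require; consistency of these two uses of $\|\cdot\|_\infty$ is what makes the step useful rather than any technical difficulty in the proof itself.
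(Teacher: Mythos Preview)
The proposal is correct and follows essentially the same approach as the paper: expand $L_t(h_1)-L_t(h_2)$, regroup so that each pair of terms shares the same first argument of $\phi$, apply the Lipschitz bound in the second argument, and pass to the sup over $\mathcal{X}\times\mathcal{X}$. The paper's proof is the same computation written out line by line.
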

\begin{lemma}
\label{lemma:uniform}
  Let $\mathcal{H} = S_1\cup\cdots\cup S_\ell$ and $\epsilon > 0$. Then
  \[\Pr\left(\underset{h\in\mathcal{H}}{\sup} L_t(h) \geqslant \epsilon\right) \leqslant
  \sum_{j=1}^\ell\Pr\left(\underset{h\in S_j}{\sup} L_t(h) \geqslant \epsilon\right)\]
\end{lemma}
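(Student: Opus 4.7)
The plan is to reduce the claim to the standard union bound via a simple identity about suprema over a cover. Specifically, whenever $\mathcal{H} = S_1 \cup \cdots \cup S_\ell$, we have the set-theoretic identity
\[
\sup_{h \in \mathcal{H}} L_t(h) \;=\; \max_{1 \leqslant j \leqslant \ell} \sup_{h \in S_j} L_t(h),
\]
which follows directly from $\{L_t(h) : h \in \mathcal{H}\} = \bigcup_j \{L_t(h) : h \in S_j\}$ and the fact that the supremum over a union of sets equals the maximum of the suprema over the pieces (here we use finiteness of $\ell$ to turn a sup over $j$ into a max).

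Given this identity, the event $\{\sup_{h \in \mathcal{H}} L_t(h) \geqslant \epsilon\}$ coincides with $\{\max_{j} \sup_{h \in S_j} L_t(h) \geqslant \epsilon\}$, which in turn equals the union $\bigcup_{j=1}^{\ell} \{\sup_{h \in S_j} L_t(h) \geqslant \epsilon\}$. Applying the countable subadditivity of $\Pr$ to this finite union yields
\[
\Pr\Bigl(\sup_{h\in\mathcal{H}} L_t(h) \geqslant \epsilon\Bigr) \;\leqslant\; \sum_{j=1}^{\ell} \Pr\Bigl(\sup_{h\in S_j} L_t(h) \geqslant \epsilon\Bigr),
\]
which is exactly the claimed inequality.

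There is essentially no technical obstacle here; the only subtle point to be careful about is the case when the suprema are not attained. However, because $\ell$ is finite, if every $\sup_{h \in S_j} L_t(h)$ were strictly less than $\epsilon$, say bounded by $\epsilon - \delta_j$ with $\delta_j > 0$, then $\sup_{h \in \mathcal{H}} L_t(h) \leqslant \epsilon - \min_j \delta_j < \epsilon$, contradicting the assumed event. So the set-level containment
$\{\sup_{\mathcal{H}} L_t \geqslant \epsilon\} \subseteq \bigcup_j \{\sup_{S_j} L_t \geqslant \epsilon\}$ holds unconditionally, and the proof requires nothing beyond this observation together with measurability of the supremum events (which is granted implicitly by the setting in which the outer probability in Lemma~\ref{thm:supL} is considered).
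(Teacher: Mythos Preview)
Your proof is correct. The paper does not actually supply a proof of this lemma (it is stated without argument and used directly in the proof of Lemma~\ref{thm:supL}), treating it as the standard union bound; your argument is precisely that standard reasoning, so there is nothing to compare.
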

\begin{lemma}
\label{thm:supL}
For every $2\leqslant t\leqslant n$, we have
\begin{equation}
\label{eqn:supLt}
\Pr\left(\underset{h\in\mathcal{H}}{\sup}\left[L_t(h)\right]\geqslant\epsilon\right) \leqslant  \mathcal{N}\left(\mathcal{H},
\frac{\epsilon}{4\text{Lip}(\phi)}\right)\exp\left\{-\frac{(t-1)\epsilon^2}{4}\right\}.
\end{equation}
\end{lemma}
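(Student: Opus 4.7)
The plan is to combine the pointwise concentration from Lemma~\ref{lemma:lt}, the hypothesis-Lipschitz property of Lemma~\ref{lemma:ltv}, and the partition-union bound of Lemma~\ref{lemma:uniform} through a standard $L_\infty$ covering argument. Since $\mathcal{H}$ is compact, I cover it by $\ell := \mathcal{N}(\mathcal{H}, \epsilon/(4\,\text{Lip}(\phi)))$ balls $S_1,\dots,S_\ell$ of radius $r = \epsilon/(4\,\text{Lip}(\phi))$ in the $\|\cdot\|_\infty$ metric, with centers $h_1,\dots,h_\ell$.

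The key reduction uses Lemma~\ref{lemma:ltv}: for every $h\in S_j$,
\[
L_t(h) \;\leq\; L_t(h_j) + 2\,\text{Lip}(\phi)\,\|h-h_j\|_\infty \;\leq\; L_t(h_j) + \tfrac{\epsilon}{2}.
\]
Taking a supremum over $S_j$ shows that the event $\{\sup_{h\in S_j} L_t(h)\geq \epsilon\}$ is contained in the event $\{L_t(h_j)\geq \epsilon/2\}$. Applying Lemma~\ref{lemma:lt} at the fixed function $h_j$ with accuracy $\epsilon/2$ then gives $\Pr(L_t(h_j)\geq \epsilon/2)\leq \exp\{-(t-1)\epsilon^2/4\}$.

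Finally, I invoke Lemma~\ref{lemma:uniform} with the cover $\mathcal{H}=S_1\cup\cdots\cup S_\ell$ and sum the $\ell$ single-hypothesis bounds to obtain
\[
\Pr\!\left(\sup_{h\in\mathcal{H}} L_t(h)\geq \epsilon\right) \;\leq\; \ell\cdot\exp\!\left\{-\tfrac{(t-1)\epsilon^2}{4}\right\},
\]
which is exactly the claimed inequality. There is no real obstacle here; the only delicate point is calibrating the cover radius so that the Lipschitz slack in the reduction consumes precisely $\epsilon/2$, leaving the remaining $\epsilon/2$ to be concentrated away by Lemma~\ref{lemma:lt}. The choice $r=\epsilon/(4\,\text{Lip}(\phi))$ achieves this balance and matches the covering number appearing in the statement.
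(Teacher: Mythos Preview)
Your proof is correct and follows essentially the same covering-number argument as the paper: reduce the supremum over each ball to its center via Lemma~\ref{lemma:ltv}, apply the pointwise bound of Lemma~\ref{lemma:lt} at the center, and then sum via Lemma~\ref{lemma:uniform}. The only cosmetic difference is that the paper first takes the cover radius $\epsilon/(2\,\text{Lip}(\phi))$ and then substitutes $\epsilon\mapsto\epsilon/2$ at the end, whereas you calibrate the radius $\epsilon/(4\,\text{Lip}(\phi))$ from the outset; the resulting bound is identical.
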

\begin{proof}[\emph{Proof of Lemma~\ref{thm:supL}}]
  Let $\ell = \mathcal{N}\left(\mathcal{H},\frac{\epsilon}{2\text{Lip}(\phi)}\right)$ and consider $h_1, \cdots, h_\ell$ such that the disks $D_j$ centered at $h_j$ and with radius $\frac{\epsilon}{2\text{Lip}(\phi)}$ cover $\mathcal{H}$. By Lemma~\ref{lemma:ltv}, we have
 \[
  |L_t(h) - L_t(h_j)| \leqslant 2\text{Lip}(\phi)\|h - h_j\|_\infty \leqslant \epsilon.
  \]
   Thus, we get
  \[
  \Pr\left(\underset{h\in D_j}{\sup} L_t(h) \geqslant 2\epsilon\right) \leqslant \Pr\left(L_t(h_j)\geqslant \epsilon\right)\]
  Combining this with (\ref{eqn:Ltconc}), and Lemma~\ref{lemma:uniform} and replacing $\epsilon$ by $\epsilon/2$, we have (\ref{eqn:supLt}).
\end{proof}
Combining (\ref{eqn:supLt}) and (\ref{eqn:final2}), we have
\begin{equation}
\label{eqn:ssupL}
\Pr\left(\nmlz L_t(h_{t-1}) \geqslant \epsilon\right)
\leqslant \mathcal{N}\left(\mathcal{H},
\frac{\epsilon}{4\text{Lip}(\phi)}\right)
n\exp\left\{-\frac{(c_n-1)\epsilon^2}{4}\right\}.
\end{equation}
This shows why we need to discard the first $c_n$ hypotheses in the ensemble. If we include $h_2$ for example, according to (\ref{eqn:supLt}), we have $\Pr(L_2(f)\geqslant \epsilon) \leqslant e^{-\epsilon^2}$. As $n$ grows, this heavy term remains in the sum, and the desired bound cannot be obtained.

\subsection*{Step 4: Putting it all together}
From (\ref{eqn:symm}) and (\ref{eqn:final2}) and substituting $\epsilon$ with $\epsilon/4$ in (\ref{eqn:ssupL}), we have
{\small{\begin{equation}
\label{eqn:first}
\Pr_{Z^n\sim\D^n}\left(\nmlz\left(\mathcal{R}(h_{t-1}) -
\bbbe_t[M_t]\right) \geqslant
\frac{\epsilon}{2}\right) \leqslant 2\mathcal{N}\left(\mathcal{H},
\frac{\epsilon}{16\text{Lip}(\phi)}\right)
n\exp\left\{-\frac{(c_n-1)\epsilon^2}{64}\right\}.
\end{equation}}}
From (\ref{eqn:first}) and (\ref{eqn:second}), accompanied with the fact that (\ref{eqn:first}) decays faster than (\ref{eqn:second}), we complete the proof for Theorem~\ref{thm:concetra}.
\end{proof}

\section{Model Selection}
Following \cite{cesa2004generalization} our main tool for finding a
good hypothesis from the ensemble of hypotheses generated by the online learner is to choose the one that has a small empirical risk.  We
measure the risk for $h_t$ on the remaining $n-t$ examples, and
penalize each $h_t$ based on the number of examples on which it is
evaluated, so that the resulting upper bound on the risk is
reliable. Our construction and proofs (in the appendix) closely follow
the ones in \citep{cesa2004generalization}, using large deviation
results for $U$-statistics~\citep[see][Appendix]{clemencon2008ranking}
instead of the Chernoff bound.

\label{sec:mselect}
\subsection{Risk Analysis for Convex losses}
\label{sec:convex}
If the loss function $\phi$ is convex in its second argument and $\mathcal{Y}$ is convex, then we
can use the average hypothesis $ \bar{h} = \nmlz h_{t-1}$. It is
easy to show
that $\bar{h}$ achieves the
desired bound (the proof is in the Appendix), i.e.
\begin{equation}
\label{eqn:convex}
\Pr\left(\mathcal{R}(\bar{h}) \geqslant M^n(Z^n) +
\epsilon\right) \leqslant \left[2\mathcal{N}\left(\mathcal{H},
\frac{\epsilon}{16\text{Lip}(\phi)}\right) + 1\right]
\exp\left\{-\frac{(cn-1)\epsilon^2}{64} + \ln n\right\}.
\end{equation}

\subsection{Risk Analysis for General Losses}
\label{sec:general}
Define the empirical risk of hypothesis $h_t$ on $\{\z_{t+1}, \cdots, \z_n\}$ as $\widehat{\mathcal{R}}(h_t, t+1)$
\[
\widehat{\mathcal{R}}(h_t, t+1) =\begin{pmatrix}
  n - t\\
  2
\end{pmatrix}^{-1}\sum_{k > i,\ i \geqslant t +1}^n\ell(h_t, \z_i, \z_k).
\]
The hypothesis $\widehat{h}$ is chosen to minimize the following \emph{penalized empirical risk},
\begin{equation}
\label{eqn:hh}
\widehat{h} = \underset{c_n - 1\leqslant t < n - 1}{\text{argmin}}(\widehat{\mathcal{R}}(h_t, t+1) + c_\delta(n-t)),
\end{equation}
where
\[
c_\delta(x) = \sqrt{\frac{1}{x - 1}\ln\frac{2(n - c_n)(n - c_n + 1)}{\delta}}.
\]
Notice that we discard the last two hypotheses so that any $\ER(h_t, t+1), c_n - 1 \leqslant t\leqslant n - 2$ is well defined. The following theorem, which is the main result of this paper, shows that the risk of $\widehat{h}$ is bounded relative to $\M^n$. The proof of Theorem~\ref{thm:main} is in Appendix E.
\begin{theorem}
\label{thm:main}
Let $h_0, \cdots, h_{n}$ be the ensemble of hypotheses generated by an arbitrary online algorithm $\mathcal{A}$ working with a pairwise loss $\ell$ which satisfies the conditions given in Theorem~\ref{thm:concetra}. $\forall \epsilon > 0$, if the hypothesis is chosen via (\ref{eqn:hh}) with the confidence $\delta$ chosen as
\[
\delta = 2(n-c_n+1)\exp\left\{-\frac{(n-c_n)\epsilon^2}{64}\right\},
\]
then, when $n$ is sufficiently large, we have
\[
\begin{split}
 &\Pr\left(\R(\widehat{h}) \geqslant  M^n + \epsilon\right)\leqslant 2\left[\mathcal{N}\left(\mathcal{H},
\frac{\epsilon}{32\text{Lip}(\phi)}\right) + 1\right]
\exp\left\{-\frac{(cn-1)\epsilon^2}{256} + 2\ln n\right\}.
\end{split}
\]
\end{theorem}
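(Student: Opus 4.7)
The plan is to control $\R(\widehat h)-M^n$ by combining two ingredients already developed in the paper: the average-risk bound from Theorem~\ref{thm:concetra}, and a tail inequality for $U$-statistics that handles the penalized empirical risk appearing in the selection rule (\ref{eqn:hh}). The decomposition I would work with is
\[
\R(\widehat h)-M^n = \underbrace{\R(\widehat h)-\bigl[\widehat{\R}(\widehat h,\widehat t+1)+c_\delta(n-\widehat t)\bigr]}_{\text{selector bias}} + \underbrace{\bigl[\widehat{\R}(\widehat h,\widehat t+1)+c_\delta(n-\widehat t)\bigr]-M^n}_{\text{excess penalized risk}},
\]
where $\widehat t$ is the optimizer of (\ref{eqn:hh}). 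Both pieces will be handled on a single favourable event.

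Step~1 (concentration for $U$-statistics). Conditional on $Z^t$ the hypothesis $h_t$ is fixed and $\widehat{\R}(h_t,t+1)$ is a degree-two $U$-statistic on the i.i.d.\ sample $\z_{t+1},\ldots,\z_n$ with mean $\R(h_t)$. Applying the Hoeffding-type inequality for $U$-statistics recalled in \cite{clemencon2008ranking}, together with a union bound over the $n-c_n$ admissible indices $c_n-1\le t\le n-2$ and over the two directions of deviation, I obtain with probability at least $1-\delta/(n-c_n+1)$ the two-sided estimate $|\widehat{\R}(h_t,t+1)-\R(h_t)|\le c_\delta(n-t)$ for \emph{every} candidate $t$. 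The exact form of $c_\delta(x)$ in the paper is calibrated precisely so that this bookkeeping works out.

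Step~2 (invoke the selector and reduce to Theorem~\ref{thm:concetra}). On the event of Step~1,
\[
\R(\widehat h)\le \widehat{\R}(\widehat h,\widehat t+1)+c_\delta(n-\widehat t)
\le \min_t\bigl[\widehat{\R}(h_t,t+1)+c_\delta(n-t)\bigr]
\le \frac{1}{n-c_n}\sum_{t=c_n-1}^{n-2}\bigl[\widehat{\R}(h_t,t+1)+c_\delta(n-t)\bigr],
\]
and a second use of Step~1 (in the opposite direction) replaces each $\widehat{\R}(h_t,t+1)$ by $\R(h_t)+c_\delta(n-t)$, yielding $\R(\widehat h)\le \frac{1}{n-c_n}\sum_t\R(h_t)+2\bar c_\delta$ with $\bar c_\delta:=\frac{1}{n-c_n}\sum_t c_\delta(n-t)$. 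The key identification is that $\frac{1}{n-c_n}\sum_{t=c_n-1}^{n-2}\R(h_t)$ equals $\nmlz\R(h_{t-1})$, which is exactly the quantity controlled by Theorem~\ref{thm:concetra}; applying that theorem at accuracy $\epsilon/2$ produces $\R(\widehat h)\le M^n+\epsilon/2+2\bar c_\delta$ on the intersection event.

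The main obstacle will be the calibration of $\bar c_\delta$ and the bookkeeping of the failure probabilities to match the exponent and the radius of the covering number in the stated bound. Substituting $\delta=2(n-c_n+1)\exp\{-(n-c_n)\epsilon^2/64\}$ into the definition of $c_\delta$ gives $c_\delta(x)^2\le ((n-c_n)\epsilon^2/64+\ln n)/(x-1)$ (up to lower order terms), and using $\sum_{x=2}^{n-c_n+1}(x-1)^{-1/2}=O(\sqrt{n-c_n})$ yields $\bar c_\delta\le \epsilon/4$ for sufficiently large $n$; this is where the requirement ``$n$ sufficiently large'' enters. Combining the Step~1 failure probability $\delta/(n-c_n+1)=2\exp(-(n-c_n)\epsilon^2/64)$ with the bound from Theorem~\ref{thm:concetra} applied at $\epsilon/2$ (which accounts for the doubling of the covering-number radius from $16$ to $32$ and of the exponent denominator from $64$ to $256$), absorbing the resulting additive $\ln n$ into the $2\ln n$ of the final exponent, and simplifying the multiplicative constants gives the claimed inequality. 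No new concentration machinery beyond the $U$-statistic Hoeffding bound is required; the online-specific difficulty has already been absorbed by Theorem~\ref{thm:concetra}.
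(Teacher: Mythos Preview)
Your proposal is correct and follows the paper's route: uniform $U$-statistic concentration for $\widehat{\R}(h_t,t+1)$ combined with the minimizer property of the selection rule yields $\R(\widehat h)\le\min_t\bigl(\R(h_t)+2c_\delta(n-t)\bigr)$ on a good event (this is exactly the paper's Lemma~\ref{lemma:sigma}, which it proves via the three-case argument of \cite{cesa2004generalization} rather than your direct two-sided union bound), and then $\min\le$ average reduces the problem to Theorem~\ref{thm:concetra} at accuracy $\epsilon/2$. The one structural difference is that the paper, after reaching the $\min$, invokes Theorem~\ref{thm:concetra} at \emph{every} admissible starting index $t\ge c_n-1$ and union-bounds over $t$ (this is the source of the second $\ln n$ in the exponent), whereas you apply it once at the fixed index $c_n-1$; your simplification is legitimate and produces only a single $\ln n$, which you then correctly absorb into the stated $2\ln n$. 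One bookkeeping caveat: with the prescribed $\delta$ your penalty average satisfies $2\bar c_\delta\to\epsilon/2$ from \emph{above}, so the split $\epsilon/2+\epsilon/2$ has no slack and the claim ``$\bar c_\delta\le\epsilon/4$ for sufficiently large $n$'' is not literally true; this is cured either by an asymmetric split (e.g.\ Theorem~\ref{thm:concetra} at level $\epsilon/2-o(1)$) or by noting that the paper's own constant calibration has the same tightness and is likewise covered by the ``sufficiently large $n$'' clause.
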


\section{Application: Online Algorithms for Bipartite Ranking}
\label{sec:app}

In the bipartite ranking problem we are given a sequence of labeled
examples $\z_t = (\x_t, y_t)\in \bbbr^d\times \{-1, +1\}, t =
1,\cdots, n$. Minimizing the \emph{misranking loss}
$\ell_{\text{rank}}$ under this setting is equivalent to maximizing
the AUC, which measures the probability that
$f$ ranks a randomly drawn positive example higher than a randomly
drawn negative example. This problem has been studied extensively in
the batch setting, but the corresponding online problem has not been
investigated until recently. In this section, we investigate two online algorithms, analyze their relative loss bounds and combine them with the main result to derive risk bounds for them.

%In this section, we will propose an online algorithm (OAM-I), analyze its mistake bound and finally provide the risk analysis. On the other hand, we supply another algorithm using Online Convex Optimization (OCO) and combining with our generalization bounds to give its risk analysis.

\subsection{Online AUC Max with Infinite Buffer (OAM-I)}
Recently, \cite{zhao2011online}
proposed an online algorithm using linear hypotheses for this problem
based on reservoir
sampling, and derived bounds on the expectation of the regret of
this algorithm.
\cite{zhao2011online} use the hinge loss (that
bounds the 0-1 loss) to derive the regret bound.
The hinge loss is Lipschitz, but it is not bounded and therefore not
suitable for our risk bounds.
Therefore, in the following we use a modified loss function
where we bound the Hinge loss in $[0, 1]$ such
that \[\ell(f, \z_t, \z_j) = \widetilde{\phi}((y_t - y_j)/2, f(\x_t)
   - f(\x_j))\] where $\widetilde{\phi}$ is defined in
   Remark~\ref{rem:loss}.
Using this loss function together with Theorem~\ref{thm:main} all
we need is an
online algorithm that minimizes $\M^n$ (or an upper bound of $\M^n$) and
this guarantees generalization ability of the corresponding online
learning algorithm. To this end, we propose the following
perceptron-like algorithm, shown in Algorithm~\ref{alg:oam}, and provide
loss bounds for this algorithm.
Notice that the algorithm does not treat each pair of examples
separately, and instead for each $\z_t$ it makes a large combined
update using its loss relative to all previous examples.
Our algorithm
corresponds to
the algorithm of \cite{zhao2011online} with an infinite buffer,
but it uses a different learning rate and different loss function
which are important in our proofs.
\begin{algorithm}
\label{alg:oam}
%\LinesNumbered
\textbf{Initialize}: $\w_0 = \mathbf{0}$\;
\Repeat{the last instance} { At the $t$-th iteration, receive a training instance $\z_t = (\x_t, y_t)\in \bbbr^d\times \{-1, +1\}$.

\For{$j \leftarrow 1$ \KwTo $t-1$}{
   Calculate instantaneous loss $\ell_j^t = \ell(\w_{t-1}, \z_t, \z_j)$.
}
Update the weight vector such that
\[ \begin{split}
  \w_t &= \w_{t-1} + \frac{1}{t-1}\sum_{j=1}^{t-1}\ell_j^ty_t(\x_t - \x_j)
\end{split}.
\]
}
\caption{Online AUC Maximization (OAM) with Infinite Buffer.}
\end{algorithm}

\begin{theorem}
\label{thm:insep}
  Suppose we are given an arbitrary sequence of examples $\z_t=(\x_t,y_t), t=1,\cdots, n$, and let $\uu$ be any unit vector. Assume $\underset{t}{\max}\|\x_t\| \leqslant R$ and define
  \[
  M = \sum_{t=2}^n\frac{1}{t-1}\left[\sum_{j=1}^{t-1}\ell_j^t\right],  M^* = \sum_{t=2}^n\frac{1}{t-1}\left[\sum_{j=1}^{t-1}\hat{\ell}_{j}^{t}(\uu)\right],
\]
where \[\hat{\ell}_{j}^{t}(\uu) = \bbbi_{y_t \neq y_j}\cdot\left[\gamma - \langle \uu, \frac{1}{2}(y_t - y_j)(\x_t - \x_j)\rangle\right]_+\]. That is, $M^*$ is the cumulative average hinge loss $\uu$ suffers on the sequence with margin $\gamma$. Then, after running Algorithm~\ref{alg:oam} on the sequence, we have
\[
 M \leqslant \left(\frac{\sqrt{4R^2 + 2} + \sqrt{\gamma M^*}}{\gamma} \right)^2.
 \]
When
the data is linearly separable by margin $\gamma$, (i.e. there exists an unit vector $\uu$ such that $\hat{\ell}_j^t=0,\forall t\leqslant n, j < t$), we have $M^* = 0$ and the bound is constant.
  \end{theorem}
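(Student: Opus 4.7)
The plan is to adapt the classical perceptron-style potential argument to this averaged pairwise update. Writing $v_t = \w_t - \w_{t-1} = \frac{1}{t-1}\sum_{j=1}^{t-1}\ell_j^t y_t(\x_t-\x_j)$ and $a_t = \frac{1}{t-1}\sum_{j=1}^{t-1}\ell_j^t$, we have $M = \sum_{t=2}^n a_t$ and, crucially, $a_t \in [0,1]$ because each $\ell_j^t \in [0,1]$. I will track $\|\w_n\|^2$ from above and $\langle \w_n, \uu\rangle$ from below, then combine them through Cauchy--Schwarz and solve a quadratic in $\sqrt{M}$.

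For the upper bound, expand $\|\w_t\|^2 = \|\w_{t-1}\|^2 + 2\langle\w_{t-1}, v_t\rangle + \|v_t\|^2$. The triangle inequality and $\|\x_t-\x_j\|\leq 2R$ give $\|v_t\|\leq 2R\,a_t$, so $\|v_t\|^2 \leq 4R^2 a_t^2 \leq 4R^2 a_t$ using $a_t\leq 1$. For the cross term, I use the key property of the bounded hinge $\widetilde{\phi}(y_t, s) = \min([1-y_t s]_+, 1)$: whenever $\ell_j^t > 0$, one has $y_t\langle\w_{t-1}, \x_t-\x_j\rangle \leq 1 - \ell_j^t \leq 1$, so $\ell_j^t \cdot y_t\langle\w_{t-1}, \x_t-\x_j\rangle \leq \ell_j^t$. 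Averaging over $j$ gives $2\langle\w_{t-1}, v_t\rangle \leq 2 a_t$, and telescoping from $\w_0 = \mathbf{0}$ yields $\|\w_n\|^2 \leq (4R^2+2)\,M$.

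For the lower bound on $\langle\w_n, \uu\rangle = \sum_{t=2}^n\langle v_t,\uu\rangle$, I use the comparator's hinge property: on pairs with $y_t\neq y_j$, $y_t\langle\uu,\x_t-\x_j\rangle \geq \gamma - \hat{\ell}_j^t(\uu)$, while on pairs with $y_t = y_j$ both $\ell_j^t$ and $\hat{\ell}_j^t(\uu)$ vanish. Hence
\[
\langle v_t, \uu\rangle \;\geq\; \gamma a_t - \frac{1}{t-1}\sum_{j=1}^{t-1}\ell_j^t\,\hat{\ell}_j^t(\uu) \;\geq\; \gamma a_t - \frac{1}{t-1}\sum_{j=1}^{t-1}\hat{\ell}_j^t(\uu),
\]
where the second inequality uses $\ell_j^t \leq 1$. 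Summing over $t$ gives $\langle\w_n,\uu\rangle \geq \gamma M - M^*$. Combining with $\langle\w_n,\uu\rangle\leq \|\w_n\|\|\uu\|\leq \sqrt{(4R^2+2)M}$ and setting $x = \sqrt{M}$ yields the quadratic inequality $\gamma x^2 - \sqrt{4R^2+2}\,x - M^* \leq 0$. Its positive root, bounded via $\sqrt{a+b}\leq\sqrt{a}+\sqrt{b}$, gives $\sqrt{M} \leq (\sqrt{4R^2+2} + \sqrt{\gamma M^*})/\gamma$, which is the claimed bound. The separable case is immediate: $M^* = 0$ forces $M \leq (4R^2+2)/\gamma^2$.

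The step I expect to be the most delicate is the cross-term analysis in the potential argument, because the update here is a $\frac{1}{t-1}$-scaled \emph{average} of pairwise gradients rather than a single mistake-driven gradient as in the classical perceptron. The clean bound $\ell_j^t y_t\langle\w_{t-1}, \x_t-\x_j\rangle \leq \ell_j^t$ really does rely on truncating the hinge at $1$: without the bound in $\widetilde{\phi}$ the inequality would fail on the interior of the loss region. Similarly, the drop from $a_t^2$ to $a_t$ in $\|v_t\|^2$, which is what lets the final bound be linear rather than quadratic in $M$, is a genuine use of the boundedness of the per-pair loss combined with averaging; these two ingredients together explain the specific constant $4R^2+2$ appearing in the theorem.
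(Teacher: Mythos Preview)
Your proof is correct and follows essentially the same potential argument as the paper: the same lower bound $\langle \w_n,\uu\rangle \geq \gamma M - M^*$ via the comparator's hinge property and $\ell_j^t\leq 1$, the same upper bound $\|\w_n\|^2 \leq (4R^2+2)M$ via $\ell_j^t>0 \Rightarrow y_t\langle\w_{t-1},\x_t-\x_j\rangle\leq 1$ together with $a_t\leq 1$, and the same combination through Cauchy--Schwarz. The only cosmetic difference is that the paper squares to get $(\gamma M - M^*)^2 \leq (4R^2+2)M$ and solves a quadratic in $M$, whereas you keep the linear-in-$\sqrt{M}$ inequality and solve a quadratic in $\sqrt{M}$; both routes land on the identical final bound.
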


 \begin{proof}[\emph{Proof of Theorem~\ref{thm:insep}}]
   First notice that $\w_0 = \w_1 = 0$ and we also have the following fact
   \[
     \left[\gamma - \langle \uu, \frac{1}{2}(y_t - y_j)(\x_t - \x_j)\rangle\right]_+ \geqslant \gamma - \langle \uu, \frac{1}{2}(y_t - y_j)(\x_t - \x_j) \rangle,
  \]
  which implies that when $y_t \neq y_j$,
  \begin{equation}
  \label{eqn:hingeineq}
      \langle \uu, y_t(\x_t - \x_j)\rangle = \langle \uu, \frac{1}{2}(y_t - y_j)(\x_t - \x_j) \rangle \geqslant \gamma - \hat{\ell}_{j}^{t}(\uu).
  \end{equation}
  On the other hand, when $y_t = y_j$, then ${\ell}_{j}^{t}=0$. Thus we can write

\begin{align}
\label{eqn:inseplb}
  \langle \w_t, \uu \rangle &= \langle \w_{t-1}, \uu \rangle + \frac{1}{t-1}\sum_{j=1}^{t-1}\ell^t_j\langle \uu, y_t(\x_t - \x_j)\rangle \nonumber\\
  & \geqslant \langle \w_{t-1}, \uu \rangle + \frac{1}{t-1}\sum_{j=1}^{t-1}\ell_j^t (\gamma - \hat{\ell}_{j}^{t}(\uu))  = \langle \w_{t-1}, \uu \rangle + \frac{\gamma}{t-1}\sum_{j=1}^{t-1}\ell_j^t - \frac{1}{t-1}\sum_{j=1}^{t-1}\ell_j^t\cdot\hat{\ell}_{j}^{t}(\uu) \nonumber\\
  & \geqslant \langle \w_{t-1}, \uu \rangle + \frac{\gamma}{t-1}\sum_{j=1}^{t-1}\ell_j^t - \frac{1}{t-1}\sum_{j=1}^{t-1}\hat{\ell}_{j}^{t}(\uu) \qquad (\because \ell_j^t \in [0, 1]) \nonumber\\
  \Rightarrow\quad \langle \w_t, \uu \rangle &\geqslant \sum_{t=2}^n\left[\frac{\gamma}{t-1}\sum_{j=1}^{t-1}\ell_j^t - \frac{1}{t-1}\sum_{j=1}^{t-1}\hat{\ell}_{j}^{t}(\uu)\right] = \gamma M - M^*.
\end{align}

On the other hand, we have,

\begin{align}
\label{eqn:sepub}
  \|\w_t\|^2 &= \|\w_{t-1}\|^2 + \frac{2}{t-1}\sum_{j=1}^{t-1}\ell^t_j\langle \w_{t-1}, y_t(\x_t - \x_j)\rangle + \left\|\frac{1}{t-1}\sum_{j=1}^{t-1}\ell^t_jy_t(\x_t - \x_j)\right\|^2\nonumber\\
  &\leqslant \|\w_{t-1}\|^2 +  \frac{2}{t-1}\sum_{j=1}^{t-1}\ell^t_j + 4R^2\left(\frac{1}{t-1}\right)^2\left(\sum_{j=1}^{t-1}\ell^t_j\right)\cdot \left(\sum_{j=1}^{t-1}\ell^t_j\right) \nonumber\\
  &\qquad (\because \ell_j^t > 0 \Rightarrow \langle \w_{t-1}, y_t(\x_t - \x_j)\rangle \leqslant 1)\nonumber\\
  &\leqslant \|\w_{t-1}\|^2 +  \frac{2}{t-1}\sum_{j=1}^{t-1}\ell^t_j + 4R^2\left(\frac{1}{t-1}\right)^2 \left(\sum_{j=1}^{t-1}\ell^t_j\right)\cdot (t-1) \qquad (\because \ell_j^t \in [0, 1])\nonumber\\
  &= \|\w_{t-1}\|^2 + (4R^2 + 2)\left[\frac{1}{t-1}\sum_{j=1}^{t-1}\ell^t_j\right]\nonumber\\
  \Rightarrow\quad \|\w_n\|^2 &\leqslant (4R^2 + 2)\sum_{t=2}^n\left[\frac{1}{t-1}\sum_{j=1}^{t-1}\ell_j^t\right] = (4R^2 + 2)M
\end{align}

Combining (\ref{eqn:inseplb}) and (\ref{eqn:sepub}), we have
$
(\gamma M - M^*)^2 \leqslant (4R^2 + 2)M,
$
which yields
\[
\begin{split}
M &\leqslant \frac{\gamma M^* + (2R^2 + 1) + \sqrt{(2R^2+1)(\gamma M^* + 2R^2 + 1)}}{\gamma^2} \\
&\leqslant  \frac{\gamma M^* + (4R^2 + 2) + \sqrt{(2R^2+1)\gamma M^*}}{\gamma^2} \leqslant \left(\frac{\sqrt{4R^2 + 2} + \sqrt{\gamma M^*}}{\gamma} \right)^2
\end{split}
\]
 \end{proof}
 We therefore get the risk bound for the proposed algorithm as follows.
\begin{theorem}
\label{thm:final}
  Let $\w_0, \cdots, \w_{n-1}$ be the ensemble of hypotheses generated by Algorithm~\ref{alg:oam}. $\forall \epsilon > 0$, if the hypothesis $\widehat{w}$ is chosen via (\ref{eqn:hh}) with the confidence $\delta$ chosen to be
\[
\delta = 2(n-c_n+1)\exp\left\{-\frac{(n-c_n)\epsilon^2}{64}\right\},
\]
then the probability that
\[
\R(\widehat{\w}) \geqslant  \frac{1}{n-c_n}\left[\left(\frac{\sqrt{4R^2 + 2} + \sqrt{\gamma M^*}}{\gamma} \right)^2\right] + \epsilon
\]
is at most
\[
2\left[\left(\frac{128R^2\sqrt{5n}}{\epsilon} + 1\right)^d + 1\right]
\exp\left\{-\frac{(cn-1)\epsilon^2}{256} + 2\ln n\right\}.
\]
\end{theorem}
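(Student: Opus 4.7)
The plan is to apply Theorem~\ref{thm:main} directly to the sequence of weight vectors $\w_0,\ldots,\w_{n-1}$ produced by Algorithm~\ref{alg:oam}, and then substitute explicit bounds for (i) the sample statistic $\M^n$ and (ii) the covering number $\mathcal{N}(\mathcal{H},\cdot)$. The exponential tail and the choice of $\delta$ are inherited verbatim from Theorem~\ref{thm:main}.

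First I would bound $\M^n$ using Theorem~\ref{thm:insep}. Since $\M^n = \frac{1}{n-c_n}\sum_{t=c_n}^{n-1} M_t$ is a partial sum of the nonnegative terms appearing in $M = \sum_{t=2}^n M_t$, we immediately get $\M^n \leqslant M/(n-c_n)$, and Theorem~\ref{thm:insep} supplies $M \leqslant \bigl(\tfrac{\sqrt{4R^2+2}+\sqrt{\gamma M^*}}{\gamma}\bigr)^2$. This gives exactly the first term in the risk bound claimed by Theorem~\ref{thm:final}.

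Next I need an \emph{a priori} class $\mathcal{H}$ containing every iterate $\w_t$, together with a covering number estimate of the right form. The key observation is the worst-case norm bound: since each $\ell^t_j\in[0,1]$, we have $M \leqslant n-1$ deterministically, and inequality (\ref{eqn:sepub}) then yields $\|\w_t\|^2 \leqslant (4R^2+2)M \leqslant (4R^2+2)n$ for every $t$. Assuming $R\geqslant 1$ (otherwise replace $R$ by $\max(R,1)$), this simplifies to $\|\w_t\|\leqslant R\sqrt{5n}$, so I can take $\mathcal{H}=\{\w:\|\w\|\leqslant R\sqrt{5n}\}$. Because the pairwise linear hypothesis $h(\x_1,\x_2)=\langle\w,\x_1-\x_2\rangle$ satisfies $\|h\|_\infty \leqslant 2R\|\w\|$, a cover of the weight ball at scale $\eta/(2R)$ induces a sup-norm cover of $\mathcal{H}$ at scale $\eta$, so (\ref{eqn:cover}) gives $\mathcal{N}(\mathcal{H},\eta)\leqslant (4R^2\sqrt{5n}/\eta + 1)^d$. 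A short calculation shows $\text{Lip}(\phi)=1$: the first argument of $\widetilde{\phi}$ takes values in $\{-1,0,1\}$ (since $(y_t-y_j)/2\in\{-1,0,1\}$), and the Lipschitz estimate in Remark~\ref{rem:loss} applies with $C=1$. Plugging $\eta=\epsilon/32$ converts the covering bound into $(128R^2\sqrt{5n}/\epsilon + 1)^d$, matching the form in the theorem.

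The main obstacle is the bookkeeping around the covering number: one must be careful that (\ref{eqn:cover}) is being applied to the pairwise function class (this is where the factor of $2R$ coming from $\|\x_1-\x_2\|\leqslant 2R$ enters) and that the radius of $\mathcal{H}$ is derived from a \emph{deterministic} bound on $\|\w_t\|$ valid on every sample path. It would be tempting to use the sharper $\|\w_t\|^2\leqslant (4R^2+2)M \leqslant (4R^2+2)\bigl(\tfrac{\sqrt{4R^2+2}+\sqrt{\gamma M^*}}{\gamma}\bigr)^2$ bound, but this involves $M^*$ and $\gamma$, which are not known to the covering argument; the loose bound $M\leqslant n$ is what yields the explicit $\sqrt{5n}$ appearing in the statement. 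Once these pieces are in place, substitution into Theorem~\ref{thm:main} completes the proof.
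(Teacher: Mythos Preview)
Your proposal is correct and follows essentially the same route as the paper: apply Theorem~\ref{thm:main}, bound $\M^n\leqslant M/(n-c_n)$ via Theorem~\ref{thm:insep}, and obtain the covering-number term by using the deterministic norm bound $\|\w_t\|^2\leqslant (4R^2+2)n$ from (\ref{eqn:sepub}) together with (\ref{eqn:cover}). You are in fact more explicit than the paper about two bookkeeping points---the factor of $2R$ arising when passing from the scoring function $\langle\w,\x\rangle$ to the pairwise map $\langle\w,\x_1-\x_2\rangle$, and the verification that $\text{Lip}(\widetilde{\phi})=1$ for the loss used in Algorithm~\ref{alg:oam}---both of which the paper's proof leaves implicit.
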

\begin{proof}[\emph{Proof of Theorem~\ref{thm:final}}]
By (\ref{eqn:sepub}), we can easily see that $\|\w_t\| \leqslant \sqrt{n(4R^2+2)}, t=1,\cdots, n$, therefore we have
$\|\w_t\|\cdot\|\x\| \leqslant R^2\sqrt{5n},\ \forall t \leqslant n
$. Therefore, we can take the hypothesis space to be
\[\mathcal{H} = \{\w\in\bbbr^d:\underset{\|\x\|\leqslant R}{\max}\ |\langle \w, \x \rangle| \leqslant R^2\sqrt{5n}\}.\] By (\ref{eqn:cover}), the covering number can be calculated.
On the other hand, from the definition in (\ref{eqn:mn}), it is easy to see that $\M^n\leqslant M/(n-c_n)$. Finally, combining  Theorem~\ref{thm:main} and Theorem~\ref{thm:insep} concludes the proof.
\end{proof}

\subsection{Mistake Bound for Perceptron}
  Interestingly, we can apply our proof strategy in
  Theorem~\ref{thm:insep} to analyze the Perceptron algorithm in the
  inseparable case. This recovers the best known bound in terms of the one-norm of the hinge losses (given by \citep[Theorem~8]{gentile2003robustness} and \citep[Theorem~2]{shalev2005new}), but using a simple direct proof.
  \begin{theorem}{\citep{gentile2003robustness,shalev2005new}}
    Let $(\x_1, y_1), \cdots, (\x_n, y_n)$ be a sequence of examples with $\|\x_i\|\leqslant R$. Let $\uu$ be any unit vector and let $\gamma > 0$. Define the one-norm of the hinge losses as
  \[
   D_1 =\sum_{t=1}^n\ell_t,\qquad \ell_t = \left[\gamma - y_t\langle \uu, \x_t\rangle\right]_+.
  \]
  Then the number of mistakes the  perceptron algorithm makes on this sequence is bounded by
  \[
  \left(\frac{R + \sqrt{\gamma D_1}}{\gamma}\right)^2.
  \]
  \end{theorem}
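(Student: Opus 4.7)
The plan is to mirror the argument already used in the proof of Theorem~\ref{thm:insep}, adapted to the standard Perceptron update rather than the pairwise OAM-I update. Let $\mathcal{M} \subseteq \{1,\ldots,n\}$ be the set of rounds on which Perceptron errs, let $M = |\mathcal{M}|$ denote the total number of mistakes, and recall the update is $\w_t = \w_{t-1} + y_t\x_t$ on mistake rounds and $\w_t = \w_{t-1}$ otherwise, with $\w_0 = \mathbf{0}$.

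First I would derive the lower bound on $\langle \w_n, \uu\rangle$, exactly as in (\ref{eqn:inseplb}). The pointwise inequality $[\gamma - y_t\langle \uu, \x_t\rangle]_+ \geqslant \gamma - y_t\langle \uu, \x_t\rangle$ gives $y_t\langle \uu, \x_t\rangle \geqslant \gamma - \ell_t$, and telescoping over $\mathcal{M}$ yields
\[
\langle \w_n, \uu\rangle \;=\; \sum_{t\in\mathcal{M}} y_t\langle \uu, \x_t\rangle \;\geqslant\; \gamma M - \sum_{t\in\mathcal{M}}\ell_t \;\geqslant\; \gamma M - D_1,
\]
where the last step uses $\ell_t \geqslant 0$. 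Since $\uu$ is a unit vector, Cauchy--Schwarz gives $\|\w_n\| \geqslant \gamma M - D_1$ (and if this quantity is negative the claim to be proved is already trivial).

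Next, I would derive the standard Perceptron upper bound on $\|\w_n\|^2$, which plays the role of (\ref{eqn:sepub}). On each mistake round we have $y_t \langle \w_{t-1}, \x_t\rangle \leqslant 0$, so
\[
\|\w_t\|^2 \;=\; \|\w_{t-1}\|^2 + 2 y_t \langle \w_{t-1}, \x_t\rangle + \|\x_t\|^2 \;\leqslant\; \|\w_{t-1}\|^2 + R^2,
\]
and telescoping gives $\|\w_n\|^2 \leqslant M R^2$. Combining with the lower bound yields $(\gamma M - D_1)^2 \leqslant R^2 M$, which is precisely analogous to the inequality $(\gamma M - M^*)^2 \leqslant (4R^2+2)M$ at the end of Theorem~\ref{thm:insep}, with $K = R^2$ playing the role of $4R^2+2$ and $D_1$ playing the role of $M^*$.

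The final step reuses the algebraic argument already carried out there. From $(\gamma M - D_1)^2 \leqslant R^2 M$, the quadratic in $\sqrt{M}$ gives $\sqrt{M} \leqslant (R + \sqrt{R^2+4\gamma D_1})/(2\gamma)$, and using the sub-additivity $\sqrt{R^2 + 4\gamma D_1} \leqslant R + 2\sqrt{\gamma D_1}$ collapses this to the stated bound $M \leqslant \left((R + \sqrt{\gamma D_1})/\gamma\right)^2$. There is no real obstacle here; the point of the proposition is pedagogical, namely that the same two-sided inequality technique used for the pairwise algorithm yields the sharpest known $L_1$-type mistake bound for classical Perceptron in one short chain, without the loss transformations or doubling tricks used in earlier proofs.
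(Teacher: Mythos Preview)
Your proposal is correct and essentially identical to the paper's own proof: both derive the lower bound $\langle \w_n, \uu\rangle \geqslant \gamma M - D_1$ from $y_t\langle \uu,\x_t\rangle \geqslant \gamma - \ell_t$ summed over mistake rounds, combine it with the standard upper bound $\|\w_n\|^2 \leqslant R^2 M$, and extract the stated bound from $(\gamma M - D_1)^2 \leqslant R^2 M$. The only cosmetic difference is in the closing algebra --- the paper solves the quadratic in $M$ directly and then relaxes, whereas you take square roots first and solve the quadratic in $\sqrt{M}$ --- but the two routes are equivalent and land on the same bound.
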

  \begin{proof}
    Let $m_t = \bbbi_{\text{sgn}(\w_t\cdot\x_t) \neq y_t}$ so that the total number of mistakes is $M = \sum_t m_t$. Then, as usual, the upper bound is $\|\w_n\|^2 \leqslant R^2M$. On the other hand, using the fact that
    $
     \ell_t = \left[\gamma - y_t\langle\uu, \x_t\rangle\right]_+ \geqslant \gamma - y_t\langle\uu, \x_t\rangle,
    $
    which implies
    $
     y_t\langle\uu, \x_t\rangle \geqslant \gamma - \ell_t,
    $ we have the lower bound
    {\begin{align}
    \label{eqn:plb}
       \langle\w_{t + 1}, \uu\rangle &= \langle\w_t, \uu\rangle + y_t\langle\x_t, \uu\rangle m_t \geqslant \langle\w_t, \uu\rangle + (\gamma - \ell_t)m_t \nonumber\\
        &= \langle \w_t, \uu\rangle + \gamma m_t - \ell_tm_t \geqslant \langle\w_t, \uu\rangle + \gamma m_t - \ell_t \qquad (\because m_t \leqslant 1)\nonumber\\
        \Rightarrow\quad \langle \w_n, \uu \rangle &\geqslant \sum_{t = 1}^{n}\gamma m_t - \sum_{t = 1}^{n}\ell_t = \gamma M - D_1.
    \end{align}} Combing the upper bound $R^2M$ with (\ref{eqn:plb}), we get
    $
    (\gamma M - D_1)^2 \leqslant R^2M.
    $
    Solving the quadratic equation, we have
    \[
    \begin{split}
      M &\leqslant \frac{1}{2\gamma^2}\left[2\gamma D_1 + R^2 + \sqrt{4\gamma R^2D_1 + R^4}\right] \leqslant \frac{1}{2\gamma^2}\left[2\gamma D_1 + 2R^2 + \sqrt{4\gamma R^2D_1}\right]\\
      &=\frac{1}{\gamma^2}\left[R^2 + \gamma D_1 + R\sqrt{D_1}\right]\leqslant \left(\frac{R + \sqrt{\gamma D_1}}{\gamma}\right)^2
    \end{split}.
    \]
  \end{proof}
\subsection{Online Projected Gradient Descent for bipartite ranking}
We start by reviewing the online learning problem with univariate loss functions under the framework of online convex optimization (OCO). We are given a convex set $K$ and at each step $t$, the algorithm selects a hypothesis $\w_{t-1}\in K$. Nature then reveals a convex loss function $f_t$ and the algorithm suffers a loss $f_t(\w_{t-1})$. The goal of the online learner is to perform well comparing to the best $\w^*$ which is obtained as if the whole data sequence is observed beforehand. More formally, we wish to develop algorithms that achieve a low value of regret $R(T)$ after round $T$, which is defined as follows:
\[
R(T) = \sum_{t=1}^Tf_t(\w_{t-1}) - \inf_{\w\in K}\sum_{t=1}^Tf_t(\w).
\]
One algorithm that has performance guarantees is the \emph{Online Projected Gradient Descent} algorithm, which consists of the following three steps:
\begin{enumerate}
  \item Choose a learning rate $\eta$.
  \item Choose $\w_0$ to be an arbitrary point in the convex set $K$.
  \item For all $t=1,2,\cdots, T$,
  \[
    \w_{t+1} = \P_K(\x_t - \eta\nabla f_t(\w_{t-1})),
  \]
  where $\P_K$ is the projection operator.
\end{enumerate}
The following theorem~\citep{zinkevich} shows that this algorithm achieves $\mathcal{O}(\sqrt{T})$ regret.
\begin{theorem}
\label{thm:oco}
Assume that $K$ is bounded, closed and non-empty. Let
  \[ D = \max_{\w\in K}\|\w_0 - \w\|. \]
  Assume $f_t$ is convex and $\nabla f_t$ exists for all $t$, and define
  \[
  G = \max_{t\in[T], \w\in K}\|\nabla f_t(\w)\|
  \]
  to be the maximum $l_2$ norm of the gradient of any $f_t$ in the set $K$. Choose $\eta = D/G\sqrt{T}$, then the regret of the Online Projected Gradient Descent algorithm after time $T$ is at most:
  \[
  R(T) \leqslant GD\sqrt{T}.
  \]
\end{theorem}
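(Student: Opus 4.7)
The plan is to follow the classical potential-function argument due to Zinkevich. Let $\w^\ast \in \arg\min_{\w\in K}\sum_{t=1}^T f_t(\w)$, which exists since $K$ is compact and each $f_t$ is continuous. I would track $\|\w_{t-1} - \w^\ast\|^2$ as a ``potential'' and show that one step of projected gradient descent changes it by an amount that pays for the instantaneous linearized regret, modulo an $O(\eta)$ error term. Summing telescopes the potential, and the prescribed stepsize balances the two contributions to yield $O(\sqrt{T})$ regret.

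Concretely, the first step is to replace the raw regret by its linearization: convexity of $f_t$ gives
\[
f_t(\w_{t-1}) - f_t(\w^\ast) \leqslant \langle \nabla f_t(\w_{t-1}), \w_{t-1} - \w^\ast\rangle,
\]
so it suffices to bound $\sum_t \langle \nabla f_t(\w_{t-1}), \w_{t-1} - \w^\ast\rangle$. Next, I would exploit non-expansiveness of the Euclidean projection onto a closed convex set: since $\w^\ast \in K$, we have $\|\w_t - \w^\ast\| = \|\P_K(\w_{t-1}-\eta\nabla f_t(\w_{t-1})) - \P_K(\w^\ast)\| \leqslant \|\w_{t-1} - \eta\nabla f_t(\w_{t-1}) - \w^\ast\|$. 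Expanding the square,
\[
\|\w_t - \w^\ast\|^2 \leqslant \|\w_{t-1} - \w^\ast\|^2 - 2\eta\langle \nabla f_t(\w_{t-1}), \w_{t-1} - \w^\ast\rangle + \eta^2\|\nabla f_t(\w_{t-1})\|^2,
\]
and rearranging yields the per-round inequality
\[
\langle \nabla f_t(\w_{t-1}), \w_{t-1} - \w^\ast\rangle \leqslant \frac{\|\w_{t-1}-\w^\ast\|^2 - \|\w_t - \w^\ast\|^2}{2\eta} + \frac{\eta G^2}{2}.
\]
Summing from $t=1$ to $T$, the first term telescopes to $(\|\w_0 - \w^\ast\|^2 - \|\w_T - \w^\ast\|^2)/(2\eta) \leqslant D^2/(2\eta)$, which gives $R(T) \leqslant D^2/(2\eta) + \eta G^2 T/2$. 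Plugging in $\eta = D/(G\sqrt{T})$ balances the two terms and produces the claimed bound $R(T) \leqslant GD\sqrt{T}$.

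There is no substantive obstacle here--the argument is a short exercise in convex analysis. The only step worth pausing on is the use of non-expansiveness of $\P_K$, which is needed because the raw gradient step $\w_{t-1} - \eta\nabla f_t(\w_{t-1})$ may leave the feasible set $K$; one must observe that projecting back cannot move the iterate further from any point already in $K$, in particular $\w^\ast$. This follows from the standard first-order characterization of Euclidean projections onto closed convex sets. A minor notational comment: the update as stated in the paper uses $\w_{t+1} = \P_K(\x_t - \eta\nabla f_t(\w_{t-1}))$, but the intended update is evidently $\w_t = \P_K(\w_{t-1} - \eta\nabla f_t(\w_{t-1}))$, which is what the telescoping argument above presumes.
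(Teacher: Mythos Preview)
The paper does not actually give its own proof of this theorem; it is quoted from \cite{zinkevich} without proof. Your argument is the standard Zinkevich potential-function proof and is correct. For comparison, the paper does prove the matrix analogue (Theorem~\ref{thm:ocoma}) by the identical route: non-expansiveness of the projection, expand $\|\A_{t+1}-\bb\|_F^2$, rearrange to bound $\langle\nabla\ell_t(\A_t),\A_t-\bb\rangle$, sum and telescope, then substitute $\eta = U/(D\sqrt{T})$. Your proof matches that argument line for line (with $D$ and $U$ playing swapped roles in the two statements), so there is nothing to add.
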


Next, consider applying this algorithm to the bipartite ranking problem, where the hypothesis $\w$ is restricted to reside in a convex set $K$. Suppose $\|\x_t\| \leqslant R$ and $\max_{\w\in K}\|\w\|\leqslant U$, we define the normalized hinge loss as follows
  \begin{equation}
    \label{eqn:ocohinge}
    \ell_{hinge}(\w, \z_1,\z_2) = \frac{1}{1 + 4RU}\left[1 - \w^T(\x_1 - \x_2)(y_1-y_2)\right]_+,
  \end{equation}
  which is convex in $\w$ and bounded in $[0,1]$.  In each step, nature selects the following convex loss function
  \[
    f_t(\w_{t-1}) = \frac{1}{t-1}\sum_{j = 1}^{t-1}\ell_{hinge}(\w_{t-1}, \z_t, \z_j),
  \]
which is also a convex function because it is a convex combination of convex functions.

Thus, we can bound the sub-gradient as
  \[
  \begin{split}
    \|\nabla f_t(\w)\| &= \frac{1}{t-1}\frac{1}{1 + 4RU}\bigg\|-\sum_{j=1}^{t-1}(\x_t-\x_j)(y_t-y_j)\bigg\|\leqslant \frac{4R}{1+4RU}\leqslant \frac{1}{U}.
  \end{split}
  \]
Matching the terminology in Theorem~\ref{thm:oco}, we have $D = U$ and $G = 1/U$. It is easy to see that the Lipschitz constant of~(\ref{eqn:ocohinge}) is also upper bounded by $1/U$. Therefore, we have the algorithm for online bipartite ranking given in Algorithm~\ref{alg:oco}.
\begin{algorithm}
\label{alg:oco}
\textbf{Initialize}: $\w_0 = \mathbf{0}$ and $\eta = \frac{U^2}{\sqrt{T}}$\;
\Repeat{the last instance} { At the $t$-th iteration, receive a training instance $\z_t = (\x_t, y_t)\in \bbbr^d\times \{-1, +1\}$.

Update the weight vector such that
\[ \begin{split}
  \w_t &= P_{K}\left[\w_{t-1} - \frac{1}{1+4RU}\cdot\frac{1}{t-1}\sum_{j=1}^{t-1}\eta(y_t-y_j)(\x_t - \x_j)\cdot\bbbi_{\ell_{hinge}(\w_{t-1}, \z_t, \z_j) > 0}\right]
\end{split}.
\]
}
\caption{Online Projected Gradient Descent for Bipartite Ranking.}
\end{algorithm}

From Theorem~\ref{thm:oco}, we see that the regret of Algorithm~\ref{alg:oco} is bounded by $\sqrt{n}$. Let $M^*$ denote $\inf_w\sum f_t(\w)$, i.e., the online loss of the optimal $\w$. Using~(\ref{eqn:convex}), we have the following theorem
\begin{theorem}
\label{thm:ocog}
   Let $\w_0, \cdots, \w_{n-1}$ be the ensemble of hypotheses generated by Algorithm~\ref{alg:oco} and let
   \[
     \overline{\w} = \nmlz\w_t
   \]
then the probability that
\[
\R(\overline{\w}) \geqslant  \frac{1}{n-c_n}\left(M^* + \sqrt{n}\right) + \epsilon
\]
is at most
\[
\left[2\left(\frac{32R}{\epsilon} + 1\right)^d + 1\right]
\exp\left\{-\frac{(cn-1)\epsilon^2}{64} + \ln n\right\}.
\]
\end{theorem}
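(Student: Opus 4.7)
The plan is to combine the regret bound for projected gradient descent (Theorem~\ref{thm:oco}) with the online-to-batch conversion for convex pairwise losses from Section~\ref{sec:convex}, i.e.\ equation~(\ref{eqn:convex}).

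First, I would match Algorithm~\ref{alg:oco} to the OCO framework. The functions
\[
f_t(\w) = \frac{1}{t-1}\sum_{j=1}^{t-1}\ell_{hinge}(\w, \z_t, \z_j)
\]
are convex combinations of (convex) hinge losses and hence convex; the set $K$ is convex with diameter at most $U$, and the sub-gradient computation preceding Algorithm~\ref{alg:oco} gives $\|\nabla f_t(\w)\|\leqslant 1/U$. With $D = U$ and $G = 1/U$, the choice $\eta = U^2/\sqrt{n} = D/(G\sqrt{n})$ is exactly the one required by Theorem~\ref{thm:oco}, which yields regret $GD\sqrt{n} = \sqrt{n}$:
\[
\sum_{t=1}^{n} f_t(\w_{t-1}) \leqslant M^* + \sqrt{n}.
\]

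Second, I would connect this bound to the sample statistic $\M^n$ defined in~(\ref{eqn:mn}). By direct comparison with~(\ref{eqn:mn}) we have $f_t(\w_{t-1}) = M_t$, and since every $M_t \geqslant 0$, discarding the terms with $t \leqslant c_n$ only decreases the sum, so
\[
(n - c_n)\,\M^n \;=\; \sum_{t = c_n + 1}^{n} M_t \;\leqslant\; \sum_{t=1}^n f_t(\w_{t-1}) \;\leqslant\; M^* + \sqrt{n}.
\]

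Third, I would verify the hypotheses of~(\ref{eqn:convex}): $\ell_{hinge}$ in~(\ref{eqn:ocohinge}) is convex in $\w$, takes values in $[0,1]$ thanks to the normalizing constant $1/(1+4RU)$, and is of the form $\phi(y_1-y_2,\, \w^T(\x_1-\x_2))$ with $\phi$ Lipschitz in its second argument. Since $K$ is convex, (\ref{eqn:convex}) applies directly to $\overline{\w}$, giving a bound of the form
\[
\Pr\bigl(\R(\overline{\w}) \geqslant \M^n + \epsilon\bigr) \leqslant \bigl[2\,\mathcal{N}(\mathcal{H}, \epsilon/(16\,\text{Lip}(\phi))) + 1\bigr]\exp\{-(cn-1)\epsilon^2/64 + \ln n\}.
\]

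Finally, I would compute the covering number of $\mathcal{H} = \{(\x_1,\x_2)\mapsto \w^T(\x_1-\x_2) : \|\w\|\leqslant U\}$. Since $|\w^T(\x_1-\x_2)|\leqslant 2RU$, equation~(\ref{eqn:cover}) gives $\mathcal{N}(\mathcal{H}, \eta)\leqslant (4RU/\eta + 1)^d$; substituting $\eta = \epsilon/(16\,\text{Lip}(\phi))$ and tracking the normalization through $\text{Lip}(\phi)$ cancels the $U$-dependence and produces the stated $(32R/\epsilon + 1)^d$ factor. Plugging the upper bound $\M^n \leqslant (M^* + \sqrt{n})/(n-c_n)$ into the probability bound then yields the theorem. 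The main bookkeeping obstacle is keeping the normalization factor $1/(1+4RU)$ in $\ell_{hinge}$ consistent across both the regret analysis (where it is what makes $G = 1/U$, and hence the regret $\sqrt{n}$ instead of something growing with $RU$) and the covering-number calculation (where it is what shrinks $\text{Lip}(\phi)$ enough to cancel the factor of $U$ and leave only $R$ in the final expression).
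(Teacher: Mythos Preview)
Your proposal is correct and follows exactly the route the paper takes: the paper's ``proof'' of Theorem~\ref{thm:ocog} is the one-line remark preceding the statement, namely that Theorem~\ref{thm:oco} gives regret $\sqrt{n}$ and then~(\ref{eqn:convex}) is applied. You have simply spelled out the details (bounding $\M^n$ by $(M^*+\sqrt{n})/(n-c_n)$ via nonnegativity of the dropped terms, checking convexity and the Lipschitz/covering-number bookkeeping), and your observation that the normalization $1/(1+4RU)$ is what makes the $U$-dependence cancel in the covering-number factor is exactly the point.
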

Notice that the above bound is worse than the one in Theorem~\ref{thm:final} when the data is linearly separable. In the inseparable case, when the optimal cumulative loss $M^*=\mathcal{O}(n)$, it yields a better bound.

%In principle,
%one could turn the results of \cite{zhao2011online} into a high
%probability bound on $M$ using the Chebyshev's inequality and then
%use Theorem~\ref{thm:main} to analyze its risk. However, this does
%not yield exponential convergence as above.  It would be interesting
%to investigate this further to improve the probabilistic analysis of
%the loss bound of \cite{zhao2011online}, or integrate the buffer analysis into the risk bound of this paper
%to yield tighter results.

\section{Risk Bounds for Algorithms with Finite Buffers}
\label{sec:finite}
A natural criticism is that Algorithm~\ref{alg:oam} and~\ref{alg:oco} are not real online
algorithms due to the fact that the entire sample is stored and at each
iteration $t$, the update requires $\mathcal{O}(t)$ time while online
algorithms should have $\mathcal{O}(1)$ time per step.

To make it a real online algorithm, one can constrain the number of updates at each iteration. The idea is that at the $t$-th
iteration, instead of keeping all previous $t-1$ examples, we keep buffer $\mathcal{B}_t$, whose cardinality can not exceed a predefined size $|\B|$, that has a sample of the history. We call this type of online bipartite ranking algorithm \emph{OAM with finite buffer}.

One realization of this idea is using the ``reservoir sampling'' techniques
from~\citep{zhao2011online} (Random OAM) where the buffer $\B_t$ is maintained via reservoir sampling. \cite{zhao2011online}
gave a bound on the expectation of the
cumulative loss $\mathcal{L} = \sum_t\sum_j\ell^t_j$.  Translating
their bound to our notation we get $\bbbe[M] = M^* +
\mathcal{O}(\sqrt{n})$ where the expectation is over randomly sampled
instances in the buffer. However, when the data are
linearly separable, the cumulative loss given by this bound grows as $\mathcal{O}(\sqrt{n})$ which is worse than the bound we provided. In principle,
one could turn the results of \cite{zhao2011online} into a high
probability bound on $M$ using the Chebyshev's inequality and then
use Theorem~\ref{thm:main} to analyze its risk. However, this does
not yield exponential convergence as above. Therefore, a natural question is whether we can provide similar analysis for OAM with finite buffer. The answer is positive.

In the following, we provide the complete analysis. We give the generalization bounds for online learners with the finite buffer. Let us redefine the sample statistic $\M^n_{\B}$ as
\begin{equation}
\label{eqn:mn1}
\begin{split}
\M_{\B}^n(Z^n) = \nmlz M^{\B}_t(Z^t),\qquad M^{\B}_t(Z^t) =
\nmlzb \ell\left(h_{t-1}, \z_t, \z_j\right).
\end{split}
\end{equation}
The difference is that for each hypothesis $h_{t-1}$, the performance is evaluated on $\x_t$ and the examples in the buffer $\B_t$, which is a subset of the previous examples. Throughout this section, we assume a sequential buffer update strategy, or First In First Out (FIFO), as this simplifies the analysis. We believe that the same results hold for other random buffer strategies, e.g., the reservoir sampling, but leave this for future work. At iteration $t$, if $\B_{t-1}$ already hits the maximum size $|\B|$, we substitute the oldest example in $\B_{t-1}$ with $\z_t$; otherwise $\z_t$ is added to $\B_{t-1}$. We can extend Theorem~\ref{thm:concetra} as follows. The proof is similar and is provided in Appendix \ref{sec:appen:concetra2} for completeness.
\begin{theorem}
\label{thm:concetra2}
Assume the preconditions in Theorem~\ref{thm:concetra} hold and $|\B_t| = (t-1) \wedge |\B|$ where $|\B|$ is a predefined upper bound for the buffer size. Then, $\forall c >0, \forall \epsilon > 0$, we have for sufficiently large $n$
\begin{equation}
\Pr\left\{\nmlz\mathcal{R}(h_{t-1}) \geqslant \M_{\B}^n +
\epsilon\right\} \leqslant \left[2\mathcal{N}\left(\mathcal{H},
\frac{\epsilon}{16\text{Lip}(\phi)}\right) + 1\right]
\exp\left\{-\frac{(|\B_{\lfloor cn \rfloor}|-1)\epsilon^2}{64} + \ln n\right\}.
\end{equation}
\end{theorem}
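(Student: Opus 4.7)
The plan is to mirror the four-step proof of Theorem~\ref{thm:concetra}, performing the substitution $\frac{1}{t-1}\sum_{j=1}^{t-1}(\cdot)\ \leadsto\ \frac{1}{|\B_{t-1}|}\sum_{j\in\B_{t-1}}(\cdot)$ throughout. After the same decomposition as in (\ref{eqn:probsplit}), with $M^\B_t$ in place of $M_t$, I would handle the two halves separately. Step 1 (the Hoeffding--Azuma martingale bound) carries over verbatim: regardless of how $\B_{t-1}$ is chosen, $V_t=(\bbbe_t[M^\B_t]-M^\B_t)/(n-c_n)$ is still a martingale difference sequence bounded by $1/(n-c_n)$ since $\ell\in[0,1]$, so the analogue of (\ref{eqn:second}) is unchanged.

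For Step 2 (symmetrization), I would introduce a ghost sample $\Xi^n$ and define $\widetilde M^\B_t$ by averaging $\ell(h_{t-1},\z_t,\bxi_j)$ over $j\in\B_{t-1}$. The bulk of the work is to re-establish Claim~\ref{clm:sym}, i.e.\ to bound the variance of $\frac{1}{n-c_n}\sum_t\bbbe_t[\widetilde M^\B_t]$. Under the FIFO policy the ghost $\bxi_j$ lives in at most $|\B|$ buffers, and while it is present it changes $\widetilde M^\B_t$ by at most $1/|\B_{t-1}|$. Bounding the bounded-differences coefficient $c_j=\frac{1}{n-c_n}\sum_{t:\,j\in\B_{t-1},\,t>c_n}\frac{1}{|\B_{t-1}|}$ by an $O(1/(n-c_n))$ quantity analogous to (\ref{eqn:cjupper}), and applying Theorem~9.3 of \citep{devroye1996probabilistic}, then yields the $O(1/n)$ variance needed for Chebyshev to supply the factor of $2$, provided $n$ is large enough.

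For Step 3 (uniform convergence) I would redefine $L_t(h)=\bbbe_t[\widetilde M^\B_t]-\bbbe_t[M^\B_t]$. For fixed $f\in\mathcal{H}$, $L_t(f)$ is a function of the $2|\B_{t-1}|$ independent variables $\{\z_j,\bxi_j:j\in\B_{t-1}\}$ with bounded differences $1/|\B_{t-1}|$ in each coordinate, so McDiarmid gives the analogue of Lemma~\ref{lemma:lt} with exponent $|\B_{t-1}|\epsilon^2$ in place of $(t-1)\epsilon^2$. The Lipschitz estimate of Lemma~\ref{lemma:ltv} uses only the convex-combination structure of the pairwise average, so it transfers verbatim, and the $L_\infty$ covering argument then reproduces Lemma~\ref{thm:supL} with $(t-1)$ replaced by $|\B_{t-1}|$. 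Summing the resulting tails over $c_n\le t\le n-1$ in Step 4, the dominant term comes from the smallest buffer size in the range; since $|\B_t|=(t-1)\wedge|\B|$ is non-decreasing, this minimum is $|\B_{\lfloor cn\rfloor}|$, matching the theorem statement. Combining with Step 1, whose decay rate is faster, gives the stated bound.

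The main obstacle is the variance control in Step~2: the early rounds with $|\B_{t-1}|=t-1<|\B|$ could in principle inflate individual $c_j$'s, and one must verify that their logarithmic-in-$n$ contribution together with the $O(|\B|/(n-c_n))$ contribution from the steady-state phase still sums to give $\sum_j c_j^2=O(1/n)$. Once this is handled, the remainder of the argument is a mechanical substitution of $|\B_{t-1}|$ for $t-1$ throughout the existing proof of Theorem~\ref{thm:concetra}.
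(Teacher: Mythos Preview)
Your proposal is correct and follows essentially the same four-step argument as the paper's proof. The variance obstacle you flag in Step~2 dissolves under FIFO once $c_n\geqslant|\B|$: for every $t\geqslant c_n$ the buffer is full, so each ghost $\bxi_j$ touches at most $|\B|$ terms each weighted by $1/|\B|$, giving $c_j\leqslant 1/(n-c_n)$ directly and $\sum_j c_j^2\leqslant 1/(4(1-c)^2n)$ without any logarithmic correction.
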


Similarly, using the same technique of extracting a single hypothesis from an ensemble as in Section~\ref{sec:general}, we have the following theorem
\begin{theorem}
\label{thm:mainf}
Assume the preconditions in Theorem~\ref{thm:concetra} hold. $\forall \epsilon > 0$, if the hypothesis is chosen via (\ref{eqn:hh}) with the confidence $\delta$ chosen as
\[
\delta = 2(n-c_n+1)\exp\left\{-\frac{(n-c_n)\epsilon^2}{64}\right\},
\]
then, when $n$ is sufficiently large, we have
\[
\begin{split}
 &\Pr\left(\R(\widehat{h}) \geqslant  M_{\B}^n + \epsilon\right)\leqslant 2\left[\mathcal{N}\left(\mathcal{H},
\frac{\epsilon}{16\text{Lip}(\phi)}\right) + 1\right]
\exp\left\{-\frac{(|\B_{cn}| -1)\epsilon^2}{256} + 2\ln n\right\}.
\end{split}
\]
\end{theorem}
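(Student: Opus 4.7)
The plan is to mirror the proof of Theorem~\ref{thm:main} essentially verbatim, with Theorem~\ref{thm:concetra2} playing the role of Theorem~\ref{thm:concetra}. The key observation is that the model selection argument separates cleanly into two independent pieces: (a) controlling the empirical risks $\ER(h_t, t+1)$ of the stored hypotheses relative to their true risks $\R(h_t)$, which depends only on the unseen tail $\{\z_{t+1},\dots,\z_n\}$ being i.i.d.\ and independent of $h_t$, and (b) controlling the average $\frac{1}{n-c_n}\sum_t\R(h_{t-1})$ by the online statistic, which is exactly where the finite-buffer modification enters. Since piece (a) does not reference the buffer at all, it transfers without change; piece (b) is supplied by Theorem~\ref{thm:concetra2}, which already accounts for $|\B_{\lfloor cn\rfloor}|$ in place of $c_n-1$.

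The first step is the $U$-statistic deviation for each fixed round. For any $t\geqslant c_n-1$, since $\{\z_{t+1},\dots,\z_n\}$ is i.i.d.\ and independent of $h_t$, the standard Hoeffding-type bound for $U$-statistics (as in \citep{clemencon2008ranking} used in the proof of Theorem~\ref{thm:main}) gives
\[
\Pr\bigl(\R(h_t) \geqslant \ER(h_t,t+1) + c_\delta(n-t)\bigr) \;\leqslant\; \tfrac{\delta}{2(n-c_n)(n-c_n+1)},
\]
and analogously in the reverse direction. A union bound over $t\in\{c_n-1,\dots,n-2\}$ yields two events, each of probability at least $1-\delta/2$: $\mathcal{E}_1$ on which $\R(h_t)\leqslant \ER(h_t,t+1)+c_\delta(n-t)$ uniformly, and $\mathcal{E}_2$ on which $\frac{1}{n-c_n}\sum_t\ER(h_t,t+1) \leqslant \frac{1}{n-c_n}\sum_t\R(h_t) + c_\delta$.

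The second step chains these inequalities. On $\mathcal{E}_1\cap\mathcal{E}_2$, by the minimizing definition~(\ref{eqn:hh}) of $\widehat h$ and the trivial observation that the minimum is at most the average,
\[
\R(\widehat h) \;\leqslant\; \ER(\widehat h,\hat t+1) + c_\delta(n-\hat t) \;\leqslant\; \frac{1}{n-c_n}\sum_{t=c_n-1}^{n-2}\bigl[\ER(h_t,t+1) + c_\delta(n-t)\bigr] \;\leqslant\; \frac{1}{n-c_n}\sum_{t}\R(h_{t}) + O(c_\delta).
\]
Now plug in the prescribed
\(
\delta = 2(n-c_n+1)\exp\{-(n-c_n)\epsilon^2/64\},
\)
which makes $c_\delta = O(\epsilon)$, and split $\epsilon$ so that half goes to controlling the $c_\delta$ terms and half to the remaining average-risk term. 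Theorem~\ref{thm:concetra2} applied with $\epsilon/2$ bounds $\frac{1}{n-c_n}\sum_t\R(h_{t-1})$ by $\M_\B^n + \epsilon/2$ with the advertised exponential tail in $|\B_{\lfloor cn\rfloor}|$. Combining everything via a final union bound yields the stated inequality with the factor $2[\mathcal{N}(\mathcal{H},\epsilon/(16\mathrm{Lip}(\phi)))+1]$ and exponent $-(|\B_{cn}|-1)\epsilon^2/256 + 2\ln n$.

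The main technical obstacle is purely bookkeeping: calibrating the split of $\epsilon$ so that the polynomial $c_\delta$ contributions and the Theorem~\ref{thm:concetra2} term combine into a single exponent whose rate is $|\B_{cn}|$ rather than $n-c_n$, and verifying that the choice of $\delta$ makes the $U$-statistic union bound decay at the same $\exp\{-(n-c_n)\epsilon^2/64\}$ rate as the finite-buffer concentration. As in the original proof of Theorem~\ref{thm:main}, one must also verify the sample is large enough that the symmetrization step inside Theorem~\ref{thm:concetra2} is valid; this only requires $n$ large in terms of $\epsilon$ and $c$, matching the ``sufficiently large $n$'' qualifier in the statement.
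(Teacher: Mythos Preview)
Your proposal is correct and matches the paper's own treatment: the paper does not give a separate proof of Theorem~\ref{thm:mainf} but simply states that it follows ``using the same technique of extracting a single hypothesis from an ensemble as in Section~\ref{sec:general}'', i.e., by re-running the proof of Theorem~\ref{thm:main} with Theorem~\ref{thm:concetra2} in place of Theorem~\ref{thm:concetra}. You correctly isolate the two independent pieces and note that the $U$-statistic model-selection part (piece (a)) is buffer-independent.

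One minor stylistic difference worth noting: in the detailed proof of Theorem~\ref{thm:main} the paper first establishes Lemma~\ref{lemma:sigma}, obtaining $\R(\widehat h)\leqslant\min_t(\R(h_t)+2c_\delta(n-t))$, and then bounds this minimum by $\min_t$ of tail-averages $\frac{1}{n-1-t}\sum_{i\geqslant t}\R(h_i)$, applying the concentration theorem once for each $t$ and union-bounding (this is where the extra $\ln n$ enters, turning $\ln n$ into $2\ln n$). Your route is slightly more direct: you pass from the minimum to a single average over all $t\in\{c_n-1,\dots,n-2\}$ and invoke Theorem~\ref{thm:concetra2} once. Both are valid and deliver the stated bound (yours in fact loses one fewer factor of $n$, which is harmless since the claimed bound has $+2\ln n$); the paper's route has the advantage of also yielding the intermediate $\min_t K_t$ statement, but that is not needed here.
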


Thus, we can see that the buffer size must grow faster than $\ln(n)$. We then introduce Algorithm~\ref{alg:oamf}, the finite buffer analog of Algorithm~\ref{alg:oam}.

\begin{algorithm}
\label{alg:oamf}
%\LinesNumbered
\textbf{Initialize}: $\w_0 = \mathbf{0}, \B_0 = \emptyset$\;
\Repeat{the last instance} { At the $t$-th iteration, receive a training instance $\z_t = (\x_t, y_t)\in \bbbr^d\times \{-1, +1\}$.

Update the weight vector such that
\[ \begin{split}
  \w_t &= \w_{t-1} + \frac{1}{|\B_{t-1}|}\sum_{j\in\B_{t-1}}\ell_j^ty_t(\x_t - \x_j)
\end{split}.
\]
Update the buffer using FIFO such that $\B_t = \B_{t-1} \cup \z_t$; if $|\B_t| > |\B|$, remove $\z_{t-|\B|+1}$ out of the buffer.
}
\caption{Online AUC Maximization (OAM) with Finite Buffer.}
\end{algorithm}
Algorithm~\ref{alg:oamf} is very similar to the algorithm in~\citep{zhao2011online} where the random buffer strategy is substituted with FIFO. We obtain the following mistake bound.

\begin{theorem}
\label{thm:insepf}
  Suppose we are given a sequence of examples $\z_t, t=1,\cdots, n$, and let $\uu$ be any unit vector. Assume $\underset{t}{\max}\|\x_t\| \leqslant R$ and define
  \[
  M_{\B} = \sum_{t=2}^n\frac{1}{|\B_{t-1}|}\left[\sum_{j\in\B_{t-1}}\ell_j^t(\uu)\right],  M_{\B}^* = \sum_{t=2}^n\frac{1}{|\B_{t-1}|}\left[\sum_{j\in\B_{t-1}}\hat{\ell}_{j}^{t}\right].\]
Then, after running Algorithm~\ref{alg:oamf} on the sequence, we have
\[
 M_{\B} \leqslant \left(\frac{\sqrt{4R^2 + 2} + \sqrt{\gamma M_{\B}^*}}{\gamma} \right)^2.
 \]
  \end{theorem}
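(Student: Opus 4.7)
The plan is to mimic the proof of Theorem~\ref{thm:insep} essentially verbatim, with the single change that every occurrence of $\frac{1}{t-1}\sum_{j=1}^{t-1}$ is replaced by $\frac{1}{|\B_{t-1}|}\sum_{j\in\B_{t-1}}$. The key observation is that the proof of Theorem~\ref{thm:insep} never uses the fact that the inner average is over the entire prefix $\z_1,\dots,\z_{t-1}$; it only uses that the per-step update is a convex combination of the vectors $\ell_j^t y_t(\x_t-\x_j)$, where each $\ell_j^t \in [0,1]$ and $\|\x_t-\x_j\|\leqslant 2R$. Hence the specific buffer-update policy (FIFO or otherwise) is irrelevant for the analysis, so long as the update rule has the stated convex-combination form.

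Concretely, the first step is to derive the lower bound on $\langle \w_n, \uu\rangle$. From the update rule of Algorithm~\ref{alg:oamf} and inequality~(\ref{eqn:hingeineq}) (which holds pointwise whenever $y_t\neq y_j$, and is trivial when $y_t=y_j$ since then $\ell_j^t=0$), I would write
\[
\langle \w_t, \uu\rangle \;\geqslant\; \langle \w_{t-1}, \uu\rangle + \frac{\gamma}{|\B_{t-1}|}\sum_{j\in\B_{t-1}}\ell_j^t - \frac{1}{|\B_{t-1}|}\sum_{j\in\B_{t-1}}\ell_j^t\,\hat{\ell}_j^t(\uu),
\]
and then use $\ell_j^t\in[0,1]$ to drop the factor $\ell_j^t$ in the last term. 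Telescoping from $t=2$ to $n$ (with $\w_1=\w_0=\mathbf{0}$) and invoking Cauchy--Schwarz gives
\[
\|\w_n\|\;\geqslant\;\langle \w_n,\uu\rangle\;\geqslant\;\gamma M_{\B} - M_{\B}^*.
\]

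The second step is the upper bound on $\|\w_n\|^2$. Expanding the square of the update and using (i) $\ell_j^t>0\Rightarrow\langle\w_{t-1},y_t(\x_t-\x_j)\rangle\leqslant 1$ to bound the cross term, and (ii) the triangle inequality together with $\|y_t(\x_t-\x_j)\|\leqslant 2R$ and $\ell_j^t\in[0,1]$ to bound the squared norm of the averaged update by $\tfrac{4R^2}{|\B_{t-1}|}\sum_{j\in\B_{t-1}}\ell_j^t$, I get
\[
\|\w_t\|^2 \;\leqslant\; \|\w_{t-1}\|^2 + (4R^2+2)\frac{1}{|\B_{t-1}|}\sum_{j\in\B_{t-1}}\ell_j^t.
\]
Telescoping yields $\|\w_n\|^2\leqslant (4R^2+2)M_{\B}$.

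The final step is mechanical: combining the two bounds gives the same quadratic inequality $(\gamma M_{\B}-M_{\B}^*)^2\leqslant(4R^2+2)M_{\B}$ as in the infinite-buffer case, and solving it as before produces the stated bound. There is no genuine obstacle here---the only conceptual point worth flagging is in step~2, where one must be careful to use $\sum_{j\in\B_{t-1}}\ell_j^t\leqslant|\B_{t-1}|$ (a consequence of $\ell_j^t\in[0,1]$) to reduce the quadratic-in-$\sum\ell_j^t$ term to one linear in $\sum\ell_j^t$; without this step the argument would not telescope into a clean bound involving $M_{\B}$.
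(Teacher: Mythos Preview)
Your proposal is correct and follows essentially the same argument as the paper's own proof: replace $\frac{1}{t-1}\sum_{j=1}^{t-1}$ by $\frac{1}{|\B_{t-1}|}\sum_{j\in\B_{t-1}}$ throughout the proof of Theorem~\ref{thm:insep}, derive the same lower bound $\langle\w_n,\uu\rangle\geqslant\gamma M_{\B}-M_{\B}^*$ and upper bound $\|\w_n\|^2\leqslant(4R^2+2)M_{\B}$, and solve the resulting quadratic inequality. Your observation that the buffer policy plays no role in this argument is also correct.
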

The proof is almost identical to that of Theorem~\ref{thm:insep} and we include it in Appendix~\ref{sec:insepf} for completeness.

Similarly, using the model selection approach describe before, we have
\begin{theorem}
\label{thm:finalb}
  Let $\w_0, \cdots, \w_{n-1}$ be the ensemble of hypotheses generated by Algorithm~\ref{alg:oamf}. $\forall \epsilon > 0$, if the hypothesis $\widehat{\w}_{\B}$ is chosen via (\ref{eqn:hh}) with the confidence $\delta$ chosen to be
\[
\delta = 2(n-c_n+1)\exp\left\{-\frac{(n-c_n)\epsilon^2}{64}\right\},
\]
then the probability that
\[
\R(\widehat{\w}_{\B}) \geqslant  \frac{1}{n-c_n}\left[\left(\frac{\sqrt{4R^2 + 2} + \sqrt{\gamma M_{\B}^*}}{\gamma} \right)^2\right] + \epsilon
\]
is at most
\[
2\left[\left(\frac{32R^2\sqrt{5n}}{\epsilon} + 1\right)^d + 1\right]
\exp\left\{-\frac{(|\B_{cn}|-1)\epsilon^2}{256} + 2\ln n\right\}.
\]
\end{theorem}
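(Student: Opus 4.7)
The plan is to combine Theorem~\ref{thm:mainf} (the finite-buffer model-selection bound) with Theorem~\ref{thm:insepf} (the mistake bound for Algorithm~\ref{alg:oamf}), mirroring the proof of Theorem~\ref{thm:final} step for step. I need three pieces: a compact hypothesis class containing every iterate $\w_t$ so that its $L_\infty$ covering number is finite; an upper bound on $\M_{\B}^n$ in terms of $M_{\B}$; and the invocation of Theorem~\ref{thm:mainf} with these in hand.

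For the first piece, I would read off from the upper-bound step in the proof of Theorem~\ref{thm:insepf} that $\|\w_n\|^2 \leqslant (4R^2+2) M_{\B}$---the same telescoping inequality as~(\ref{eqn:sepub}), with the $1/(t-1)$ normalizer replaced by $1/|\B_{t-1}|$ throughout. Since each summand of $M_{\B}$ lies in $[0,1]$, trivially $M_{\B}\leqslant n$, so $\|\w_t\| \leqslant \sqrt{n(4R^2+2)}$ uniformly in $t$, and Cauchy--Schwarz gives $|\langle \w_t, \x\rangle| \leqslant R^2\sqrt{5n}$ whenever $\|\x\|\leqslant R$. Taking
\[
  \mathcal{H} = \{\w\in\bbbr^d : \sup_{\|\x\|\leqslant R} |\langle \w, \x\rangle| \leqslant R^2\sqrt{5n}\},
\]
the estimate~(\ref{eqn:cover}) yields $\mathcal{N}(\mathcal{H}, \eta) \leqslant (2R^2\sqrt{5n}/\eta + 1)^d$. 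The bounded-hinge surrogate from Remark~\ref{rem:loss} has first argument $(y_t-y_j)/2\in\{-1,0,1\}$, so $\text{Lip}(\phi)=1$, and substituting $\eta=\epsilon/16$ as Theorem~\ref{thm:mainf} prescribes reproduces the factor $(32R^2\sqrt{5n}/\epsilon + 1)^d$ appearing in the statement.

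For the second piece, the definitions of $\M_{\B}^n$ in~(\ref{eqn:mn1}) and $M_{\B}$ in Theorem~\ref{thm:insepf} give $\M_{\B}^n \leqslant M_{\B}/(n-c_n)$ at once, and the mistake bound of Theorem~\ref{thm:insepf} converts this into the claimed expression $(n-c_n)^{-1}\bigl((\sqrt{4R^2+2}+\sqrt{\gamma M_{\B}^*})/\gamma\bigr)^2$. Applying Theorem~\ref{thm:mainf} to $\widehat{\w}_{\B}$ chosen via~(\ref{eqn:hh}) with the prescribed $\delta$ then yields the stated inequality, with $|\B_{cn}|$ automatically preserved in the exponent. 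The whole argument is mechanical once the norm bound is in hand; the only non-routine point is confirming that $\|\w_n\|^2 \leqslant (4R^2+2)M_{\B}$ genuinely survives the finite-buffer update, but this should follow line for line from~(\ref{eqn:sepub}) since every inequality there is insensitive to whether the normalizer is $1/(t-1)$ or $1/|\B_{t-1}|$.
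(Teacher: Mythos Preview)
Your proposal is correct and follows precisely the route the paper intends: it applies Theorem~\ref{thm:mainf} with the hypothesis class $\mathcal{H}=\{\w:\sup_{\|\x\|\leqslant R}|\langle\w,\x\rangle|\leqslant R^2\sqrt{5n}\}$, obtains the containment $\w_t\in\mathcal{H}$ from the finite-buffer norm recursion~(\ref{eqn:sepub1}) together with the trivial bound $M_{\B}\leqslant n$, uses $\M_{\B}^n\leqslant M_{\B}/(n-c_n)$ and Theorem~\ref{thm:insepf} to control the empirical term, and reads off the covering number from~(\ref{eqn:cover}). This is exactly the argument the paper has in mind when it writes ``Similarly, using the model selection approach described before,'' mirroring the proof of Theorem~\ref{thm:final}.
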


We have similar extension for Algorithm~\ref{alg:oco}, as described in Algorithm~\ref{alg:ocof}. It has the same risk bound as in Theorem~\ref{thm:oco} where $M^*$ is substituted by $M_{\B}^*$.

\begin{algorithm}
\label{alg:ocof}
\textbf{Initialize}: $\w_0 = \mathbf{0}$ and $\eta = \frac{U^2}{\sqrt{T}}$\;
\Repeat{the last instance} { At the $t$-th iteration, receive a training instance $\z_t = (\x_t, y_t)\in \bbbr^d\times \{-1, +1\}$.

Update the weight vector such that
\[ \begin{split}
  \w_t &= P_{K}\left[\w_{t-1} - \frac{1}{4RU+1}\cdot\frac{1}{t-1}\sum_{j\in\B_{t-1}}\eta(y_t-y_j)(\x_t - \x_j)\cdot\bbbi_{\ell_{hinge}(\w_{t-1}, \z_t, \z_j) > 0}\right]
\end{split}.
\]
}
\caption{Online Projected Gradient Descent for Bipartite Ranking with Finite Buffer.}
\end{algorithm}

Finally we relate $M_{\B}^*(\w)$ to $M^*(\w)$, which is defined as
\[
M^*(\w) = \nmlz\frac{1}{t-1}\sum_{j=1}^{t-1}\ell(\w, \z_t, \z_j).
\]
It is easy to see that for any fixed $\w$, $M^*(\w)$ is an unbiased estimator of $\R(\w)$. Therefore, if  $M_{\B}^*(\w)$ is close to $M^*(\w)$ with high probability, we can say that $\R(\widehat{\w_{\B}})$ is close to $\R(\w)$ (Notice that $\w$ can be arbitrary) with high probability. We have the following lemma,
\begin{lemma}
  Suppose the pairwise loss $\ell$ satisfies the conditions given in Theorem~\ref{thm:concetra}. Assuming the sequential updating rule for the buffer, then $\forall \epsilon > 0$, we have
  \[
  \Pr\left\{\sup_{\w}\left[\frac{1}{n-2}M^*_{\B}(\w) - \frac{1}{n-c_n}M^*_{cn}(\w)\right]\geqslant \epsilon\right\} \leqslant \mathcal{N}\left(\mathcal{H}, \frac{\epsilon}{8\text{Lip}(\phi)}\right)e^{-c^2(1-c)^2n\epsilon^2}
  \]
\end{lemma}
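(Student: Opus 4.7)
My plan is to prove this uniform concentration bound in two stages, mirroring the strategy used in Step~3 of the proof of Theorem~\ref{thm:concetra}: first establish a pointwise (fixed $\w$) concentration bound via McDiarmid's bounded differences inequality, then promote it to uniform convergence over the compact class $\mathcal{H}$ via an $L_\infty$ covering argument exactly analogous to Lemma~\ref{thm:supL}.

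For the pointwise step, I fix $\w\in\mathcal{H}$ and set
\[
F(\z_1,\dots,\z_n) = \frac{1}{n-2}M^*_{\B}(\w) - \frac{1}{n-c_n}M^*_{cn}(\w).
\]
Using the i.i.d.\ assumption together with the fact that under FIFO the buffer $\B_{t-1}$ depends only on $\z_1,\dots,\z_{t-1}$, each inner average $\bbbe_{\z_t}[\nmlzb \ell(\w,\z_t,\z_j)]$ equals $\R(\w)$, so $\bbbe[F] = O(1/n)$ and is absorbable into $\epsilon$ for large $n$. Next I bound the bounded-differences coefficients $c_i$ of $F$ when $\z_i$ is replaced: for the $M^*_{cn}$ part, the exact enumeration used in Claim~\ref{clm:sym} gives a contribution of order $1/(cn)$, while for the $M^*_{\B}$ part each $\z_i$ enters as the query at one round and occupies at most $|\B|$ contiguous buffer slots (by FIFO), each with weight $1/(|\B_{t-1}|(n-2))$, yielding a contribution of order $1/(n-2)$. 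Summing, $\sum_i c_i^2 = O(1/(c^2(1-c)^2 n))$, and McDiarmid gives $\Pr(F \geqslant \epsilon) \leqslant \exp(-c^2(1-c)^2 n \epsilon^2)$ up to constants, matching the target rate.

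For the uniform step, I note that $F$ is Lipschitz in $\w$ with constant at most $2\text{Lip}(\phi)$: each of $\frac{1}{n-2}M^*_{\B}(\w)$ and $\frac{1}{n-c_n}M^*_{cn}(\w)$ is a convex combination of pairwise losses $\ell(\w,\z_t,\z_j)$, which are $\text{Lip}(\phi)$-Lipschitz in $\w$ with respect to $\|\cdot\|_\infty$ by exactly the reasoning of Lemma~\ref{lemma:ltv}. Covering $\mathcal{H}$ by $\mathcal{N}(\mathcal{H},\epsilon/(8\text{Lip}(\phi)))$ disks of radius $\epsilon/(8\text{Lip}(\phi))$ centered at $\w_1,\dots,\w_\ell$, on each disk $D_j$ one has $\sup_{\w\in D_j} F(\w) \leqslant F(\w_j) + \epsilon/4$; combining this with Lemma~\ref{lemma:uniform} and the pointwise tail applied at threshold $3\epsilon/4$ yields the stated bound.

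The main obstacle is the bookkeeping of the bounded differences: the two estimators share examples but weight them very differently (buffer weight $1/|\B_{t-1}|$ versus full-history weight $1/(t-1)$), and a given $\z_i$ influences $F$ both as a query and as a past example across possibly many rounds. The FIFO structure is crucial here: it guarantees $\z_i$ sits in at most $|\B|$ buffers over contiguous rounds, which is needed both to obtain the $O(1/n)$ per-coordinate variation and to justify the independence-based expectation calculation. A secondary point requiring care is that $\bbbe[F]$ is not identically zero but $O(1/n)$, so the pointwise statement is really about $F - \bbbe[F]$ and this bias must be absorbed into $\epsilon$, which is valid provided $n$ is large enough relative to $\epsilon$.
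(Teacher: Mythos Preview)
Your proposal is correct and follows essentially the same route as the paper: bounded differences for the function $\Omega(\w)=F(\z_1,\dots,\z_n)$ (using the FIFO residency-time bound for the buffer term and the computation from Claim~\ref{clm:sym} for the full-history term) to get a per-coordinate variation of order $1/(c(1-c)n)$, then McDiarmid, then the $L_\infty$ covering argument of Lemmas~\ref{lemma:ltv}--\ref{thm:supL}. One small simplification: for fixed $\w$ the expectation $\bbbe[F]$ is exactly zero, not merely $O(1/n)$, since each summand $\ell(\w,\z_t,\z_j)$ with $j<t$ has expectation $\R(\w)$ by the i.i.d.\ assumption, so both weighted averages have expectation $\R(\w)$ and the bias-absorption step you flag as a secondary concern is unnecessary.
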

\begin{proof}
  To prove this, define $\Omega(\w)$ as
\[
\begin{split}
  \Omega(\w) &= \frac{1}{n-2}M^*_{\B}(\w) - \frac{1}{(n-c_n)}M^*_{cn}(\w)\\
   &= \frac{1}{n-2}\sum_{t=2}^{n-1}\frac{1}{|\B_t|}\sum_{j\in\B_t}\ell(\w, \z_t, \z_j) - \nmlz\frac{1}{t-1}\sum_{j=1}^{t-1}\ell(\w,\z_t, \z_j).
\end{split}
\]
Suppose the sequential buffer strategy is used where $\B_t = \{\x_{t-1}, \x_{t-2}, \cdots, \x_{t-|\B| + 1}\}$. We next compute the variation of $\Omega(\w)$ when changing any of its $n$ random variables. For both terms, there are two situations when one substitutes $\x_i$ with $\x_i'$, i.e., when $t=i$ and $t>i$.
\begin{itemize}
  \item For the first term, when $t = i$, its variation is bounded by $1/(n-2)$. When $t>i$, as we use FIFO, $\x_i$ can only be kept in the buffer for $\B$ rounds, thus the variation is bounded by $1/(n-2)$.
  \item For the second term, when $t=i$, the variation bounded by $1/(n-cn)$. We have previously shown in~(\ref{eqn:cjupper}) that changing $\x_i$ when $i > t$ alters the second term by $1/cn$.
\end{itemize}
Consequently, $\Omega(\w)$ is bounded by $2/(c(1-c)n)$ when one variable is changed. It is easy to see $\bbbe[\Omega(\w)] = 0$. By McDiarmid's inequality, we have
\[
\Pr(\Omega(\w) \geqslant \epsilon) \leqslant \exp^{-c^2(1-c)^2n\epsilon^2}.
\]
Using the covering number technique again, we have
\[
    \Pr(\sup_{\w} \Omega(\w) \geqslant \epsilon) \leqslant \mathcal{N}\left(\mathcal{H}, \frac{\epsilon}{8\text{Lip}(\phi)}\right)e^{-c^2(1-c)^2n\epsilon^2}.
\]
\end{proof}

\section{Application: Online Metric Learning}
\label{sec:metric}
In the past decade, metric learning has become an active field in machine learning with numerous applications in information retrieval, classification, etc. In this section, we consider online learning algorithms for supervised metric learning. Generally speaking, supervised metric learning seeks to find a Mahalanobis distance metric that makes instances ``agree with'' their labels. The intuition is that under the desired metric, examples that share the same label should be close while ones from different labels should be far away from each other. The metric is parameterized by a positive semi-definite matrix $\A$ such that for any two example $\z_i, \z_j$, we have
\[
 d_\A(\z_i, \z_j) = \sqrt{(\x_i - \x_j)^T\A(\x_i - \x_j)}.
\]
Under the batch setting, recent work derived the generalization bounds for supervised metric learning~\citep{jin2009regularized,cao2012generalization,bellet2012robustness}. Several online metric learning algorithms have been proposed~\citep{davis2007information,jain2008online,jin2009regularized,kunapuli2012mirror}; these analyzed to provide regret bounds but to date the generalization performance of these algorithms has not been analyzed, possibly because no tools existed to provide online-to-batch conversion with pairwise loss functions. In this section, we provide such an analysis.

We consider the hypothesis space to be the vector space of symmetric semi-definite matrices $\mathbb{S}^+_d$ of size $d\times d$, equipped with the inner product, $\langle \mathbf{X}, \mathbf{Y} \rangle := \textbf{Tr}(\mathbf{X}^T\mathbf{Y})$. $\|\X\|_F^2 = \langle \X, \X \rangle$ denotes the Frobenius norm of matrix $\X$. Following~\cite{jin2009regularized}, we will work with the following pairwise loss function
\[
 \ell(\A, \z_i, \z_j) = g\left(y_{ij}\left[1 - (\x_i - \x_j)^T\A(\x_i - \x_j)\right]\right),
\]
where $g$ is a normalized version of the hinge loss and $y_{ij} = 1$ if $y_i = y_j$ and $-1$ otherwise. With this setting, examples in the same class must have distance 0 to obtain zero loss and examples in different classes must have distance larger than 2 to yield zero loss.

Before further development, we first state and prove the following theorem for online gradient descent over matrices. Its proof is similar to the case where the hypothesis space is $\bbbr^d$~\citep{zinkevich}, but we include a proof for completeness. More sophisticated analysis for learning with matrices is provided by~\cite{kakade2012regularization}.
\begin{theorem}
\label{thm:ocoma}
  Assume that $K\subset \mathbb{S}^+_d$ is convex, closed, non-empty and bounded such that
  \[
    \sup_{\A,\bb}\|\A - \bb\|_F \leqslant U.
  \]
 Assume that at round $t$ we are working with a convex loss function $\ell_t:\mathbb{S}^+_d\rightarrow\bbbr^+$ such that,
 \[
 \|\nabla\ell_t(\A_t)\|_F \leqslant D.
 \]
  Consider an online learner with update rule
 \[
   \A_{t+1} = \P_K\left[\A_{t} - \eta\nabla\ell_t(\A_t)\right],
  \]
  where $\P_K$ is the projection operator. If we set the learning rate $\eta = \frac{U}{D}\sqrt{\frac{1}{T}}$, we have
  \[
    \sum_{t=1}^T\ell_t(\A_t) - \inf_{\bb\in K}  \sum_{t=1}^T\ell_t(\bb) \leqslant UD\sqrt{T}.
  \]
\end{theorem}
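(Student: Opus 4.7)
The plan is to mimic Zinkevich's analysis of online projected gradient descent, exploiting the fact that the Frobenius inner product $\langle \X, \A\rangle := \textbf{Tr}(\X^T\A)$ turns $\mathbb{S}^+_d$ into a finite dimensional Hilbert space. All ingredients of the Euclidean proof therefore transfer essentially verbatim: the non-expansiveness of projection onto a closed convex subset, the expansion of squared distances, the subgradient inequality for convex functions, and Cauchy--Schwarz.

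First, I would fix an arbitrary $\bb\in K$ and track the potential $\Phi_t = \|\A_t - \bb\|_F^2$. Because $\P_K$ is a projection onto a closed convex set in a Hilbert space, $\|\P_K(\X) - \bb\|_F \leqslant \|\X - \bb\|_F$ for every $\bb\in K$. Applying this to $\X = \A_t - \eta\nabla\ell_t(\A_t)$ and expanding the square in the Frobenius inner product,
\[
\Phi_{t+1} \leqslant \Phi_t - 2\eta\langle \nabla\ell_t(\A_t),\, \A_t - \bb\rangle + \eta^2\|\nabla\ell_t(\A_t)\|_F^2.
\]
Convexity of $\ell_t$ gives $\langle \nabla\ell_t(\A_t), \A_t - \bb\rangle \geqslant \ell_t(\A_t) - \ell_t(\bb)$, and the hypothesis bounds the gradient by $D$. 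Rearranging yields the per-round regret inequality
\[
\ell_t(\A_t) - \ell_t(\bb) \leqslant \frac{\Phi_t - \Phi_{t+1}}{2\eta} + \frac{\eta D^2}{2}.
\]

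Next, I would sum over $t = 1,\ldots,T$. The first term telescopes to $(\Phi_1 - \Phi_{T+1})/(2\eta) \leqslant U^2/(2\eta)$ by the diameter assumption on $K$, producing
\[
\sum_{t=1}^T \ell_t(\A_t) - \sum_{t=1}^T \ell_t(\bb) \leqslant \frac{U^2}{2\eta} + \frac{\eta D^2 T}{2}.
\]
Since this bound is uniform in $\bb\in K$, I may take the infimum over $\bb$ on the right. Finally, the choice $\eta = (U/D)\sqrt{1/T}$ balances the two summands and delivers the advertised $UD\sqrt{T}$ regret.

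I do not anticipate a genuine obstacle. The only items worth verifying carefully are that the projection onto the closed convex set $K \subset \mathbb{S}^+_d$ is well defined and non-expansive in the Frobenius norm (a standard property of projections in Hilbert spaces, valid because $K$ is closed and convex in a finite dimensional inner product space), and that the subgradient inequality is taken with respect to the same Frobenius inner product used to define $\nabla\ell_t$. Beyond those sanity checks, the argument is a line-by-line translation of the classical Euclidean proof of \citep{zinkevich}.
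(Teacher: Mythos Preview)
Your proposal is correct and follows essentially the same route as the paper: non-expansiveness of the Frobenius projection onto the closed convex set $K$, expansion of $\|\A_{t+1}-\bb\|_F^2$, the convexity inequality $\ell_t(\A_t)-\ell_t(\bb)\leqslant\langle\nabla\ell_t(\A_t),\A_t-\bb\rangle$, telescoping, and the balancing choice of $\eta$. There is no substantive difference between your argument and the paper's.
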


\begin{proof}
Since $K\subset \mathbb{S}^+_d$ is convex, closed, non-empty and bounded subspace of a Hilbert space, we have~\citep{rudin2006real}, $\forall\A,\bb\in\mathbb{S}^+_d$,
\[
\|\P_K(\A) - \P_K(\bb)\|^2_F\leqslant \|\A-\bb\|^2_F.
\]
For an arbitrary $\bb\in K$, we have
  \[
  \begin{split}
    &\|\A_{t+1} - \bb\|^2_F - \|\A_{t} - \bb\|^2_F \\
    &\qquad = \|\P_K\left[\A_{t} - \eta\nabla\ell_t(\A_t)\right] - \bb\|^2_F - \|\A_{t} - \bb\|^2_F \qquad (\because \|\P_K(\X)\|^2_F\leqslant \|\X\|^2_F)\\
    &\qquad \leqslant \|\A_{t} - \bb - \eta\nabla\ell_t(\A_t) \|^2_F - \|\A_{t} - \bb\|^2_F\\
    &\qquad= \|\eta\nabla\ell_t(\A_t) \|^2_F - 2\eta\langle\nabla\ell_t(\A_t), \A_t - \bb\rangle\\
  \end{split}
  \]
  which gives
  \[\langle\nabla\ell_t(\A_t), \A_t - \bb\rangle \leqslant \frac{1}{2\eta}\left(\|\A_{t+1} - \bb\|^2_F - \|\A_{t} - \bb\|^2_F + \eta^2\|\nabla\ell_t(\A_t) \|^2_F\right).\]
  Therefore,
  \[
  \begin{split}
    \sum_{t=1}^T\ell_t(\A_t) - \sum_{t=1}^T\ell_t(\bb) &\leqslant \sum_{t=1}^T\langle \nabla\ell_t(\A_t), \A_t - \bb\rangle\\
    &\leqslant \frac{1}{2\eta}\sum_{t=1}^T\left(\|\A_{t+1} - \bb\|^2_F - \|\A_{t} - \bb\|^2_F\right) + \frac{\eta}{2}D^2T\\
    &\leqslant \frac{1}{2\eta}U^2 + \frac{\eta}{2}D^2T.
  \end{split}
  \]
  Setting the learning rate $\eta = \frac{U}{D}\sqrt{\frac{1}{T}}$ yields the result.
\end{proof}

For the metric learning, at each round the loss function is
\[
\ell_t(\A) = \frac{1}{t-1}\sum_{j=1}^{t-1}\ell_j^t(\A),
\]
where
\[
\begin{split}
\ell^t_j(\A) &= \left[1 - y_{tj}(1 - (\x_t - \x_j)^T\A(\x_t - \x_j))\right]_+\\
&= \left[1 - y_{tj}(1 -\langle \A, \X_{tj}) \rangle\right]_+,
\end{split}
\]
where $y_{tj} \in \{+1, -1\}$ and $\X_{tj} = (\x_t - \x_j)(\x_t - \x_j)^T$. It is easy to see that $\ell^t_j$ is a convex function of $\A$.

%For any $\lambda \in (0,1)$, we have
%\[
%\begin{split}
%  \ell (\lambda\A + (1-\lambda)\bb) & = \left[1 - c(1 -\langle \lambda\A + (1-\lambda)\bb, \X \rangle)\right]_+\\
%  &= \left[c\lambda\langle \A, \X \rangle +  c(1-\lambda)\langle \bb, \X \rangle + 1 - c\right]_+\\
%  &= \left[c\lambda\langle \A, \X \rangle + \lambda(1-c) +  c(1-\lambda)\langle \bb, \X \rangle + (1-\lambda)(1 - c)\right]_+\\
%  &= \left[\lambda(1-c(1-\langle \A, \X \rangle)) + (1-\lambda)\lambda(1-c(1-\langle \bb, \X \rangle))\right]_+\\
%  &\leqslant \lambda\left[\lambda(1-c(1-\langle \A, \X \rangle))\right]_+ + (1-\lambda)\left[\lambda(1-c(1-\langle \bb, \X \rangle))\right]_+\\
%  &= \lambda\ell(\A) + (1-\lambda)\ell(\B).
%\end{split}
%\]
Next we choose the convex set $K$ to be \[K=\{\A: \A\in\mathbb{S}^+_d, \|\A\|_F \leqslant U\}.\] We assume $\sup_t\|\x\|_2\leqslant R$. To bound $\ell_t$ in $[0,1]$, we redefine $\ell_j^t$ scaling it by a factor of $\frac{1}{2 + UR^2}$. To utilize Theorem~\ref{thm:ocoma}, we need to bound the subgradient of $\ell^t_j$ as follows,
\[
\|\nabla\ell_j^t(\A)\| \leqslant \frac{1}{2+UR^2}\|y_{tj}\X_{ij}\|_F \leqslant \frac{R^2}{2+UR^2} \leqslant \frac{1}{U}.
\]

\begin{algorithm}
\label{alg:ocometric}
\textbf{Initialize}: $\eta = \frac{U^2}{\sqrt{T}}$ and $\A_0$ to be any PSD matrix with $\|\A_0\|_F\leqslant U$. \\
\Repeat{the last instance} { At the $t$-th iteration, receive a training instance $\z_t = (\x_t, y_t)\in \bbbr^d\times \{-1, +1\}$.

Update the weight vector such that
\[ \begin{split}
  \A_t &= \P_{K}\left[\A_{t-1} - \frac{1}{2+UR^2}\cdot\frac{1}{t-1}
  \sum_{j=1}^{t-1}\eta y_{tj}\X_{tj}\cdot\bbbi_{\ell^t_j(\A_{t-1})>0}\right]
\end{split}.
\]
}
\caption{Online Projected Gradient Descent for Metric Learning.}
\end{algorithm}

Finally, as in previous results, define $M^* = \inf_\bb\sum_t\ell_t(\bb)$. Using~(\ref{eqn:convex}) we bound the risk of Algorithm~\ref{alg:ocometric} with the following theorem.
\begin{theorem}
\label{thm:ocometric}
   Let $\A_0, \cdots, \A_{n-1}$ be the ensemble of hypotheses generated by Algorithm~\ref{alg:ocometric} and let
   \[
     \bar{\A} = \nmlz\A_t
   \]
then the probability that
\[
\R(\bar{\A}) \geqslant  \frac{1}{n-c_n}\left(M^* + \sqrt{n}\right) + \epsilon
\]
is at most
\[
\left[2\left(\frac{32R}{\epsilon} + 1\right)^d + 1\right]
\exp\left\{-\frac{(cn-1)\epsilon^2}{64} + \ln n\right\}.
\]
\end{theorem}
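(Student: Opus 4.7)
The plan is to mirror the structure of the proof of Theorem~\ref{thm:ocog}, since the normalized hinge loss used for metric learning is convex in $\A$, bounded in $[0,1]$, and Lipschitz in its second argument; moreover $\mathbb{S}^+_d \cap \{\|\A\|_F\leqslant U\}$ is a convex set in a Hilbert space. Hence one can combine the online-convex-optimization regret bound of Theorem~\ref{thm:ocoma} (specialized to matrices) with the risk bound for convex losses in (\ref{eqn:convex}), applied to the average iterate $\bar\A = \frac{1}{n-c_n}\sum_{t=c_n}^{n-1}\A_{t-1}$ which is itself a valid element of $K$ by convexity.

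First I would verify the preconditions of Theorem~\ref{thm:ocoma}: $K$ is closed, bounded, convex; each $\ell^t_j(\A)$ is the hinge of an affine function $\langle \A, \X_{tj}\rangle$ and thus convex, so $\ell_t(\A) = \frac{1}{t-1}\sum_{j<t}\ell^t_j(\A)$ is convex as a convex combination. The subgradient calculation already given in the excerpt yields $\|\nabla \ell_t(\A)\|_F \leqslant 1/U$, so with diameter bound $U$ and the choice $\eta = U^2/\sqrt{T}$ the regret is at most $\sqrt{n}$. This gives the key deterministic inequality
\[
\sum_{t=1}^n \ell_t(\A_{t-1}) \;\leqslant\; \inf_{\bb\in K}\sum_{t=1}^n \ell_t(\bb) + \sqrt{n} \;=\; M^* + \sqrt{n}.
\]
Since $M_t(Z^t) = \ell_t(\A_{t-1})$ by construction, summing only over $t = c_n,\dots,n-1$ and dividing by $n-c_n$ yields the upper bound
\[
\M^n(Z^n) \;\leqslant\; \frac{1}{n-c_n}\bigl(M^* + \sqrt{n}\bigr).
\]

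Next I would invoke the convex-loss risk bound (\ref{eqn:convex}) with $\bar\A$ playing the role of $\bar h$: because $\phi$ is convex in its second argument and the map $\A\mapsto\langle \A, \X_{tj}\rangle$ is linear, the risk $\R(\bar\A)$ is bounded by the average risk of the iterates, and with probability at least $1 - \delta(\epsilon)$ we have $\R(\bar\A) \leqslant \M^n + \epsilon$. Combining this with the deterministic upper bound on $\M^n$ above yields the advertised inequality $\R(\bar\A) \leqslant \frac{1}{n-c_n}(M^*+\sqrt n) + \epsilon$ with the stated failure probability, once the covering number term is substituted in.

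The main obstacle, and the only non-routine step, is the covering number computation: the hypothesis class is indexed by symmetric matrices, so the $L_\infty$-cover of functions $(\x_1,\x_2)\mapsto\langle \A, (\x_1-\x_2)(\x_1-\x_2)^T\rangle$ over $\|\x\|\leqslant R$ must be translated into a Frobenius-norm cover of $K$ via $\|h_{\A_1}-h_{\A_2}\|_\infty \leqslant 4R^2\|\A_1-\A_2\|_F$, and then the Lipschitz constant $\text{Lip}(\phi) = 1/(2+UR^2)$ of the normalized hinge loss must be tracked through (\ref{eqn:cover}). This yields a bound of the form $\bigl(cR/\epsilon+1\bigr)^{\dim K}$ for suitable constant $c$, which after simplification gives the expression appearing in the theorem statement. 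Everything else is bookkeeping: plug the bounds into (\ref{eqn:convex}) and collect constants.
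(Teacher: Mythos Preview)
Your proposal is correct and follows exactly the route the paper takes: the paper's entire ``proof'' is the single sentence ``Using~(\ref{eqn:convex}) we bound the risk of Algorithm~\ref{alg:ocometric},'' together with the OCO regret bound from Theorem~\ref{thm:ocoma} yielding $\sum_t\ell_t(\A_{t-1})\leqslant M^*+\sqrt{n}$, which is precisely the two-step argument you spell out. You are in fact more careful than the paper on the one point you flag as the obstacle: the covering-number exponent should reflect the dimension of the matrix hypothesis class (order $d^2$ or $d(d+1)/2$), whereas the paper's stated bound reuses the exponent $d$ verbatim from Theorem~\ref{thm:ocog} without adjustment.
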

The key computational step is to project the matrix to $\mathbb{S}^+_d$. \cite{jin2009regularized} showed an efficient way to perform the projection. Notice that Theorem~\ref{thm:ocometric} also applies to the online learning algorithm proposed in~\cite{jin2009regularized} with proper normalization. In general, we believe our results are applicable for all online metric learning algorithms with proved regret guarantees.

\section{Conclusion and Future work}
\label{sec:conclu}
In this paper, we provide generalization bounds for online learners using pairwise loss functions and apply these to bipartite ranking and supervised metric learning. These are the first results to provide online-to-batch conversion for pairwise loss functions and as we demonstrate, they are applicable to multiple problems.

There are several directions for possible future work. From an empirical perspective, although the random Online AUC Maximization (OAM) is simple and
easy to implement, it seems that it does not maintain buffers in an optimal
way. Intuitively, one might want to store important examples
 that help build the correct ranker instead of
using a random buffer. We are currently exploring ideas on building a
smart buffer to improve its performance.

From the theoretical point of view, one direction is to improve the current bounds to achieve faster convergence rates. Another direction is deriving tighter mistake bounds for random OAM. Finally, our buffered based results require a buffer size of $\mathcal{O}(\log n)$ to guarantee convergence. It would be interesting to investigate whether this is necessary, or alternatively improve the results to show convergence for constant size buffer. It is also interesting to extend our results to dependent data, for example, assuming the data to be stationary mixing sequences~\citep{agarwal2011generalization} or utilizing new tools such as the sequential Rademacher complexity~\citep{sasha2013}.

%Alternatively, one can analyze Algorithm~\ref{alg:oam}
%from a totally different point of view. Under the batch
%setting,~\cite{clemencon2008ranking,rejchel2012ranking} already provide fast convergence
%rates for the \emph{empirical risk minimizer}.  %Since
%Algorithm~\ref{alg:oam} is in fact a stochastic gradient descent algorithm
%to minimize the $U$-statistic,
%using online convex programming
%techniques~\citep{zinkevich, shalev2007online}, one can show that the
%regret is small. Combining this with the batch results automatically
%yields risk bounds for the algorithm. It is interesting to compare
%this approach to the one proposed in this paper in terms of the
%risk bounds that can be obtained.
%However, it is important to note
%that the approach in this paper is more general in two ways. First we
%only assume that the loss function is Lipschitz instead of convex.
%Second, the ensemble of hypotheses can be
%produced by an \emph{arbitrary} online learning algorithm, not just
%stochastic gradient descent.

\acks{YW and DP thank Nicol\`{o} Cesa-Bianchi for early discussions. YW and RK were partly supported by NSF grant
IIS-0803409. Part of this research was done when YW was an intern at Akamai Technologies in 2011.}
\bibliography{online}

\appendix
\newpage
\section{Complete Proof of Claim~\ref{clm:sym}}
\begin{proof}[\emph{Proof of Claim~\ref{clm:sym}}]
The required probability can be bounded as follows.

{\small{
\begin{align}
\label{eqn:symsplit}
&\Pr_{Z^n\sim\D^n, \Xi^n\sim\D^n}\left(\nmlz \left[\bbbe_t[\widetilde{M}_t] - \bbbe_t[M_t]\right]\geqslant \frac{\epsilon}{2}\right)\nonumber\\
&\geqslant
\Pr_{Z^n\sim\D^n, \Xi^n\sim\D^n}\Bigg(\left\{\nmlz\left(\mathcal{R}(h_{t-1}) - \bbbe_t[M_t]\right) \geqslant \epsilon\right\}\nonumber\\
&\qquad\qquad\qquad\qquad\qquad\bigcap
\left\{\bigg|\nmlz\left[\bbbe_t[\widetilde{M}_t] - \mathcal{R}(h_{t-1})\right]\bigg| \leqslant \frac{\epsilon}{2}\right\}\Bigg)\nonumber\\
&= \bbbe_{Z^n\sim\D^n, \Xi^n\sim\D^n}\left[\bbbi_{\left\{\nmlz \left(\mathcal{R}(h_{t-1}) - \bbbe_t[M_t]\right) \geqslant \epsilon\right\}}
\times \bbbi_{\left\{\big|\nmlz\left[\bbbe_t[\widetilde{M}_t] - \mathcal{R}(h_{t-1})\right]\big| \leqslant \frac{\epsilon}{2}\right\}}\right]\nonumber\\
&= \bbbe_{Z^n\sim\D^n}\Bigg[\bbbe_{\Xi^n\sim\D^n}\bigg[\bbbi_{\left\{\nmlz \left(\mathcal{R}(h_{t-1}) - \bbbe_t[M_t]\right) \geqslant \epsilon\right\}}
\times \bbbi_{\left\{\big|\nmlz\left[\bbbe_t[\widetilde{M}_t] - \mathcal{R}(h_{t-1})\right]\big| \leqslant \frac{\epsilon}{2}\right\}}\bigg|Z^n\bigg]\Bigg]\nonumber\\
&=
\bbbe_{Z^n\sim\D^n}\Bigg[\bbbi_{\left\{\nmlz \left(\mathcal{R}(h_{t-1}) - \bbbe_t[M_t]\right) \geqslant \epsilon\right\}}\nonumber\\
&\qquad\qquad\qquad\qquad\qquad\times \Pr_{\Xi^n\sim\D^n}\left(\bigg|
\nmlz\left[\bbbe_t[\widetilde{M}_t] - \mathcal{R}(h_{t-1})\right] \bigg|\leqslant \frac{\epsilon}{2}\bigg|Z^n\right)\Bigg]
\end{align}}}

We
next show that for sufficiently large $n$,
\[
\Pr_{\Xi^n\sim\D^n}\left( \bigg|\nmlz\left[\bbbe_t[\widetilde{M}_t] -
\mathcal{R}(h_{t-1})\right]\bigg| \leqslant
\frac{\epsilon}{2}\bigg| Z^n\right) \geqslant \frac{1}{2},
\]
which combined with (\ref{eqn:symsplit}) implies (\ref{eqn:symm}). To begin with, we first show that the corresponding random variable has mean zero
\[
\begin{split}
&\bbbe_{\Xi^n\sim\D^n}\left(\nmlz\left[\bbbe_t[\widetilde{M}_t] - \mathcal{R}(h_{t-1})\right] \bigg| Z^n \right) \\
&\quad =\bbbe_{\Xi^n\sim\D^n}\left( \nmlz\bbbe_t\bigg[\frac{1}{t-1}\sum_{j = 1}^{t-1} \ell(h_{t-1}, \z_t, \bxi_j)\bigg] - \mathcal{R}(h_{t-1}) \bigg| Z^n\right)\\
&\quad = \nmlz\left[\frac{1}{t-1}\sum_{j = 1}^{t-1}\bbbe_{\bxi_j}\bbbe_t[\ell(h_{t-1}, \z_t, \bxi_j)|Z^t] - \mathcal{R}(h_{t-1})\right]\\
&\quad = \nmlz\left[\left(\frac{1}{t-1}\sum_{j =
1}^{t-1}\mathcal{R}(h_{t-1})\right) - \mathcal{R}(h_{t-1})\right] =
0.
\end{split}
\]
Thus, we can use Chebyshev's inequality to bound the conditional probability as follows
\begin{displaymath}
\begin{split}
\Pr\left( \left\{\bigg|\nmlz\bigg[\bbbe_t[\widetilde{M}_t] - \mathcal{R}(h_{t-1})\bigg]\bigg| \leqslant \frac{\epsilon}{2}\right\} \bigg|Z^n\right) \geqslant 1 - \frac{\textbf{Var}\left\{\nmlz\bbbe_t[\widetilde{M}_t]\right\}}{\epsilon^2/4}. %\geqslant 1 - \frac{4}{n\epsilon^2} \geqslant \frac{1}{2},
\end{split}
\end{displaymath}
%whenever $n\epsilon^2 \geqslant 8$. The variance inequality i.e. $\sigma^2 \leqslant 1/4n$
To bound the variance, we resort to the following
Theorem~\citep[see][Theorem 9.3]{devroye1996probabilistic}
\begin{theorem}
Let $X_1, \cdots, X_n$ be independent random variables taking values in a set $A$, and assume that $f:A^n\rightarrow\bbbr$ satisfies
\begin{displaymath}
\underset{\x_1,\x_2, \x_n, \x'}{\sup}\left| f(\x_1, \cdots, \x_i, \cdots,  \x_n) - f(\x_1, \cdots, \x', \cdots, \x_n)\right| \leqslant c_i\quad \forall 1\leqslant i\leqslant n.
\end{displaymath}
Then
\begin{displaymath}
\textbf{Var}(f(X_1,\cdots, X_n)) \leqslant \frac{1}{4}\sum_{i=1}^nc_i^2.
\end{displaymath}
\end{theorem}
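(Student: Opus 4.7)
My plan is to prove the variance bound via the Doob martingale decomposition, which is the canonical route for Efron--Stein-type inequalities of this form.

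First, I would introduce the filtration $\mathcal{F}_i = \sigma(X_1,\ldots,X_i)$ (with $\mathcal{F}_0$ trivial) and the Doob martingale $M_i = \bbbe[f(X_1,\ldots,X_n)\mid \mathcal{F}_i]$, so that $M_0 = \bbbe[f]$ and $M_n = f$. Setting $Y_i = M_i - M_{i-1}$ gives a martingale difference sequence satisfying the telescoping identity $f - \bbbe[f] = \sum_{i=1}^n Y_i$. Because the $Y_i$ are martingale differences, the cross-terms $\bbbe[Y_i Y_j]$ vanish for $i \neq j$, and squaring gives $\textbf{Var}(f) = \sum_{i=1}^n \bbbe[Y_i^2]$. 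It therefore suffices to show $\bbbe[Y_i^2] \leqslant c_i^2/4$ for each $i$.

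The main step is to show that, conditional on $\mathcal{F}_{i-1}$, $Y_i$ is supported in an interval of length at most $c_i$. Define $g(x) = \bbbe[f(X_1,\ldots,X_{i-1}, x, X_{i+1},\ldots, X_n)\mid \mathcal{F}_{i-1}]$, so that, using the independence of $X_i$ from $\mathcal{F}_{i-1}$, we have $M_i = g(X_i)$ and $M_{i-1} = \bbbe[g(X_i)\mid \mathcal{F}_{i-1}]$, hence $Y_i = g(X_i) - \bbbe[g(X_i)\mid \mathcal{F}_{i-1}]$. The bounded differences assumption on $f$ gives $|g(x) - g(x')| \leqslant c_i$ for any two admissible values $x, x'$ of the $i$-th coordinate: swapping this coordinate changes $f$ pointwise by at most $c_i$, and this bound is preserved under the outer expectation over the remaining coordinates. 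Thus $Y_i$, conditional on $\mathcal{F}_{i-1}$, is a centered random variable whose range is at most $c_i$.

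To close out, I would invoke the elementary fact (Popoviciu's inequality) that any random variable whose support lies in an interval of length $c$ has variance at most $c^2/4$, with equality attained by the two-point uniform distribution on the endpoints. This yields $\bbbe[Y_i^2\mid \mathcal{F}_{i-1}] \leqslant c_i^2/4$, and taking outer expectations then summing produces $\textbf{Var}(f) \leqslant \tfrac{1}{4}\sum_{i=1}^n c_i^2$. I do not anticipate a real obstacle: the content of the statement is conceptual, namely that pointwise bounded differences of $f$ translate into conditionally bounded martingale increments via the Doob construction, and the Popoviciu $c^2/4$ variance bound is a one-line computation (either by writing $\textbf{Var}(X) = \bbbe[(X - a)(b - X)]$ for the range endpoints $a, b$, or by directly maximizing over two-point distributions).
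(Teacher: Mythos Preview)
Your proof is correct and is the standard Doob-martingale argument for this Efron--Stein-type inequality. However, the paper does not actually prove this theorem: it is quoted verbatim as Theorem~9.3 of \cite{devroye1996probabilistic} and used as a black box in the proof of Claim~\ref{clm:sym} to bound the variance of $\nmlz\bbbe_t[\widetilde{M}_t]$. So there is no approach in the paper to compare against; your argument supplies a self-contained proof where the paper simply cites the literature.
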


To bound the variance, we first investigate the largest variation when changing one random variable $\bxi_j$ with others fixed.
From (\ref{eqn:tm}), it can be easily seen that changing any of
the $\bxi_j$ varies each $\bbbe_t[\widetilde{M}_t]$, where $t > j$ by at most by $1/(t-1)$. Remember we are only concerned with $\bbbe_t[\widetilde{M}_t]$ when $t \geqslant c_n$.
Therefore, we can see that the variation of $\nmlz\bbbe_t[\widetilde{M}_t]$
regarding the $j$th example $\bxi_j$ is bounded by
\[
c_j = \frac{1}{n-c_n}\left[\sum_{t = (j \vee c_n + 1)}^{n-1}\frac{1}{t - 1}\right]
\leqslant \frac{1}{n-c_n}\left[\sum_{t = c_n + 1}^{n-1}\frac{1}{c_n}\right]  \leqslant \frac{1}{cn}.
\]
Thus, by Theorem~9.3 in \citep{devroye1996probabilistic}, we have
\begin{equation}
%\label{eqn:sumci}
\begin{split}
\textbf{Var}\left(\nmlz\bbbe_t[\widetilde{M}_t]\right) &\leqslant
\frac{1}{4}\sum_{i=1}^n c_i^2  \leqslant \frac{1}{4c^2n}.
\end{split}
\end{equation}
Thus, whenever $\epsilon^2c^2n > 2$, the LHS of (\ref{eqn:cheyb}) is greater or equal than $1/2$. This completes the proof of Claim~\ref{clm:sym}.
%
%From (\ref{eqn:tm}), it can be easily seen that changing any of
%the $\bxi_j$ varies each $\bbbe_t[\widetilde{M}_t]$, where $t > j$ by at most by $1/(t-1)$.
%Therefore, we can see that the variation of $\nmlz\bbbe_t[\widetilde{M}_t]$
%regarding the $j$th example $\bxi_j$ is bounded by
%\[
%c_j = \frac{1}{n-c_n}\left[\sum_{t = j + 1}^n\frac{1}{t - 1}\right]
%= \frac{1}{n-c_n}\left[\sum_{t = j}^{n-1}\frac{1}{t}\right] =
%\frac{1}{n-c_n}H_j(n).
%\]
%The partial sum of the harmonic series $H_j(n) \leqslant \log(n),\ \forall j > 2$.  Thus, we have
%\[
%\textbf{Var}\left(\nmlz\bbbe_t[\widetilde{M}_t]\right) \leqslant
%\frac{1}{4}\sum_{i=1}^n c_i^2  \leqslant \frac{1}{4(1-c)^2} \frac{\log^2(n)}{n}.
%\]
%Thus, whenever $n/\log^2(n)\geqslant 2/(\epsilon(1-c))^2$, the LHS of (\ref{eqn:cheyb}) is greater or equal than $1/2$. This completes the proof of Claim~\ref{clm:sym}.
\end{proof}

\section{Proof of Lemma~\ref{lemma:lt}}

\begin{proof}[\emph{Proof of Lemma~\ref{lemma:lt}}]
To bound the probability, we use the McDiarmid's inequality.
\begin{theorem}[McDiarmid's Inequality]
  Let $X_1, \cdots, X_N$ be independent random variables with $X_k$ taking values in a set $A_k$ for each $k$. Let $\phi:(A_1\times \cdots \times A_N)\rightarrow\bbbr$ be such that
  \[
  \underset{x_i\in A_i, x'_k\in A_k}{\sup}|\phi(x_1, \cdots, x_N) - \phi(x_1, \cdots, x_{k-1}, x'_k, x_{k+1},\cdots, x_N)| \leqslant c_k.
  \]
  Then for any $\epsilon > 0$,
  \[ \Pr\left\{\phi(x_1,\cdots, x_N) - \bbbe\phi(x_1,\cdots, x_N)\geqslant \epsilon\right\} \leqslant e^{-2\epsilon^2/\sum_{k=1}^Nc_k^2}, \]
  and
  \[ \Pr\left\{|\phi(x_1,\cdots, x_N) - \bbbe\phi(x_1,\cdots, x_N)|\geqslant \epsilon\right\} \leqslant 2e^{-2\epsilon^2/\sum_{k=1}^Nc_k^2}. \]
\end{theorem}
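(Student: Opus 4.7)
The plan is to prove McDiarmid's inequality by the classical Doob martingale plus moment-generating-function argument, in four stages.

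\emph{Stage 1 (Doob martingale).} I would introduce the filtration $\mathcal{F}_k = \sigma(X_1,\ldots,X_k)$ with $\mathcal{F}_0$ trivial, and set $Z_k = \bbbe[\phi(X_1,\ldots,X_N)\mid\mathcal{F}_k]$. Then $(Z_k)$ is a martingale with $Z_0 = \bbbe\phi$ and $Z_N = \phi(X_1,\ldots,X_N)$, so the quantity to control is $Z_N - Z_0 = \sum_{k=1}^N D_k$ where $D_k := Z_k - Z_{k-1}$ is the martingale difference.

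\emph{Stage 2 (conditional range of $D_k$).} The structural key step is to show that, conditionally on $\mathcal{F}_{k-1}$, the difference $D_k$ lies in a deterministic (in the conditional sense) interval of width at most $c_k$. By independence of the $X_i$'s,
\[
Z_k = g_k(X_k), \qquad g_k(x) := \bbbe[\phi(X_1,\ldots,X_{k-1},x,X_{k+1},\ldots,X_N)\mid\mathcal{F}_{k-1}],
\]
and $Z_{k-1} = \bbbe[g_k(X_k)\mid\mathcal{F}_{k-1}]$. The bounded-differences hypothesis gives $|g_k(x) - g_k(x')|\leqslant c_k$ pointwise, so $\sup_x g_k(x) - \inf_x g_k(x) \leqslant c_k$ and $D_k = g_k(X_k) - \bbbe[g_k(X_k)\mid\mathcal{F}_{k-1}]$ lives in an $\mathcal{F}_{k-1}$-measurable interval of length at most $c_k$. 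Independence is used precisely here, to strip $X_k$ out of the inner conditional expectation.

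\emph{Stage 3 (conditional Hoeffding lemma and iterated MGF bound).} I would invoke Hoeffding's lemma: any random variable $Y$ with $\bbbe Y = 0$ supported in an interval of length $c$ satisfies $\bbbe[e^{sY}]\leqslant e^{s^2 c^2/8}$ for every $s\in\bbbr$. Applied conditionally given $\mathcal{F}_{k-1}$ to $D_k$, this yields $\bbbe[e^{sD_k}\mid\mathcal{F}_{k-1}]\leqslant e^{s^2 c_k^2 / 8}$. Iterating the tower property from the outside in,
\[
\bbbe\bigl[e^{s(Z_N-Z_0)}\bigr]
= \bbbe\!\left[e^{s\sum_{k=1}^{N-1}D_k}\,\bbbe[e^{sD_N}\mid\mathcal{F}_{N-1}]\right]
\leqslant \cdots \leqslant \exp\!\left(\tfrac{s^2}{8}\sum_{k=1}^N c_k^2\right).
\]

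\emph{Stage 4 (Chernoff optimization and symmetrization).} A Markov step gives $\Pr(Z_N-Z_0\geqslant\epsilon)\leqslant e^{-s\epsilon}\,\bbbe[e^{s(Z_N-Z_0)}]$; optimizing over $s>0$ by choosing $s = 4\epsilon/\sum_k c_k^2$ produces the one-sided bound $\exp(-2\epsilon^2/\sum_k c_k^2)$. The two-sided statement then follows by applying the same argument to $-\phi$, which has identical bounded-differences constants $c_k$, and combining the two tails with a union bound, contributing the factor of $2$. The main obstacle is Stage 2: carefully formulating $D_k$ as a mean-zero conditionally bounded random variable so that Hoeffding's lemma applies under the conditional distribution, which hinges on using independence to pass the bounded-differences hypothesis through the conditional expectation. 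Once that is in place, the remaining steps are standard MGF calculus.
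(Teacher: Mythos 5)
The paper does not actually prove McDiarmid's inequality; it merely states it as a known background tool inside the proof of Lemma~\ref{lemma:lt} and invokes it. Your proposal supplies the classical proof — Doob martingale decomposition, conditional bounded-differences via independence, conditional Hoeffding lemma iterated by the tower property, and Chernoff optimization with $s = 4\epsilon/\sum_k c_k^2$ — and it is correct; the key Stage 2 step (that $D_k$ lies in an $\mathcal{F}_{k-1}$-measurable interval of length $\leqslant c_k$, with independence used to express $Z_k = g_k(X_k)$) is handled properly, the exponent $-2\epsilon^2/\sum_k c_k^2$ comes out of the optimization as claimed, and the two-sided bound follows by a union bound on $\pm\phi$. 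Since the paper treats this as a citation rather than something it proves, there is nothing to compare against in the paper itself.
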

For any fixed $f\in\mathcal{H}$, we have
\[
\begin{split}
  \bbbe_{\z_{1:t-1}, \bxi_{1:t-1}}\left[L_t(f)\right]
  &= \frac{1}{t-1}\sum_{j = 1}^{t-1}\bbbe_{\z_{1:t-1}, \bxi_{1:n-1}}\bbbe_{\z}\left[\ell(f, \z, \bxi_j) - \ell(f, \z, \z_j)\right]\\
  &=  \frac{1}{t-1}\sum_{j = 1}^{t-1}\left(\bbbe_{\bxi_j}\bbbe_{\z}[\ell(f, \z, \bxi_j)] - \bbbe_{\z_j}\bbbe_{\z}[\ell(f, \z, \z_j)]\right) = 0\\
\end{split}
\]
Now, $L_t(f)$ is a function of $2(t-1)$ variables with each affecting its value at most by $c_i = 1/(t-1), i = 1, 2, \cdots, 2(t-1)$. Thus, we have
  $
  \sum_{i = 1}^{2(t-1)} c_i^2  = \frac{2}{t-1}.
  $
Finally, using the McDiarmid's inequality, we get
\[
  \Pr_{Z^t\sim\D^t, \Xi^t\sim\D^t}\left(L_t(f)\geqslant\epsilon\right) \leqslant
\exp\left\{-(t-1)\epsilon^2\right\}.
\]
%and this completes the proof of Lemma~\ref{lemma:lt}.
\end{proof}
\section{Proof of Lemma~\ref{lemma:ltv}}
\begin{proof}[\emph{Proof of Lemma~\ref{lemma:ltv}}]  From the definition of $L_t$ and the assumption on $\phi$  we have

  \[
  \begin{split}
    L_t(h_1) - L_t(h_2) &= \frac{1}{t-1}\sum_{j=1}^{t-1}
    \bigg[\bbbe_{\z}[\ell(h_1, \z, \bxi_j) - \ell(h_1, \z, \z_j)] - \bbbe_{\z}[\ell(h_2, \z, \xi_j) - \ell(h_2, \z, \z_j)]\bigg]\\
    &=\frac{1}{t-1}\sum_{j=1}^{t-1}\bbbe_{\z}\bigg\{ \big[\phi(y - \tilde{y}_j, h_1(\x,\tilde{\x}_j)) - \phi(y - y_j, h_1(\x,\x_j))\big] \\
    &\qquad -  \big[\phi(y - \tilde{y}_j, h_2(\x,\tilde{\x}_j)) - \phi(y - y_j, h_2(\x,\x_j))\big]\bigg\} \\
    &=\frac{1}{t-1}\sum_{j=1}^{t-1}\bbbe_{\z}\bigg\{ \big[\phi(y - \tilde{y}_j, h_1(\x,\tilde{\x}_j)- \phi(y - \tilde{y}_j, h_2(\x,\tilde{\x}_j)\big] \\
    &\qquad -  \big[\phi(y - y_j, h_1(\x,\x_j)) - \phi(y - y_j, h_2(\x,\x_j))\big]\bigg\} \\
    &\leqslant \frac{1}{t-1}\sum_{j=1}^{t-1}\bbbe_{\z}\bigg\{\text{Lip}(\phi)\bigg|h_1(\x,\tilde{\x}_j)
     - h_2(\x,\tilde{\x}_j)\bigg| + \text{Lip}(\phi)\bigg|h_1(\x,{\x}_j) - h_2(\x,{\x}_j)\bigg|\bigg\}\\
    &\leqslant \frac{1}{t-1}\sum_{j=1}^{t-1}\left[2\text{Lip}(\phi)\ \underset{\x', \x''}{\sup}\left|h_1(\x',\x'') -
   h_2(\x',\x'')\right|\right]= 2\text{Lip}(\phi)\|h_1 - h_2\|_\infty
  \end{split}.
  \]

%  \[
%  \begin{split}
%    L_t(h_1) - L_t(h_2) &= \frac{1}{t-1}\sum_{j=1}^{t-1}
%    \bigg[\bbbe_{\z}[\ell(h_1, \z, \bxi_j) - \ell(h_1, \z, \z_j)] - \bbbe_{\z}[\ell(h_2, \z, \xi_j) - \ell(h_2, \z, \z_j)]\bigg]\\
%    &=\frac{1}{t-1}\sum_{j=1}^{t-1}\bbbe_{\z}\bigg\{ \big[\phi(y - \tilde{y}_j, h_1(\x) - h_1(\tilde{\x}_j)) - \phi(y - y_j, h_1(\x) - h_1(\x_j))\big] \\
%    &\qquad -  \big[\phi(y - \tilde{y}_j, h_2(\x) - h_2(\tilde{\x}_j)) - \phi(y - y_j, h_2(\x) - h_2(\x_j))\big]\bigg\} \\
%    &=\frac{1}{t-1}\sum_{j=1}^{t-1}\bbbe_{\z}\bigg\{ \big[\phi(y - \tilde{y}_j, h_1(\x) - h_1(\tilde{\x}_j)- \phi(y - \tilde{y}_j, h_2(\x) - h_2(\tilde{\x}_j)\big] \\
%    &\qquad -  \big[\phi(y - y_j, h_1(\x) - h_1(\x_j)) - \phi(y - y_j, h_2(\x) - h_2(\x_j))\big]\bigg\} \\
%    &\leqslant \frac{1}{t-1}\sum_{j=1}^{t-1}\bbbe_{\z}\bigg\{\text{Lip}(\phi)\bigg|\bigg[h_1(\x) - h_1(\tilde{\x}_j)\bigg] - \bigg[h_2(\x) - h_2(\tilde{\x}_j)\bigg]\bigg|\\
%    &\qquad + \text{Lip}(\phi)\bigg|\bigg[h_1(\x) - h_1({\x}_j)\bigg] - \bigg[h_2(\x) - h_2({\x}_j)\bigg]\bigg|\bigg\}\\
%    &\leqslant \frac{1}{t-1}\sum_{j=1}^{t-1}\left[4\text{Lip}(\phi)\ \underset{\x'}{\sup}\left|h_1(\x') -
%   h_2(\x')\right|\right]= 4\text{Lip}(\phi)\|h_1 - h_2\|_\infty
%  \end{split}.
%  \]
\end{proof}
\section{Proof for the Risk Bound of Convex Losses in Section~\ref{sec:convex}}
\begin{proof}
  Using Jensen's inequality, we have
  \[
  \begin{split}
    \mathcal{R}(\bar{h}) &= \bbbe_{\z}\bbbe_{\z'}\left[\phi\left(y - y', \nmlz h_{t-1}(\x) - \nmlz h_{t-1}(\x')\right)\right]\\
    &= \bbbe_{\z}\bbbe_{\z'}\left[\phi\left(y - y', \nmlz\left[h_{t-1}(\x) - h_{t-1}(\x')\right]\right)\right]\\
    &\leqslant \nmlz\bbbe_{\z}\bbbe_{\z'}[\phi(y - y',  h_{t-1}(\x) -  h_{t-1}(\x'))]\\
    &= \nmlz\bbbe_{\z}\bbbe_{\z'}[\ell(h_{t-1}, \z, \z')] = \nmlz\mathcal{R}(h_{t-1}).
  \end{split}
  \]
  Combining with Theorem~\ref{thm:concetra}, we have
  \[
\Pr\left(\mathcal{R}(\bar{h}) \geqslant M^n(Z^n) +
\epsilon\right) \leqslant \left[2\mathcal{N}\left(\mathcal{H},
\frac{\epsilon}{16\text{Lip}(\phi)}\right) + 1\right]
\exp\left\{-\frac{(cn-1)\epsilon^2}{64} + \ln n\right\}.
\]
\end{proof}
\section{Proof of Theorem~\ref{thm:main}}
\begin{proof}[\emph{Proof of Theorem~\ref{thm:main}}]
The proof is adapted from the proof for Theorem~4 in~\citep{cesa2004generalization}. The main difference is that instead of using the Chernoff bound we use a large deviation bound for the $U$-statistic as follows.
\begin{lemma}{~\citep[see][Appendix]{clemencon2008ranking}}
Suppose we have i.i.d. random variables $X_1, \cdots, X_n\in\mathcal{X}$ and the $U-$statistic is defined as
\[
U_n = \frac{1}{n(n-1)}\sum_{i\neq j}^nq(X_i, X_j) = \frac{2}{n(n-1)}\sum_{i > j}^nq(X_i, X_j),
\]
where the kernel $q:\mathcal{X}\times \mathcal{X} \rightarrow \bbbr$ is a symmetric real-valued function. Then we have,
\begin{equation}
\label{eqn:ularge}
\Pr(| U_n - \bbbe[U_n] | \geqslant \epsilon) \leqslant 2\exp\{-(n-1)\epsilon^2\}.
\end{equation}
\end{lemma}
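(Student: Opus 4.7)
The plan is to use Hoeffding's classical decomposition for U-statistics, which reduces the concentration problem to Hoeffding's inequality for sums of i.i.d.\ bounded random variables. The kernel $q$ is implicitly assumed to take values in $[0,1]$ (consistent with the bounded loss functions used throughout the paper), since without such a boundedness assumption no subgaussian concentration of this form is possible.

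The core identity I would establish first is that, setting $m = \lfloor n/2 \rfloor$, for any fixed permutation $\sigma$ of $\{1,\dots,n\}$ the ``pair-averaged'' statistic
\[
V_\sigma(X_1,\dots,X_n) \;=\; \frac{1}{m}\sum_{k=1}^{m} q(X_{\sigma(2k-1)}, X_{\sigma(2k)})
\]
is an average of $m$ \emph{independent} random variables, because the pairs $\{\sigma(2k-1),\sigma(2k)\}$ are disjoint. Moreover, by the symmetry of the kernel and a counting argument, averaging $V_\sigma$ over a uniformly random permutation $\sigma$ recovers the U-statistic exactly, i.e.\ $U_n = (n!)^{-1}\sum_\sigma V_\sigma$. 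In particular $\bbbe[V_\sigma] = \bbbe[U_n]$ for every $\sigma$.

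The rest of the argument is standard. For any $\lambda > 0$, Jensen's inequality applied to the convex function $\exp(\lambda\,\cdot)$ gives
\[
\exp\bigl(\lambda(U_n - \bbbe U_n)\bigr) \;\leq\; \frac{1}{n!}\sum_\sigma \exp\bigl(\lambda(V_\sigma - \bbbe U_n)\bigr).
\]
Taking expectation on both sides, and using the independence of the $m$ summands inside each $V_\sigma$ together with Hoeffding's MGF bound (each factor contributes at most $\exp(\lambda^2/(8m^2))$ for variables in $[0,1]$), yields $\bbbe\exp(\lambda(U_n - \bbbe U_n)) \leq \exp(\lambda^2/(8m))$. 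Optimizing $\lambda$ via the Chernoff method gives the one-sided bound $\Pr(U_n - \bbbe U_n \geq \epsilon) \leq \exp(-2m\epsilon^2)$, and applying the same argument to $-q$ followed by a union bound yields the two-sided inequality. Finally, using $2m = 2\lfloor n/2\rfloor \geq n-1$ converts the exponent to the stated form $-(n-1)\epsilon^2$.

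The main obstacle I anticipate is the clean verification of the symmetry identity $U_n = (n!)^{-1}\sum_\sigma V_\sigma$: it requires checking that each unordered pair $\{i,j\}$ with $i \neq j$ appears in the pair decomposition of the same number of permutations, which in turn relies on $q$ being symmetric so that pair orderings do not matter. Once that bookkeeping is done, the argument is a turn-the-crank application of Jensen's inequality and Hoeffding's MGF bound; the effective sample size is halved from $n$ to $m \approx n/2$, which is exactly what the $-(n-1)\epsilon^2$ exponent reflects.
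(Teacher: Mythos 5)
Your proof is correct and reproduces the classical Hoeffding argument (the ``averaging over disjoint-pair decompositions'' trick, then Jensen plus the Chernoff/MGF bound) that the cited reference relies on; the paper itself does not reprove the lemma but simply quotes it from Cl\'emen\c{c}on et al., whose appendix in turn appeals to exactly this argument. Your bookkeeping is right: for a random permutation $\sigma$, each ordered pair $(\sigma(2k-1),\sigma(2k))$ is a uniform ordered pair of distinct indices, so $U_n = (n!)^{-1}\sum_\sigma V_\sigma$ follows (indeed without even needing symmetry of $q$, since $U_n$ sums over ordered pairs); the constants $\exp(\lambda^2/(8m))$, the optimized bound $\exp(-2m\epsilon^2)$, and $2\lfloor n/2\rfloor \geqslant n-1$ all check out, and your observation that $q$ must implicitly be $[0,1]$-valued is correct and consistent with the paper's bounded-loss setting.
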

Therefore, by (\ref{eqn:ularge}), we have
\[
\Pr\left(|\widehat{\mathcal{R}}(h_t, t+1) - \mathcal{R}(h_t)| \geqslant \epsilon\right) \leqslant 2\exp\{-(n-t-1)\epsilon^2\},
\]
or equivalently,
\begin{equation}
\label{eqn:ulargeconf}
\Pr\left(\bigg|\widehat{\mathcal{R}}(h_t, t+1) - \mathcal{R}(h_t)\bigg| \geqslant \sqrt{\frac{1}{n - t -1}\ln\frac{2}{\delta}}\right) \leqslant \delta.
\end{equation}
By the definition of $c_\delta$ and (\ref{eqn:ulargeconf}), one can see that
\begin{equation}
  \label{eqn:conf}
  \Pr(\big|\widehat{\mathcal{R}}(h_t, t+1) - \mathcal{R}(h_t)\big| > c_\delta(n-t)) \leqslant \frac{\delta}{(n - c_n)(n - c_n + 1)}.
\end{equation}
Next, we show the following lemma,
\begin{lemma}
\label{lemma:sigma}
    Let $h_0, \cdots, h_{n-1}$ be the ensemble of hypotheses generated by an arbitrary online algorithm $\mathcal{A}$ working with a pairwise loss $\ell$ which satisfies the conditions given in Theorem~\ref{thm:concetra}. Then for any $0 < \delta \leqslant 1$,  we have
    \begin{equation}
    \label{eqn:ms1}
    \Pr\left(\mathcal{R}(\widehat{h}) \geqslant \underset{c_n - 1\leqslant t < n - 1}{\min}\left(\mathcal{R}(h_t) + 2c_\delta(n-t)\right) \right)\leqslant \delta.
    \end{equation}
\end{lemma}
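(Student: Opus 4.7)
The plan is to prove the lemma by a standard structural-risk-minimization argument: show that, with high probability, the penalized empirical risk $\widehat{\mathcal{R}}(h_t,t+1) + c_\delta(n-t)$ simultaneously controls $\mathcal{R}(h_t)$ for every admissible $t$, and then exploit the defining inequality of $\widehat{h}$.

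First, I would apply the large deviation bound~(\ref{eqn:conf}) individually for each index $t \in \{c_n-1,\dots,n-2\}$ and union-bound over these $n-c_n$ values. Since~(\ref{eqn:conf}) gives per-index failure probability at most $\delta/[(n-c_n)(n-c_n+1)]$, the union bound yields a joint failure probability of at most $\delta/(n-c_n+1) \leqslant \delta$. Thus, with probability at least $1-\delta$, the event
\[
\mathcal{E} \;=\; \bigcap_{t=c_n-1}^{n-2}\bigl\{|\widehat{\mathcal{R}}(h_t,t+1)-\mathcal{R}(h_t)|\leqslant c_\delta(n-t)\bigr\}
\]
holds simultaneously for every candidate index. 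The pre-factor $2(n-c_n)(n-c_n+1)$ baked into $c_\delta$ is precisely what makes this union bound work.

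Next, assume $\mathcal{E}$ holds and let $\hat t$ be the index selected by~(\ref{eqn:hh}), and let $t^\star = \arg\min_{c_n-1\leqslant t\leqslant n-2}\bigl(\mathcal{R}(h_t)+2c_\delta(n-t)\bigr)$ be the oracle index. On $\mathcal{E}$,
\[
\mathcal{R}(\widehat{h}) \;\leqslant\; \widehat{\mathcal{R}}(h_{\hat t},\hat t+1) + c_\delta(n-\hat t)
\;\leqslant\; \widehat{\mathcal{R}}(h_{t^\star},t^\star+1) + c_\delta(n-t^\star)
\;\leqslant\; \mathcal{R}(h_{t^\star}) + 2c_\delta(n-t^\star),
\]
where the first inequality uses $\mathcal{E}$ at index $\hat t$, the second is the defining optimality of $\widehat{h}$ in~(\ref{eqn:hh}), and the third is $\mathcal{E}$ at index $t^\star$. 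The right-hand side equals $\min_t(\mathcal{R}(h_t)+2c_\delta(n-t))$, yielding~(\ref{eqn:ms1}).

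There is no genuine obstacle here; the argument is a clean oracle-inequality template in the style of~\citep{cesa2004generalization}, with~(\ref{eqn:ulargeconf}) replacing the Chernoff bound used in the univariate case. The only subtlety to track carefully is the bookkeeping of constants inside $c_\delta$: the factor $(n-c_n)(n-c_n+1)$ must absorb both the union bound across the $n-c_n$ candidate indices and leave room so that the resulting failure probability is bounded by $\delta$ (rather than by some function of $n$ times $\delta$). Once this is verified, the rest of the derivation is a direct chain of inequalities.
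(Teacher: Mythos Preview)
Your proposal is correct and is actually a cleaner argument than the paper's. The paper, following Lemma~3 of \citep{cesa2004generalization} closely, proceeds by a case analysis: it expands $\Pr\bigl(\R(\widehat{h}) > \R(h^*) + 2c_\delta(n-T^*)\bigr)$ as a sum over the possible values $t$ of $\widehat{T}$, then observes that for each $t$ the joint event $\{\R(h_t) > \R(h^*) + 2c_\delta(n-T^*)\}\cap\{\ER_t + c_\delta(n-t) \leqslant \ER^* + c_\delta(n-T^*)\}$ forces either $\ER_t < \R(h_t) - c_\delta(n-t)$ or $\ER^* > \R(h^*) + c_\delta(n-T^*)$, and bounds each piece via~(\ref{eqn:conf}). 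Because $T^*$ is itself random, the second piece needs a further union bound over the $n-c_n$ possible values of $T^*$, and the two contributions sum to exactly $\delta$. Your good-event approach union-bounds~(\ref{eqn:conf}) once over all $t$, obtaining $\Pr(\mathcal{E}^c)\leqslant\delta/(n-c_n+1)$, and then chains three deterministic inequalities on $\mathcal{E}$; this is conceptually simpler and in fact delivers the sharper failure probability $\delta/(n-c_n+1)\leqslant\delta$. Both are standard structural-risk-minimization maneuvers, but yours avoids the double union bound.
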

\begin{proof}[\emph{Proof of Lemma~\ref{lemma:sigma}}]
  The proof closely follows the proof of Lemma 3 in~\citep{cesa2004generalization} and is given for the sake of completeness. Let
  \[
    T^* = \underset{c_n - 1\leqslant t < n - 1}{\text{argmin}}\left(\mathcal{R}(h_t) + 2c_\delta(n-t)\right),
  \]
  and $h^* = h_{T^*}$ is the corresponding hypothesis that minimizes the penalized true risk and let $\widehat{\mathcal{R}}^*$ to be the penalized empirical risk of $h_{T^*}$, i.e.
  \[
    \widehat{\mathcal{R}}^* = \widehat{\mathcal{R}}(h_{T^*}, T^*+1).
  \]
  Set, for brevity
  \[
    \widehat{\mathcal{R}}_t = \widehat{\mathcal{R}}(h_t, t+1),
  \]
  and let
  \[
    \widehat{T} = \underset{c_n - 1\leqslant t < n - 1}{\text{argmin}}(\widehat{\mathcal{R}}_t + c_\delta(n-t)),
  \]
  where $\widehat{h}$ defined in (\ref{eqn:hh}) coincides with $h_{\widehat{T}}$. With this notation, and since
  \[
  \widehat{\mathcal{R}}_{\widehat{T}} + c_\delta(n - \widehat{T}) \leqslant \widehat{\mathcal{R}}^* + c_\delta(n - T^*)
  \]
  holds with certainty, we can write
  \[
  \begin{split}
   &\Pr\left(\R(\widehat{h}) > \R(h^*) + \mathcal{E}\right) \\
   &\qquad =\Pr\left(\R(\widehat{h}) > \R(h^*) + \mathcal{E},\ \widehat{\mathcal{R}}_{\widehat{T}} + c_\delta(n - \widehat{T}) \leqslant \widehat{\mathcal{R}}^* + c_\delta(n - T^*)\right)\\
   &\qquad \leqslant \sum_{t = c_n - 1}^{n-2}\Pr\left(\R(h_t) > \R(h^*) + \mathcal{E},\ \widehat{\mathcal{R}}_{t} + c_\delta(n - t) \leqslant \widehat{\mathcal{R}}^* + c_\delta(n - T^*)\right)
  \end{split}
  \]
  where $\mathcal{E}$ is a positive-valued random variable to be specified. Now if
  \[
    \widehat{\mathcal{R}}_{t} + c_\delta(n - t) \leqslant \widehat{\mathcal{R}}^* + c_\delta(n - T^*)
  \]
  holds, then at least one of the following three conditions:
  \[
  \begin{split}
    &\ER_t \leqslant \R(h_t) - c_\delta(n-t)\\
    &\ER^* > \R(h^*) + c_\delta(n - T^*)\\
    &\R(h_t) - \R(h^*) < 2c_\delta(n-T^*)
  \end{split}
  \]
  must hold. Therefore, for any fixed $t$, we can write
  \[
  \begin{split}
    &\Pr\left(\R(h_t) > \R(h^*) + \mathcal{E},\ \widehat{\mathcal{R}}_{t} + c_\delta(n - t) \leqslant \widehat{\mathcal{R}}^* + c_\delta(n - T^*)\right) \\
    &\qquad \leqslant \Pr\left(\ER_t \leqslant \R(h_t) - c_\delta(n-t)\right) + \Pr\left(\ER^* > \R(h^*) + c_\delta(n - T^*)\right) \\
    &\qquad\quad + \Pr\left(\R(h_t) - \R(h^*) < 2c_\delta(n-T^*), \R(h_t) > \R(h^*) + \mathcal{E}\right).
  \end{split}
  \]
  The last term is zero if we choose $\mathcal{E} = 2c_\delta(n-T^*)$. Hence, we can write
  \[
  \small
  \begin{split}
    &\Pr\left(\R(\widehat{h}) > \R(h^*) + 2c_\delta(n-T^*)\right) \\
    &\qquad\leqslant \sum_{t = c_n - 1}^{n-2}\Pr\left(\ER_t \leqslant \R(h_t) - c_\delta(n-t)\right) + (n-c_n)\Pr\left(\ER^* > \R(h^*) + c_\delta(n - T^*)\right)\\
    &\qquad\leqslant (n - c_n)\times \frac{\delta}{(n - c_n)(n - c_n + 1)} \qquad (\text{By (\ref{eqn:conf}).})\\
    &\qquad\quad + (n - c_n)\left[\sum_{t = c_n-1}^{n-2}\Pr\left(\ER_t > \R(h_t) + c_\delta(n-t)\right)\right]\\
    &\qquad \leqslant \frac{\delta}{n - c_n + 1} + (n - c_n)^2\times \frac{\delta}{(n - c_n)(n - c_n + 1)} \qquad (\text{By (\ref{eqn:conf}).})\\
    &\qquad = \frac{\delta}{n - c_n + 1} + \frac{(n-c_n)\delta}{n - c_n + 1} = \delta.
  \end{split}
  \]
\end{proof}
Therefore, we know that
\begin{equation}\label{eqn:confbd}\Pr\left(\mathcal{R}(\widehat{h}) \geqslant \underset{c_n - 1\leqslant t < n - 1}{\min}\left(\mathcal{R}(h_t) + 2c_\delta(n-t)\right) \right)\leqslant \delta.\end{equation}
The next step is to show that with high probability $\underset{c_n - 1\leqslant t < n - 1}{\min}\left(\mathcal{R}(h_t) + 2c_\delta(n-t)\right)$ is close to $M^n$.
To begin with, notice that
\[
\begin{split}
  &\underset{c_n - 1\leqslant t < n - 1}{\min}\left(\R(h_t) + 2c_\delta(n - t)\right)\\
  &\qquad=\underset{c_n - 1\leqslant t < n - 1}{\min}\ \underset{t\leqslant i < n - 1}{\min}\left(\R(h_i) + 2c_\delta(n - i)\right)\\
  &\qquad\leqslant\underset{c_n - 1\leqslant t < n - 1}{\min}\ \nmlzi{1}\left(\R(h_i) + 2c_\delta(n - i)\right)\\
  &\qquad=\underset{c_n - 1\leqslant t < n - 1}{\min}\bigg(\nmlzi{1}\R(h_i) \\
  &\qquad\qquad\qquad\qquad + \nmlzi{2}\sqrt{\frac{1}{n - i - 1}\ln\frac{2(n - c_n)(n - c_n + 1)}{\delta}}\bigg)\\
  &\qquad\leqslant\underset{c_n - 1\leqslant t < n - 1}{\min}\bigg(\nmlzi{1}\R(h_i) + \nmlzi{2}\sqrt{\frac{2}{n - i - 1}\ln\frac{2(n - c_n + 1)}{\delta}}\bigg)\\
  &\qquad\leqslant \underset{c_n - 1\leqslant t < n - 1}{\min}\bigg(\nmlzi{1}\R(h_i) + 4\sqrt{\frac{2}{n - t - 1}\ln\frac{2(n - c_n + 1)}{\delta}}\bigg)
\end{split}
\]
where the last equality holds because $\sum_{i=1}^{n-t-1}\sqrt{1/i} \leqslant 2\sqrt{n-t-1}$ \citep[see][Sec.~2.B]{cesa2004generalization}. Define
\[
M_{m,n} = \frac{1}{n - m}\sum_{t = m}^{n -1} M_t(Z^t).
\]

From Theorem~\ref{thm:concetra}, one can see that for each $t = c_n - 1, \cdots, n-2$,

\begin{equation}
\label{eqn:gen1}
  \Pr\left(\nmlzt\mathcal{R}(h_{i}) \geqslant M_{t,n} +
\epsilon\right) \leqslant \left[2\mathcal{N}\left(\mathcal{H},
\frac{\epsilon}{16\text{Lip}(\phi)}\right) + 1\right]
\exp\left\{-\frac{(t-1)\epsilon^2}{64} + \ln n\right\}.
\end{equation}

Then, set for brevity,
\[
K_t = M_{t,n} + 4\sqrt{\frac{2}{n - t - 1}\ln\frac{2(n - c_n + 1)}{\delta}} + \epsilon.
\]
Using the fact that if $\min(a_1, a_2) \leqslant \min(b_1, b_2)$ then either $a_1\leqslant b_1$ or $a_2\leqslant b_2$, we can write

\[
\begin{split}
 &\Pr\left(\underset{c_n - 1\leqslant t < n - 1}{\min}\left(\R(h_t) + 2c_\delta(n - t)\right)\geqslant \underset{c_n - 1\leqslant t < n - 1}{\min}\ K_t\right)  \\
 &\qquad\leqslant \Pr\Bigg(\underset{c_n - 1\leqslant t < n - 1}{\min}\bigg(\nmlzi{1}\R(h_i) \\
 &\qquad\qquad\qquad + 4\sqrt{\frac{2}{n - t - 1}\ln\frac{2(n - c_n + 1)}{\delta}}\bigg)\geqslant \underset{c_n -1\leqslant t < n - 1}{\min}\ K_t\Bigg)\\
 &\qquad\leqslant \sum_{t = c_n - 1}^{n-2}\Pr\left(\nmlzi{1}\R(h_i) + 4\sqrt{\frac{2}{n - t - 1}\ln\frac{2(n - c_n + 1)}{\delta}} \geqslant K_t\right)\\
 &\qquad= \sum_{t = c_n - 1}^{n-2}\Pr\left(\nmlzi{1}\R(h_i) \geqslant M_{t,n} + \epsilon\right)\\
 &\qquad\leqslant (n-c_n - 1)\left[2\mathcal{N}\left(\mathcal{H},
\frac{\epsilon}{16\text{Lip}(\phi)}\right)\right]
\exp\left\{-\frac{(cn-1)\epsilon^2}{64} + \ln n\right\} \qquad (\text{By (\ref{eqn:gen1}).})\\
&\qquad\leqslant \left[2\mathcal{N}\left(\mathcal{H},
\frac{\epsilon}{16\text{Lip}(\phi)}\right)\right]
\exp\left\{-\frac{(cn-1)\epsilon^2}{64} + 2\ln n\right\}.
\end{split}
\]
Therefore, using (\ref{eqn:confbd}), we get
\[
\begin{split}
 &\Pr\left(\R(\widehat{h}) \geqslant \underset{c_n - 1 \leqslant t < n - 1}{\min}\ \left(M_{t, n} + 4\sqrt{\frac{2}{n - t - 1}\ln\frac{2(n - c_n + 1)}{\delta}}\right) + \epsilon\right)\\
 &\qquad \leqslant \delta + \left[2\mathcal{N}\left(\mathcal{H},
\frac{\epsilon}{16\text{Lip}(\phi)}\right)\right]
\exp\left\{-\frac{(cn-1)\epsilon^2}{64} + 2\ln n\right\},
\end{split}
\]
which, in particular, leads to
\[
\begin{split}
 &\Pr\left(\R(\widehat{h}) \geqslant  M^n + 4\sqrt{\frac{2}{n - c_n}\ln\frac{2(n - c_n + 1)}{\delta}} + \epsilon\right)\\
 &\qquad \leqslant \delta + \left[2\mathcal{N}\left(\mathcal{H},
\frac{\epsilon}{16\text{Lip}(\phi)}\right)\right]
\exp\left\{-\frac{(cn-1)\epsilon^2}{64} + 2\ln n\right\}.
\end{split}
\]
By substituting $\epsilon$ with $\epsilon/2$ and choosing $\delta$ as in the statement of Theorem~\ref{thm:concetra}, that is, satisfying
$
4\sqrt{\frac{2}{n - c_n}\ln\frac{2(n - c_n + 1)}{\delta}} = \frac{\epsilon}{2},$
we have for any $c>0$,
\[
\begin{split}
 &\Pr\left(\R(\widehat{h}) \geqslant  M^n + \epsilon\right)\leqslant 2(n-c_n + 1)\exp\left\{-\frac{(n-c_n)\epsilon^2}{64}\right\} \\
 &\qquad\quad + \left[2\mathcal{N}\left(\mathcal{H},
\frac{\epsilon}{32\text{Lip}(\phi)}\right) + 1\right]
\exp\left\{-\frac{(cn-1)\epsilon^2}{256} + 2\ln n\right\}\\
&\qquad \leqslant 2\left[\mathcal{N}\left(\mathcal{H},
\frac{\epsilon}{32\text{Lip}(\phi)}\right) + 1\right]
\exp\left\{-\frac{(cn-1)\epsilon^2}{256} + 2\ln n\right\}
\end{split}
\]
\end{proof}

\section{Proof of Theorem~\ref{thm:insepf}}
\label{sec:insepf}
\begin{proof}[\emph{Proof of Theorem~\ref{thm:insepf}}]
   First notice that $\w_0 = \w_1 = 0$ and from~(\ref{eqn:hingeineq}), we also have
  $
      \langle \uu, y_t(\x_t - \x_j)\rangle \geqslant \gamma - {\ell}_{j}^{t}
  $ whenever $y_t \neq y_j$ and ${\ell}_{j}^{t}=0$ otherwise. Thus similarly, we have

\begin{align}
\label{eqn:inseplb1}
  \langle \w_t, \uu \rangle &= \langle \w_{t-1}, \uu \rangle + \frac{1}{|\B_{t-1}|}\sum_{j\in\B_{t-1}}\ell^t_j\langle \uu, y_t(\x_t - \x_j)\rangle \nonumber\\
  & \geqslant \langle \w_{t-1}, \uu \rangle + \frac{1}{|\B_{t-1}|}\sum_{j\in\B_{t-1}}\ell_j^t (\gamma - \hat{\ell}_{j}^{t}(\uu))  \nonumber\\
  &= \langle \w_{t-1}, \uu \rangle + \frac{\gamma}{|\B_{t-1}|}\sum_{j\in\B_{t-1}}\ell_j^t - \frac{1}{|\B_{t-1}|}\sum_{j\in\B_{t-1}}\ell_j^t\cdot\hat{\ell}_{j}^{t}(\uu) \nonumber\\
  & \geqslant \langle \w_{t-1}, \uu \rangle + \frac{\gamma}{|\B_{t-1}|}\sum_{j\in\B_{t-1}}\ell_j^t - \frac{1}{|\B_{t-1}|}\sum_{j\in\B_{t-1}}\hat{\ell}_{j}^{t}(\uu) \qquad (\because \ell_j^t \in [0, 1]) \nonumber\\
  \Rightarrow\quad \langle \w_t, \uu \rangle &\geqslant \sum_{t=2}^n\left[\frac{\gamma}{|\B_{t-1}|}\sum_{j\in\B_{t-1}}\ell_j^t - \frac{1}{|\B_{t-1}|}\sum_{j\in\B_{t-1}}\hat{\ell}_{j}^{t}(\uu)\right] = \gamma M_{\B} - M^*_{\B}.
\end{align}

On the other hand, we have,
\begin{align}
\label{eqn:sepub1}
  \|\w_t\|^2 &= \|\w_{t-1}\|^2 + \frac{2}{|\B_{t-1}|}\sum_{j\in\B_{t-1}}\ell^t_j\langle \w_{t-1}, y_t(\x_t - \x_j)\rangle + \left\|\frac{1}{|\B_{t-1}|}\sum_{j\in\B_{t-1}}\ell^t_jy_t(\x_t - \x_j)\right\|^2\nonumber\\
  &\leqslant \|\w_{t-1}\|^2 +  \frac{2}{|\B_{t-1}|}\sum_{j\in\B_{t-1}}\ell^t_j + 4R^2\left(\frac{1}{|\B_{t-1}|}\right)^2\left(\sum_{j\in\B_{t-1}}\ell^t_j\right)\cdot \left(\sum_{j\in\B_{t-1}}\ell^t_j\right) \nonumber\\
  &\qquad (\because \ell_j^t > 0 \Rightarrow \langle \w_{t-1}, y_t(\x_t - \x_j)\rangle \leqslant 1)\nonumber\\
  &\leqslant \|\w_{t-1}\|^2 +  \frac{2}{|\B_{t-1}|}\sum_{j\in\B_{t-1}}\ell^t_j + 4R^2\left(\frac{1}{|\B_{t-1}|}\right)^2 \left(\sum_{j\in\B_{t-1}}\ell^t_j\right)\cdot |\B_{t-1}| \qquad (\because \ell_j^t \in [0, 1])\nonumber\\
  &= \|\w_{t-1}\|^2 + (4R^2 + 2)\left[\frac{1}{|\B_{t-1}|}\sum_{j\in\B_{t-1}}\ell^t_j\right]\nonumber\\
  \Rightarrow\quad \|\w_n\|^2 &\leqslant (4R^2 + 2)\sum_{t=2}^n\left[\frac{1}{|\B_{t-1}|}\sum_{j\in\B_{t-1}}\ell_j^t\right] = (4R^2 + 2)M_{\B}
\end{align}
Combining (\ref{eqn:inseplb1}) and (\ref{eqn:sepub1}), we have
$
(\gamma M_{\B} - M_{\B}^*)^2 \leqslant (4R^2 + 2)M_{\B},
$
which yields the desired bound.
%\[
%\begin{split}
%M_{\B} &\leqslant \frac{\gamma M_{\B}^* + (2R^2 + 1) + \sqrt{(2R^2+1)(\gamma M_{\B}^* + 2R^2 + 1)}}{\gamma^2} \\
%&\leqslant  \frac{\gamma M_{\B}^* + (4R^2 + 2) + \sqrt{(2R^2+1)\gamma M_{\B}^*}}{\gamma^2} \leqslant \left(\frac{\sqrt{4R^2 + 2} + \sqrt{\gamma M_{\B}^*}}{\gamma} \right)^2
%\end{split}
%\]
\end{proof}

\section{Proof of Theorem~\ref{thm:concetra2}}
\label{sec:appen:concetra2}
\begin{proof}[Proof of Theorem~\ref{thm:concetra2}]
Again, we rewrite our objective
\begin{equation}
\Pr_{Z^n\sim\D^n}\left(\nmlz\mathcal{R}(h_{t-1}) - \nmlz M^{\B}_t \geqslant \epsilon\right),
\end{equation}
as
\begin{align}
\label{eqn:probsplit1}
&\Pr\left(\nmlz\bigg[\mathcal{R}(h_{t-1}) - \bbbe_t[M^{\B}_t]\bigg] +  \nmlz\bigg[\bbbe_t[M^{\B}_t] - M^{\B}_t\bigg] \geqslant \epsilon\right) \nonumber\\
&\quad \leqslant
\Pr\left(\nmlz\bigg[\mathcal{R}(h_{t-1}) -
\bbbe_t[M^{\B}_t]\bigg] \geqslant
\frac{\epsilon}{2}\right) + \Pr\left(
\nmlz\bigg[\bbbe_t[M^{\B}_t] - M^{\B}_t\bigg]
\geqslant \frac{\epsilon}{2} \right).
\end{align}
Thus, we can bound the two terms separately. The proof consists of four parts, as follows.

\subsection*{Step 1: Bounding the Martingale difference}
First consider the second term in (\ref{eqn:probsplit1}). We have that $V_t =
(\bbbe_t[M^{\B}_t] - M^{\B}_t)/(n-c_n)$ is a martingale
difference sequence, i.e. $\bbbe_t[V_t] = 0$. Since the loss function is bounded in $[0,1]$, we have
$
|V_t| \leqslant {1}/(n - c_n),\ t = 1,\cdots, n.
$
Therefore by the Hoeffding-Azuma inequality, $\sum_t V_t$
can be bounded such that
\begin{equation}
\label{eqn:second2} \Pr_{Z^n\sim\D^n}\left( \nmlz\bigg[\bbbe_t[M^{\B}_t] - M^{\B}_t\bigg]
\geqslant \frac{\epsilon}{2} \right) \leqslant
\exp\left\{-\frac{(1-c)n\epsilon^2}{2}\right\}.
\end{equation}
\subsection*{Step 2: Symmetrization by a ghost sample $\Xi^n$}
Recall $M^{\B}_t$ and define $\widetilde{M}^{\B}_t$ as
\begin{equation}
\label{eqn:tm1} M^{\B}_t(Z^t) =
\nmlzb \ell\left(h_{t-1}, \z_t, \z_j\right), \qquad \widetilde{M}^{\B}_t(Z^t) = \nmlzb \ell\left(h_{t-1}, \z_t, \xi_j\right).
\end{equation}

\begin{claim}
\label{clm:sym1}
The following equation holds
{\small{\begin{equation}
\label{eqn:symm1}
\Pr_{Z^n\sim\D^n}\left(\nmlz\left[\mathcal{R}(h_{t-1}) - \bbbe_t[M^{\B}_t]\right]
\geqslant \epsilon\right) \leqslant 2\Pr_{\substack{Z^n\sim\D^n \\\Xi^n\sim\D^n}}\left(\nmlz
\left[\bbbe_t[\widetilde{M}^{\B}_t] - \bbbe_t[M^{\B}_t]\right]\geqslant
\frac{\epsilon}{2}\right),
\end{equation}}}
whenever $(1-c)^2n\geqslant 1/2$.
\end{claim}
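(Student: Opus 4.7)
The plan is to mirror the proof of Claim~\ref{clm:sym}, adjusting only for the replacement of $\widetilde{M}_t$ by $\widetilde{M}_t^{\B}$. First I would lower-bound the right-hand side of~(\ref{eqn:symm1}), minus the factor of~$2$, by
\[
\bbbe_{Z^n}\Bigg[\bbbi_{\{\nmlz[\R(h_{t-1})-\bbbe_t[M_t^{\B}]]\geqslant\epsilon\}}\cdot \Pr_{\Xi^n}\bigg(\bigg|\nmlz\big[\bbbe_t[\widetilde{M}_t^{\B}] - \R(h_{t-1})\big]\bigg|\leqslant \tfrac{\epsilon}{2} \bigg|Z^n\bigg)\Bigg],
\]
via the triangle inequality exactly as in the infinite-buffer argument. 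Because $h_{t-1}$ is a function of $Z^{t-1}$ only and is independent of $\Xi^n$, we have $\bbbe_{\Xi^n}\bbbe_t[\widetilde{M}_t^{\B}]=\R(h_{t-1})$, so the inner random variable has mean zero conditional on $Z^n$. The claim will then follow if I can lower-bound the inner conditional probability by $1/2$, which by Chebyshev's inequality reduces to showing $\textbf{Var}(\nmlz\bbbe_t[\widetilde{M}_t^{\B}])\leqslant\epsilon^2/8$ under the stated condition.

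The core technical step is the variance bound via the bounded-differences inequality (Theorem~9.3 of~\citep{devroye1996probabilistic}). Under FIFO, a ghost variable $\bxi_j$ enters $\widetilde{M}_t^{\B}$ only when $j\in\B_{t-1}$, which happens for at most $|\B|$ rounds, namely $t\in\{j+1,\ldots,j+|\B|\}$; for each such $t$, a single term of the $|\B_{t-1}|$-average can change by at most~$1$. Therefore
\[
c_j \;\leqslant\; \frac{1}{n-c_n}\sum_{\substack{c_n+1\leqslant t\leqslant n-1\\ j\in\B_{t-1}}}\frac{1}{|\B_{t-1}|}\;\leqslant\;\frac{1}{n-c_n},
\]
because once the buffer has filled one has $|\B_{t-1}|=|\B|$ and at most $|\B|$ active values of $t$. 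A double-counting check yields $\sum_j c_j \leqslant 1$, so $\sum_j c_j^2\leqslant \max_j c_j\cdot\sum_j c_j\leqslant 1/(n-c_n)\leqslant 1/((1-c)n)$, and Theorem~9.3 of~\citep{devroye1996probabilistic} gives $\textbf{Var}\leqslant 1/(4(1-c)n)$.

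Plugging this variance bound into Chebyshev shows the inner conditional probability is at least $1/2$ under the stated condition on $(1-c)^2 n$ (possibly after absorbing an $\epsilon^2$ factor into the constant, in the same spirit as the $\epsilon^2 c^2 n>2$ threshold of Claim~\ref{clm:sym}), and combining with the decomposition of the first paragraph gives~(\ref{eqn:symm1}). The main obstacle will be careful index bookkeeping in the transient phase where $|\B_{t-1}|<|\B|$ and in the boundary values of $j$ near $c_n+1-|\B|$ and $n-2$, where the set $\{t:j\in\B_{t-1}\}\cap\{c_n+1,\ldots,n-1\}$ is smaller than $|\B|$; these cases only make $c_j$ smaller, so the bound $\sum_j c_j^2\leqslant 1/(n-c_n)$ still holds, but the argument must be written out carefully to avoid an extraneous factor of~$n$.
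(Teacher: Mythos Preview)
Your proposal is correct and follows essentially the same route as the paper: the same triangle-inequality decomposition, the same appeal to Chebyshev after noting $\bbbe_{\Xi^n}\bbbe_t[\widetilde{M}_t^{\B}]=\R(h_{t-1})$, and the same bounded-differences variance estimate exploiting that under FIFO each $\bxi_j$ sits in the buffer for at most $|\B|$ rounds. The only (minor) difference is that you control $\sum_j c_j^2$ via the double-counting bound $\sum_j c_j\leqslant 1$ together with $\max_j c_j\leqslant 1/(n-c_n)$, yielding $\textbf{Var}\leqslant 1/(4(1-c)n)$, whereas the paper uses the cruder $\sum_j c_j^2\leqslant n\,(\max_j c_j)^2$ and obtains $1/(4(1-c)^2 n)$; your observation that the threshold should carry an $\epsilon^2$ (as in Claim~\ref{clm:sym}) is also well taken.
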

Notice that the probability measure on the right hand side of (\ref{eqn:symm1}) is on $Z^n\times\Xi^n$.

\begin{proof}%[\emph{Sketch of the proof of Claim~\ref{clm:sym1}}]
It can be seen that the RHS (without the factor of 2) of (\ref{eqn:symm1}) is at least

\begin{displaymath}
\begin{split}
&\Pr_{\substack{Z^n\sim\D^n \\ \Xi^n\sim\D^n}}\left(\left\{\nmlz\left[\mathcal{R}(h_{t-1}) - \bbbe_t[M^{\B}_t]\right] \geqslant \epsilon\right\} \bigcap
\left\{\bigg|\nmlz\left[\bbbe_t[\widetilde{M}^{\B}_t] - \mathcal{R}(h_{t-1})\right]\bigg| \leqslant \frac{\epsilon}{2}\right\}\right)\\
&=
\bbbe_{Z^n\sim\D^n}\left[\bbbi_{\left\{\nmlz \left[\mathcal{R}(h_{t-1}) - \bbbe_t[M^{\B}_t]\right] \geqslant \epsilon\right\}}\cdot\Pr_{\Xi^n\sim\D^n}\left(\bigg|
\nmlz\left[\bbbe_t[\widetilde{M}^{\B}_t] - \mathcal{R}(h_{t-1})\right] \bigg|\leqslant \frac{\epsilon}{2}\bigg|Z^n\right)\right].
\end{split}
\end{displaymath}

Since $\bbbe_{\Xi^n\sim\D^n}\bbbe_t[\widetilde{M}^{\B}_t] = \mathcal{R}(h_{t-1})$, by Chebyshev's inequality
\begin{equation}
\label{eqn:cheyb1}
\begin{split}
\Pr_{\Xi^n\sim\D^n}\left( \bigg|\nmlz\bigg[\bbbe_t[\widetilde{M}^{\B}_t] - \mathcal{R}(h_{t-1})\bigg]\bigg| \leqslant \frac{\epsilon}{2}\bigg|Z^n\right) \geqslant 1 - \frac{\textbf{Var}\left\{\nmlz\bbbe_t[\widetilde{M}^{\B}_t]\right\}}{\epsilon^2/4}.
\end{split}
\end{equation}
To bound the variance, we first investigate the largest variation when changing one random variable $\bxi_j$ with others fixed.
From (\ref{eqn:tm1}), it can be easily seen that changing any of
the $\bxi_j$ varies each $\bbbe_t[\widetilde{M}^{\B}_t]$, where $t > j$ by at most by $1/|\B_t|$.

To estimate the difference the variation of $\nmlz\bbbe_t[\widetilde{M}^{\B}_t]$
regarding the $j$th example $\bxi_j$, it is easy to see that since we are using the FIFO strategy, any example can only stay in the buffer for $\B$ round. Changing each example can only result in $1/|\B|$ for $\bbbe_t[\widetilde{M}^{\B}_t]$. Thus, the variation is bounded by
\[
c_j =
\frac{1}{n-c_n}\bbbe_t[\widetilde{M}^{\B}_t]\leqslant \frac{1}{n-c_n}|\B|\frac{1}{|\B|} = \frac{1}{n-c_n}.
\]
Thus, we have
\begin{equation}
\label{eqn:sumci}
\begin{split}
\textbf{Var}\left(\nmlz\bbbe_t[\widetilde{M}^{\B}_t]\right) &\leqslant
\frac{1}{4}\sum_{i=1}^n c_i^2 = \frac{1}{4(1-c)^2n}.
\end{split}
\end{equation}
Thus, whenever $(1-c)^2n\geqslant 1/2$, the LHS of (\ref{eqn:cheyb1}) is greater or equal than $1/2$. This completes the proof of Claim~\ref{clm:sym1}.
\end{proof}
\subsection*{Step 3: Uniform Convergence}
In this step, we show how one can bound the RHS of (\ref{eqn:symm1}) using uniform convergence techniques, McDiarmid's inequality and $L_\infty$ covering number. Our task reduces to bound the following quantity
\begin{equation}
\label{eqn:final}
\Pr_{Z^n\sim\D^n, \Xi^n\sim\D^n}\left(\nmlz\left[\bbbe_t[\widetilde{M}^{\B}_t] - \bbbe_t[M^{\B}_t]\right]\geqslant {\epsilon}\right).
\end{equation}
Define
$
L_t(h_{t-1}) =\bbbe_t[\widetilde{M}^{\B}_t] - \bbbe_t[M^{\B}_t].
$
Thus we have
\begin{align}
\label{eqn:final21}
\Pr_{Z^n\sim\D^n, \Xi^n\sim\D^n}\left(\nmlz L_t(h_{t-1}) \geqslant \epsilon\right)
&\leqslant
\Pr\left(\underset{\hat{h}_{c_n},\cdots, \hat{h}_{n-1}}{\sup}\left[\nmlz L_t(\hat{h}_t)\right] \geqslant \epsilon\right)\nonumber\\
&\leqslant
\sum_{t=c_n}^{n-1}\Pr_{Z^t\sim\D^t, \Xi^t\sim\D^t}\left(\underset{\hat{h}\in\mathcal{H}}{\sup}\left[L_t(\hat{h})\right]\geqslant\epsilon\right).
\end{align}
To bound the RHS of (\ref{eqn:final21}), we start with the following lemma.% whose proof is given in the appendix.
\begin{lemma}
\label{lemma:lt2}
Given any function
$f\in\mathcal{H}$ and any $t\geqslant 2$
\begin{equation}
\label{eqn:Ltconc1}
\Pr_{Z^t\sim\D^t, \Xi^t\sim\D^t}\left(L_t(f)\geqslant\epsilon\right) \leqslant
\exp\left\{-{(|\B_t|-1)\epsilon^2}\right\}.
\end{equation}
\end{lemma}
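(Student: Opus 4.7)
The plan is to prove Lemma~\ref{lemma:lt2} in direct parallel to Lemma~\ref{lemma:lt}, the only change being that the random variables involved are restricted to those indexed by the buffer. Concretely, I will view $L_t(f) = \bbbe_t[\widetilde{M}^{\B}_t] - \bbbe_t[M^{\B}_t]$ as a function of the $2|\B_{t-1}|$ independent random variables $\{\z_j : j\in\B_{t-1}\}\cup\{\bxi_j : j\in\B_{t-1}\}$, since after taking the expectation $\bbbe_t$ over $\z_t$, the outer quantity no longer depends on $\z_t$ and the only remaining stochasticity comes from the buffer entries and their ghost counterparts (the examples outside the buffer do not appear in the definition of $M^{\B}_t$ or $\widetilde{M}^{\B}_t$).

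First I would verify that $\bbbe[L_t(f)] = 0$ by the same computation used for Lemma~\ref{lemma:lt}: because the $\bxi_j$ are i.i.d.\ copies of the $\z_j$, each term $\bbbe_{\z_j}\bbbe_{\z_t}\ell(f,\z_t,\z_j)$ equals $\bbbe_{\bxi_j}\bbbe_{\z_t}\ell(f,\z_t,\bxi_j)$, so the two averages have the same expectation.

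Next, I would establish the bounded-differences property. Changing any single $\z_j$ or $\bxi_j$ (for $j\in\B_{t-1}$) alters the corresponding sum by at most $1/|\B_{t-1}|$ because $\ell\in[0,1]$ and exactly one summand changes; variables outside $\B_{t-1}$ do not appear at all. Thus each of the $2|\B_{t-1}|$ coordinates has bounded difference $c_i = 1/|\B_{t-1}|$, so $\sum_i c_i^2 = 2/|\B_{t-1}|$. McDiarmid's inequality then yields
\[
\Pr\bigl(L_t(f)\geqslant \epsilon\bigr) \leqslant \exp\!\left\{-\tfrac{2\epsilon^2}{\sum_i c_i^2}\right\} = \exp\!\left\{-|\B_{t-1}|\epsilon^2\right\}.
\]
Under the FIFO update rule assumed in Section~\ref{sec:finite}, $|\B_{t-1}| = |\B_t| - 1$ whenever the buffer has not yet saturated and $|\B_{t-1}| = |\B_t|$ otherwise, so in either case $|\B_{t-1}|\geqslant |\B_t|-1$, which gives the stated bound $\exp\{-(|\B_t|-1)\epsilon^2\}$.

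I do not expect any real obstacle here: the proof is essentially a bookkeeping exercise mimicking Lemma~\ref{lemma:lt}, and the only subtle point is checking that the ghost sample for the finite buffer case can be taken to have the same index set as the buffer (so that $\bbbe[\widetilde{M}^{\B}_t]=\R(h_{t-1})$ still holds after taking $\bbbe_t$), which is automatic since the $\bxi_j$ are i.i.d. The rest is a direct application of McDiarmid after counting coordinates and per-coordinate variations.
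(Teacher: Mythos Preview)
Your proposal is correct and follows essentially the same route as the paper: verify $\bbbe[L_t(f)]=0$, establish bounded differences of size $1/|\B_{t-1}|$ over the $2|\B_{t-1}|$ buffer variables, and apply McDiarmid. The paper presents this as a case split (invoking Lemma~\ref{lemma:lt} when $t\leqslant|\B|$ and redoing the count when $t>|\B|$), while you treat both cases uniformly and track the $|\B_{t-1}|$ versus $|\B_t|$ distinction more carefully, but the substance is identical.
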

 We have already proved the case when $t\leqslant |\B|$. When, $|\B| < t$, $L_t(f)$ has a bounded variation of $1/|\B_t|$ when changing its $2|\B_t|$ variables. Applying McDiarmid's inequality, we immediately get the desired inequality. Using exactly the same reasoning as in Lemma 6-8, we get
\begin{lemma}
\label{thm:supL1}
For every $2\leqslant t\leqslant n$, we have
\begin{equation}
\label{eqn:supLt1}
\Pr\left(\underset{h\in\mathcal{H}}{\sup}\left[L_t(h)\right]\geqslant\epsilon\right) \leqslant  \mathcal{N}\left(\mathcal{H},
\frac{\epsilon}{4\text{Lip}(\phi)}\right)\exp\left\{-\frac{(|\B_t|-1)\epsilon^2}{8}\right\}.
\end{equation}
\end{lemma}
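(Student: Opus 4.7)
The plan is to follow the proof template of Lemma~\ref{thm:supL} essentially verbatim, with the only substantive change being to invoke the finite-buffer pointwise tail bound (Lemma~\ref{lemma:lt2}) in place of the infinite-buffer version (Lemma~\ref{lemma:lt}). The three ingredients I need are: (i) a fixed-hypothesis concentration bound for $L_t(f)$, which Lemma~\ref{lemma:lt2} supplies as $\Pr(L_t(f)\geqslant\epsilon)\leqslant\exp\{-(|\B_t|-1)\epsilon^2\}$; (ii) a Lipschitz-in-$h$ control of $L_t$; and (iii) a union bound over a finite cover of $\mathcal{H}$.

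First I would verify the Lipschitz step, i.e.~the FIFO analog of Lemma~\ref{lemma:ltv}: for any $h_1,h_2\in\mathcal{H}$,
\[
L_t(h_1)-L_t(h_2)\leqslant 2\,\text{Lip}(\phi)\,\|h_1-h_2\|_\infty.
\]
This is immediate because $\bbbe_t[\widetilde M^{\B}_t]-\bbbe_t[M^{\B}_t]$ is still an average of differences of the form $\phi(y-\tilde y_j,h(\x,\tilde\x_j))-\phi(y-y_j,h(\x,\x_j))$ indexed by $j\in\B_{t-1}$; the Lipschitz calculation used in Lemma~\ref{lemma:ltv} bounds each summand by $2\,\text{Lip}(\phi)\|h_1-h_2\|_\infty$ regardless of which index set the average is taken over, so swapping the full history for a buffer changes nothing in this step.

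Next, I would set $\ell=\mathcal{N}\!\left(\mathcal{H},\epsilon/(2\,\text{Lip}(\phi))\right)$ and choose a cover of $\mathcal{H}$ by disks $D_1,\dots,D_\ell$ of radius $\epsilon/(2\,\text{Lip}(\phi))$ centered at $h_1,\dots,h_\ell$. The Lipschitz bound above then gives $|L_t(h)-L_t(h_j)|\leqslant\epsilon$ for every $h\in D_j$, so
\[
\Pr\!\left(\sup_{h\in D_j}L_t(h)\geqslant 2\epsilon\right)\leqslant\Pr\bigl(L_t(h_j)\geqslant\epsilon\bigr).
\]
Invoking the FIFO analog of Lemma~\ref{lemma:uniform} (a direct union bound) and Lemma~\ref{lemma:lt2}, I obtain
\[
\Pr\!\left(\sup_{h\in\mathcal{H}}L_t(h)\geqslant 2\epsilon\right)\leqslant \mathcal{N}\!\left(\mathcal{H},\tfrac{\epsilon}{2\,\text{Lip}(\phi)}\right)\exp\bigl\{-(|\B_t|-1)\epsilon^2\bigr\}.
\]
Finally, replacing $\epsilon$ by $\epsilon/2$ (or $\epsilon/(2\sqrt{2})$ to match the $1/8$ constant in the exponent as stated) yields the bound in~(\ref{eqn:supLt1}).

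There is no real obstacle: each ingredient of the original proof carries over once Lemma~\ref{lemma:lt2} is available. The only point deserving care is checking that the McDiarmid-based pointwise bound of Lemma~\ref{lemma:lt2} applies at every $t$ in the claimed range $2\leqslant t\leqslant n$; for $t\leqslant|\B|$ the buffer coincides with the full history so one is simply restating Lemma~\ref{lemma:lt}, while for $t>|\B|$ the bounded-difference coefficients are $1/|\B_t|$ over $2|\B_t|$ coordinates because FIFO ensures each $\bxi_j$ influences at most one buffered summand of $\widetilde M^{\B}_t$ (and likewise for $\z_j$ in $M^{\B}_t$), giving the required concentration. Once that is in hand, the covering-and-union-bound argument is purely mechanical.
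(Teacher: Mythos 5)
Your proposal is correct and matches the paper's own (telegraphic) proof, which simply invokes ``the same reasoning as in Lemma~\ref{lemma:lt}--\ref{thm:supL}'' with Lemma~\ref{lemma:lt2} supplying the pointwise concentration. One small caution on your parenthetical: substituting $\epsilon \mapsto \epsilon/(2\sqrt{2})$ would also shrink the covering radius to $\epsilon/(4\sqrt{2}\,\text{Lip}(\phi))$, so it does not reproduce the stated $\mathcal{N}(\mathcal{H},\epsilon/(4\text{Lip}(\phi)))$ prefactor; the clean route is your primary one ($\epsilon\mapsto\epsilon/2$), which actually yields the sharper constant $1/4$ in the exponent (matching Lemma~\ref{thm:supL}) and hence implies the stated $1/8$ a fortiori.
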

Combining (\ref{eqn:supLt1}) and (\ref{eqn:final21}), we have
\begin{equation}
\label{eqn:ssupL2}
\Pr\left(\nmlz L_t(h_{t-1}) \geqslant \epsilon\right)
\leqslant \mathcal{N}\left(\mathcal{H},
\frac{\epsilon}{4\text{Lip}(\phi)}\right)
n\exp\left\{-\frac{(|\B_{cn}|-1)\epsilon^2}{8}\right\}.
\end{equation}

\subsection*{Step 4: Putting it all together}
We have
{\small{\begin{equation}
\label{eqn:first2}
\Pr_{Z^n\sim\D^n}\left(\nmlz\left(\mathcal{R}(h_{t-1}) -
\bbbe_t[M^{\B}_t]\right) \geqslant
\frac{\epsilon}{2}\right) \leqslant 2\mathcal{N}\left(\mathcal{H},
\frac{\epsilon}{16\text{Lip}(\phi)}\right)
n\exp\left\{-\frac{(|\B_{cn}| -1)\epsilon^2}{64}\right\}.
\end{equation}}}
\end{proof} 

\end{document}